\documentclass{article}

\usepackage[preprint]{neurips_2026}

\usepackage[utf8]{inputenc} 
\usepackage[T1]{fontenc}    
\usepackage{hyperref}       
\usepackage{url}            
\usepackage{booktabs}       
\usepackage{amsfonts}       
\usepackage{nicefrac}       
\usepackage{microtype}      
\usepackage{xcolor}         

\usepackage{bbm}
\usepackage{amssymb}
\usepackage{mathtools}
\usepackage{amsthm}
\usepackage{todonotes}
\usepackage{subfig}

\usepackage{xspace}
\newcommand{\eg}{\textit{e.g.\@}\xspace}
\newcommand{\ie}{\textit{i.e.\@}\xspace}

\usepackage[bb=boondox]{mathalfa}
\usepackage{enumitem}
\usepackage{listings}

\hypersetup{
  colorlinks=true,
  breaklinks=true,
  urlcolor=blue, 
  linkcolor=blue, 
  citecolor=blue,
}

\usepackage[capitalize,noabbrev]{cleveref}

\usepackage{aliascnt}
\theoremstyle{plain}
\newtheorem{theorem}{Theorem}[section]
\newaliascnt{lemma}{theorem}
\newtheorem{lemma}[lemma]{Lemma}
\aliascntresetthe{lemma}
\newaliascnt{proposition}{theorem}

\aliascntresetthe{proposition}
\newaliascnt{corollary}{theorem}

\aliascntresetthe{corollary}
\theoremstyle{definition}
\newaliascnt{definition}{theorem}

\aliascntresetthe{definition}
\newaliascnt{assumption}{theorem}
\newtheorem{assumption}[assumption]{Assumption}
\aliascntresetthe{assumption}
\theoremstyle{remark}
\newaliascnt{remark}{theorem}

\aliascntresetthe{remark}

\crefname{theorem}{Theorem}{Theorems}
\Crefname{theorem}{Theorem}{Theorems}
\crefname{proposition}{Proposition}{Propositions}
\Crefname{proposition}{Proposition}{Propositions}
\crefname{lemma}{Lemma}{Lemmas}
\Crefname{lemma}{Lemma}{Lemmas}
\crefname{corollary}{Corollary}{Corollaries}
\Crefname{corollary}{Corollary}{Corollaries}
\crefname{definition}{Definition}{Definitions}
\Crefname{definition}{Definition}{Definitions}
\crefname{assumption}{Assumption}{Assumptions}
\Crefname{assumption}{Assumption}{Assumptions}
\crefname{remark}{Remark}{Remarks}
\Crefname{remark}{Remark}{Remarks}

\crefname{section}{Sec.}{Secs.}
\crefname{algorithm}{Alg.}{Algs.}
\crefname{appendix}{App.}{Apps.}
\crefname{definition}{Def.}{Defs.}
\crefname{table}{Table}{Tables}
\crefname{equation}{Eq.}{Eqs.}

\definecolor{color0}{HTML}{F6F6F4} 
\definecolor{color1}{HTML}{C27A7A} 
\definecolor{color2}{HTML}{B3DE69} 
\definecolor{color3}{HTML}{FC8D59} 
\definecolor{color4}{HTML}{FFD92F} 
\definecolor{color5}{HTML}{8C3B3B} 
\definecolor{color6}{HTML}{EEEEEE} 
\definecolor{color7}{HTML}{96B1C9} 
\definecolor{color8}{HTML}{4E6E8E} 
\definecolor{blueC}{HTML}{7896B3}
\definecolor{blueD}{HTML}{86A3BE}
\definecolor{blueE}{HTML}{96B1C9}
\definecolor{blueF}{HTML}{A4BDD1}
\usepackage{tikz}
\usetikzlibrary{shapes.geometric, positioning, calc}
\usepackage{tikz-3dplot}

\newcommand{\stopgrad}{\tikz[baseline]{
  \draw[thick,black] (-3pt,-1pt) -- (3pt,5pt);
  \draw[thick,black] (-3pt,-4pt) -- (3pt,2pt);}}

\newlength{\nodedist}
\newlength{\nodedistv}

\definecolor{cube}{HTML}{f7f75e}

\title{Temporal Consistency Improves Generalization in Contextual Offline Meta Reinforcement Learning}

\author{%
  Mohammadreza Nakhaei \\
  Aalto University\\
  \texttt{mohammadreza.nakhaei@aalto.fi} \\
  \And
  Aidan Scannell \\
  Toyota Research Institute \thanks{Work done whilst at University of Edinburgh.} \\
  \AND
  Kevin Sebastian Luck \\
  Vrije Universiteit Amsterdam \\
  \And
  Joni Pajarinen \\
  Aalto University \\
}

\begin{document}

\maketitle

\begin{abstract}
    Offline meta-reinforcement learning seeks to learn a policy that generalizes to new related tasks online. 
    Context-based methods infer a task representation from transition histories, yet learning an effective task representation without supervision remains challenging. 
    Existing methods relying on contrastive learning learn discriminative task representations, but fail to identify task-specific dynamics, while relying on reconstruction can be insufficient to model long-horizon dependencies, limiting generalization to new tasks.    
    We investigate the impact of temporal consistency in latent space on task representation learning, showing that enforcing multi-step predictions in latent space encourages task representations that are able to capture task-dependent dynamics while preventing representation collapse.    
    We provide theoretical analysis characterizing sources of error in value estimation and show through extensive experiments on MuJoCo, Contextual DeepMind Control, and MetaWorld benchmarks that temporal consistency significantly improves both zero-shot and few-shot generalization. 
\end{abstract}

\section{Introduction}
One of the core challenges of reinforcement learning (RL) is \emph{generalization}, where a policy trained on one task typically performs poorly when applied to a related task.
Meta-reinforcement learning (meta-RL) addresses this issue by training on a distribution of tasks and learning a policy that can adapt quickly to new tasks \citep{maml, varibad, metarl_tutorial}.
However, most meta-RL methods require online interaction with numerous training tasks, which can be impractical in real-world scenarios.
Offline meta-RL (OMRL) instead leverages access to offline datasets of related tasks and aims to learn a generalizable policy with limited additional environment interaction at test time. \looseness-1

A common approach in OMRL is \emph{context encoding} \citep{focal, csro, gentle, unicorn, ertrl, idac, scrutinize}.
In these methods, a context encoder maps a history of transitions (the context) to a latent vector termed the \emph{task representation}.
Policy and value functions are conditioned on this task representation, which serves as an implicit task identifier without requiring explicit task labels or knowledge of the underlying task variation factors.
As a result, the quality of the task representation plays a central role in generalization to unseen tasks.

Many existing methods rely solely on contrastive learning to learn the task representation \citep{focal, corro, dora}.
While contrastive learning encourages task discrimination, it does not explicitly enforce predictive structure over time.
As a result, the context encoder may fail to encode the underlying task variation factors, reflected in the reward function and transition dynamics, into the task representation, thereby limiting generalization.
UNICORN \citep{unicorn} partially addresses this issue by including a one-step reconstruction objective as well, but one-step predictions might be insufficient to capture long-horizon, task-dependent dynamics.

Multi-step temporal consistency in latent space \citep{tcrl} forms a robust self-supervised learning signal for representation learning. 
In contrast to approaches that learn dynamics directly in observation or state space, world models project observations into a latent state representation and model dynamics within this latent manifold. 
They have been successfully used for decision-time planning and policy learning via imagination in model-based RL \citep{planet, muzero, tdmpc, hansen2024tdmpc2, dcmpc} and for representation learning in model-free RL \citep{tcrl, td7, general_modelfree}. 
However, directly applying latent dynamics in the OMRL setting is challenging, as the model must infer task-dependent variations without access to explicit task labels. 

\paragraph{Contributions:} %
In this work, we show that enforcing temporal consistency through self-predictive representations by jointly training the context encoder and world model yields better task representations than existing methods relying on contrastive learning and reconstruction, aka, one-step prediction. 
We study the quality of the learned context encoder through the lens of disentanglement and representation metrics, including feature rank, matrix rank, and dormant neuron ratio. 

Our main contributions are:
\begin{itemize}
    \item[\textbf{C1}] \textbf{Temporal consistency for task inference.}
    We show that enforcing latent temporal consistency during context encoding yields task representations that capture task variation factors more effectively than reconstruction-based objectives based on disentanglement scores.
    \item[\textbf{C2}] \textbf{Theoretical analysis.} 
    We formally characterize the sources of error in value estimation and motivate temporal consistency on the latent space for task representation learning. 
    \item[\textbf{C3}] \textbf{Extensive empirical evaluation.}
    Through experiments on MuJoCo, Contextual DeepMind Control, and Meta-World benchmarks, we analyze the task representation quality and show that incorporating temporal consistency into the context encoder training objective significantly enhances few-shot and zero-shot generalization.
\end{itemize}

\section{Background}
\paragraph{Context-based Offline Meta-RL}
In offline meta-RL, we consider a set of training tasks, each modeled as a Markov Decision Process (MDP)
$\mathcal{M}_i = \langle \mathcal{S}, \mathcal{A}, R_i, P_i, \gamma, \rho_0 \rangle$,
where all tasks share the same state space $\mathcal{S}$, action space $\mathcal{A}$, discount factor $\gamma \in [0,1]$, and initial state distribution $\rho_0$, but differ in their reward functions $R_i : \mathcal{S} \times \mathcal{A} \to \mathbb{R}$ and transition dynamics $P_i : \mathcal{S} \times \mathcal{A} \to \mathcal{S}$.
Each task $\mathcal{M}_i$ is associated with an offline dataset $\mathcal{D}_i$.

The goal of offline meta-RL is to learn a meta-policy $\pi$ that generalizes to unseen tasks, \ie, maximizes the expected return over a distribution of test tasks:
\begin{align}
\label{eq:objective}
J(\pi) = \mathbb{E}_{\mathcal{M}_i \sim p_{\text{test}}(\mathcal{M})}
\left[\mathbb{E}_{\substack{\mathbf{s}_0 \sim \rho_0 \\ \mathbf{a}_t\sim \pi \\ \mathbf{s}_{t+1} \sim P_i(\mathbf{s}_t, \mathbf{a}_t)}}\left[\sum_{t=0}^T \gamma^t R_i(\mathbf{s}_t, \mathbf{a}_t)\right]\right].
 \end{align}
Context-based methods address task generalization by learning a \emph{context encoder} that infers the underlying task from a small set of transitions.
Specifically, a context encoder $E_{\theta}: \mathcal{S} \times \mathcal{A} \times \mathbb{R} \times \mathcal{S} \rightarrow \mathcal{Z}$ maps transitions $(\mathbf{s}_j, \mathbf{a}_j, r_j, \mathbf{s}_j')$ to a task representation $\mathbf{z}$.
The inferred task representation is then used to condition the policy $\pi(\mathbf{a}_t \mid \mathbf{s}_t, \mathbf{z})$, the value function $Q(\mathbf{s}_t, \mathbf{a}_t, \mathbf{z})$, or a learned dynamics model, enabling adaptation to different tasks without explicit task labels.

\paragraph{Self-predictive RL}
A state encoder $F_\phi : \mathcal{S} \rightarrow \mathcal{X}$ maps states $\mathbf{s} \in \mathcal{S}$ to latent states $\mathbf{x} \in \mathcal{X}$, also referred to as abstractions. These latent states are subsequently used by downstream components such as the policy, value function, or world model. Different methods train the encoder under different objectives and architectural couplings \citep{schwarzer2021data,bridging,iqrl}.  

Self-supervised RL methods exploit (auxiliary) learning signals such as latent temporal consistency and latent reward prediction to train the encoder without relying on state reconstruction. Such an encoder should fulfill the following condition for learning optimal behavior \citep{bridging}:

\begin{align*}
\text{RP:} \hspace{0.4em} 
\exists\, \hat{R}: \mathcal{X} \times \mathcal{A} \rightarrow \mathbb{R}
\hspace{0.4em} \text{s.t.} \hspace{0.4em} 
\mathbb{E}[r_t \mid \mathbf{s}_t, \mathbf{a}_t] = \hat{R}(\mathbf{x}_t, \mathbf{a}_t), \\
\text{ZP:} \hspace{0.4em} 
\exists\, \hat{P}: \mathcal{X} \times \mathcal{A} \rightarrow \Delta \mathcal{X}
\hspace{0.4em} \text{s.t.} \hspace{0.4em} 
\mathbb{E}[\mathbf{x}_{t+1} \mid \mathbf{s}_t, \mathbf{a}_t] = \hat{P}(\mathbf{x}_t, \mathbf{a}_t),
\end{align*}
where $\mathbf{x}_t = F_\phi(\mathbf{s}_t)$.

The \textbf{RP} condition ensures that the latent state $\mathbf{x}$ contains sufficient information to predict the reward function, while the \textbf{ZP} condition enforces temporal consistency in the latent space $\mathcal{X}$. However, satisfying the \textbf{ZP} condition alone admits a trivial solution in which the encoder maps all states to a constant vector, resulting in complete representation collapse \citep{iqrl}. When both \textbf{RP} and \textbf{ZP} conditions are satisfied, the encoder retains the necessary information for predicting returns.

\begin{figure}[t]
\small
\centering

\begin{minipage}[t]{0.15\textwidth}
  \centering\vspace{0pt}
    \begin{tikzpicture}[trim left=0pt, trim right=0pt,
        node distance=1.1\nodedistv,
        line width=1pt,
        mynode/.style={fill=black!5,draw=black!80,rounded corners=1pt,
                       font=\scriptsize,inner sep=3pt,align=center,line width=.6pt},
        arr/.style={->,line width=1pt},
        cyl/.style={mynode, shape=cylinder, aspect=0.25,
                    minimum height=2.0em, minimum width=4.8em},
        ctxenc/.style={mynode, fill=color8, trapezium,
                       trapezium left angle=-70, trapezium right angle=-70,
                       minimum width=5.2em, minimum height=1.6em},
        tuple/.style={mynode, fill=black!10, rounded corners=2pt},
        znode/.style={mynode, fill=color7, minimum width=3.2em, minimum height=1.6em}
    ]
    

    \node[cyl] (D1) at (-1.0, 0) {$\mathcal{D}^{(1)}$};
    \node[cyl] (D2) at (0.0,0) {$\mathcal{D}^{(2)}$};
    \node[cyl] (D3) at (1.0,0) {$\mathcal{D}^{(N)}$};

    \node[font=\scriptsize, anchor=south, yshift=0\nodedist, xshift=-0.33\nodedist] at (D3.north) {Task datasets};
    
    \node[below=0.5cm of D2] (tau)
    {$\{\mathbf{s}_j,\mathbf{a}_j,r_j,\mathbf{s}_{j+1}\}$};

    \draw[arr] (D2) -- (tau);
    
    \node[ctxenc, below=0.5cm of tau] (g)
    {\textcolor{white}{Context encoder $E_{\theta}$}};
    
    \draw[arr] (tau.south) -- (g.north);
    
    \node[znode, below=0.5cm of g] (z)
    {\textcolor{white}{$\mathbf{z}$}};
    \draw[arr] (g.south) -- (z.north);
    \end{tikzpicture}%
\end{minipage}%
\hspace{0.04\textwidth}%
\begin{minipage}[t]{0.6\textwidth}
  \centering\vspace{0pt}
  \resizebox{\linewidth}{!}{%
    \begin{tikzpicture}[line width=1pt]
  \pgfdeclarelayer{background}
  \pgfsetlayers{background,main}

  \newcommand{\sub}[0]{\scalebox{.8}{\ensuremath t}}
  \newcommand{\subs}[1]{\scalebox{.8}{\ensuremath t{+}#1}}

  \tikzstyle{mynode}=[fill=black!5,draw=black!80,rounded corners=1pt,font=\scriptsize,inner sep=0,align=center,line width=.6pt]
  \tikzstyle{trap}=[mynode,fill=color5,trapezium,text width=4em, minimum height=1.4em,
                    trapezium left angle=-70, trapezium right angle=-70,inner sep=2pt]
  \tikzstyle{blob}=[mynode,circle,minimum width=2.2em,minimum height=2.2em,align=center]
  \tikzstyle{polval}=[mynode,minimum width=6em,minimum height=2em,align=center,text=white,fill=black!60,inner sep=2pt]
  \tikzstyle{arr}=[line width=1pt,black,->]
  \tikzstyle{darr}=[line width=1pt,black,<->,densely dotted]
  \tikzstyle{dlc}=[align=center,font=\scriptsize,text width=5em]
  \tikzstyle{main}=[fill=none,draw=none,rectangle,anchor=south,minimum height=5em,minimum width=4em,fill=color0]
  \tikzstyle{dyn}=[fill=none,draw=none,rectangle,anchor=south,minimum height=2em,minimum width=4em,fill=color6]
  \draw[fill=color0,draw=color0!60!black,rounded corners=4pt] (-.5\nodedist,-1\nodedistv) rectangle (3.7\nodedist,1.5\nodedistv);

  \node[blob,main] (z0) at (0\nodedist,-1.5) {Latent state\\$\mathbf{x}_{0}\vphantom{\hat{\mathbf{x}}_{{1}}}$};
  \node[blob,dyn] (z1) at (1\nodedist,-1.) {$D_{\phi}({\mathbf{x}}_{0},\mathbf{a}_{0}, \mathbf{z})$};
  \node[blob,main] (z2) at (2\nodedist,-1.5) {Latent state\\$\hat{\mathbf{x}}_{{1}}$};
  \node[blob,dyn] (z3) at (3\nodedist,-1.0) {$D_{\phi}(\hat{\mathbf{x}}_{{1}},\mathbf{a}_{{1}}, \mathbf{z})$};

  \node[trap] (e0) at (0\nodedist,\nodedistv) {\textcolor{white}{Encoder $F_\phi$}};
  \node[trap] (e1) at (1\nodedist,\nodedistv) {\textcolor{white}{Encoder $F_\phi$}};
  \node[trap] (e2) at (3\nodedist,\nodedistv) {\textcolor{white}{Encoder $F_\phi$}};

  \node[blob,fill=none,draw=none,text width=4em,rectangle] (zh1) at ($(z1)!0.5!(e1)$) {Target latent state $\mathbf{x_1}$};
  \node[blob,fill=none,draw=none,text width=4em,rectangle] (zh2) at ($(z3)!0.5!(e2)$) {Target latent state $\mathbf{x_2}$};

  \coordinate (m0) at ($(z0)!.5!(z1)$);
  \node[blob,fill=color1] (a0) at ($(m0)!(zh1)!(m0)$) {\textcolor{white}{$\mathbf{a}_{0}$}};
  \coordinate (m1) at ($(z2)!.5!(z3)$);
  \node[blob,fill=color1] (a1) at ($(m1)!(zh2)!(m1)$) {\textcolor{white}{$\mathbf{a}_{{1}}$}};

  \node[blob,fill=color1] (r0) at ($(a0) + (0,-3.5)$) {\textcolor{white}{$r_{0}$}};
  \node[blob,fill=color1] (r1) at ($(a1) + (0,-3.5)$) {\textcolor{white}{$r_{{1}}$}};

  \node[mynode,minimum width=5em,minimum height=4em,outer sep=0,node distance=6em,path picture={\node at (path picture bounding box.center){\includegraphics[height=5.5em]{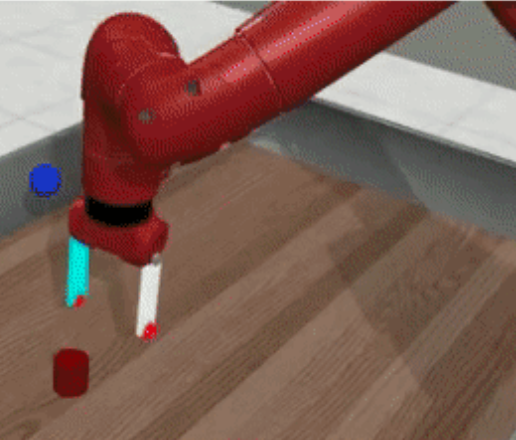}};}] (i0) at (0\nodedist,1.4\nodedistv) {};

  \node[mynode,minimum width=5em,minimum height=4em,outer sep=0,node distance=6em,path picture={\node at (path picture bounding box.center){\includegraphics[height=5.5em]{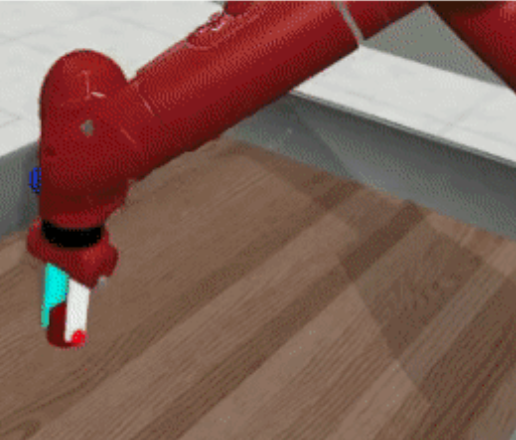}};}] (i1) at (1\nodedist,1.4\nodedistv) {};

  \node[mynode,minimum width=5em,minimum height=4em,outer sep=0,node distance=6em,path picture={\node at (path picture bounding box.center){\includegraphics[height=5.5em]{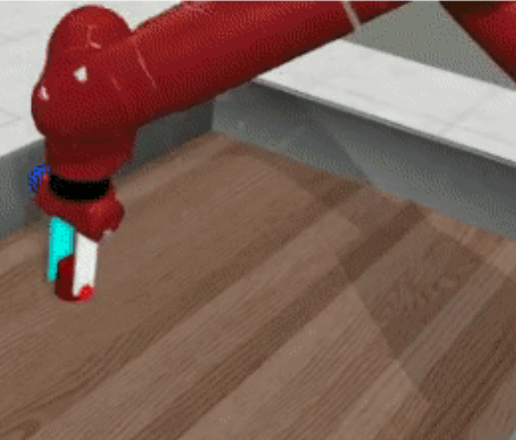}};}] (i2) at (3\nodedist,1.4\nodedistv) {};

  \node[anchor=north,yshift=-1.6em] (o0) at (i0.north) {\color{white}$\mathbf{s}_{0}$};
  \node[anchor=north,yshift=-1.6em] (o1) at (i1.north) {\color{white}$\mathbf{s}_{{1}}$};
  \node[anchor=north,yshift=-1.6em] (o2) at (i2.north) {\color{white}$\mathbf{s}_{{2}}$};

  \draw[arr] (a0) to[bend right=20] (r0);
  \draw[arr] (a1) to[bend right=20] (r1);
  \draw[arr] (z0) to[bend right=40] (r0);
  \draw[arr] (z2) to[bend right=40] (r1);
  \node[fill=color0,minimum width=1cm,minimum height=.6cm,opacity=.9] at ($(z0)!.5!(z1) + (0,-4pt)$) {};
  \node[fill=color0,minimum width=1cm,minimum height=.6cm,opacity=.9] at ($(z2)!.5!(z3) + (0,-4pt)$) {};
  \draw[arr] (i0) -- (e0);
  \draw[arr] (i1) -- (e1);
  \draw[arr] (i2) -- (e2);
  \draw[arr] (z0) -- node[below,font=\scriptsize,xshift=0em] {Dynamics} ++(2,0);
  \draw[arr] (z1) -- node[below,text width=5em,font=\scriptsize,xshift=0em,align=center] {} ++(2,0);
  \draw[arr] (z2) -- node[below,font=\scriptsize,xshift=0em] {Dynamics} ++(2,0);
  \draw[arr] (e0) -- node[right,font=\scriptsize,rotate=90,yshift=6pt] {} (z0);
  \draw[arr] (e1) -- (zh1);
  \draw[arr] (e2) -- (zh2);
  \draw[arr] (a0) -- (z1);
  \draw[arr] (a1) -- (z3);
  \draw[darr] (zh1) -- node[right,font=\scriptsize,align=left]{Loss $\| 
  \|^2_2$} (z1);
  \draw[darr] (zh2) -- node[right,font=\scriptsize,align=left]{Loss $\| 
  \|^2_2$} (z3);

  \node[circle,fill=color0,inner sep=0, scale=.7] at ($(e0)!.5!(z0)$) {};
  \node[circle,fill=color0,inner sep=0, scale=.7] (c1) at ($(e1)!.5!(zh1)$) {};
  \node[circle,fill=color0,inner sep=0, scale=.7] (c2) at ($(e2)!.5!(zh2)$) {};
  \node at ($(e1)!.5!(c1)$) {\stopgrad};
  \node at ($(e2)!.5!(c2)$) {\stopgrad};

    \end{tikzpicture}%
  }
\end{minipage}%
\hspace{0.02\textwidth}%
\begin{minipage}[t]{0.15\textwidth}
  \centering\vspace{0pt}
  \resizebox{1.2\linewidth}{!}{%
    \begin{tikzpicture}[line width=1pt]
          \tikzstyle{mynode}=[fill=black!5,draw=black!80,rounded corners=1pt,font=\scriptsize,inner sep=2pt,align=center,line width=.6pt]
          \tikzstyle{trap}=[mynode,fill=color5,trapezium,text width=4.6em, minimum height=1.4em,
                            trapezium left angle=-70, trapezium right angle=-70,inner sep=2pt]
          \tikzstyle{arr}=[line width=1pt,black,->]
          \tikzstyle{polval}=[mynode,minimum width=6.2em,minimum height=2.2em,align=center,text=white,fill=black!60,inner sep=2pt]
          \tikzstyle{dlc}=[align=center,font=\scriptsize,text width=5.5em]
        
          \node[mynode,minimum width=5em,minimum height=4em,outer sep=0,
                path picture={\node at (path picture bounding box.center){\includegraphics[height=5.5em]{figs/mw-1}};}] (s) at (0,1.4\nodedistv) {};
          \node[anchor=north,yshift=-1.6em] at (s.north) {\color{white}$\mathbf{s}_{t}$};
        
          \node[trap] (e) at (0,\nodedistv) {\textcolor{white}{Encoder $F_\phi$}};
        
          \node[mynode,fill=color0,minimum width=7.2em,minimum height=2.6em] (c) at (0,0) { Latent state\\ $\mathbf{x}_{t}$};
        
          \draw[arr] (s) -- (e);
          \draw[arr] (e) -- node[right,font=\scriptsize,rotate=90,yshift=6pt] {} (c);
        
          \node[circle,fill=color0,inner sep=0, scale=.7] at ($(e)!.5!(c)$) {};
        
          \node[polval] (pi) at (-0.4\nodedist,-0.85\nodedistv) {$\pi_{\eta}(\mathbf{x}_{t},\mathbf{z})$};
          \node[polval] (Q)  at ( 0.4\nodedist,-0.85\nodedistv) {$Q_{\psi}(\mathbf{x}_{t},\mathbf{a}_{t},\mathbf{z})$};
        
          \coordinate (split) at ($(c.south)+(0,-0.25\nodedistv)$);
          \draw[arr] (c.south) -- (split);
          \draw[arr] (split) -| (pi.north);
          \draw[arr] (split) -| (Q.north);
        
        
    \end{tikzpicture}%
  }%
\end{minipage}
\caption{
    \textbf{Contextual World Model.}
    \textbf{Left:} A context encoder $E_\theta$ maps transitions from each task to a task representation $\mathbf{z}$, which serves as an implicit task identifier.
    \textbf{Middle:} An observation encoder maps observations $\mathbf{s}_t$ to latent vectors $\mathbf{x}_t$.
    A task-conditioned latent dynamics model {$D_\phi$} predicts future latent states given the current latent state, action, and task representation.
    The world model is trained using a classification loss based on temporal consistency.
    \textbf{Right:} An offline policy is trained using the latent states $\mathbf{x}_t$ and task representation $\mathbf{z}$.}
\label{fig:overview}
\end{figure}

\section{Joint Training of Context Encoder and World Model}
To investigate temporal consistency for task representation learning in the OMRL setting, we condition a world model on the task representation (computed with the context encoder) and jointly train the world model and context encoder with self-supervision. 
The policy, conditioned on latent state and task representation, is optimized with offline RL. 
\cref{fig:overview} provides a high-level description of the joint training. 
\paragraph{Components} The world model and offline RL consist of the following components:
\begin{subequations}
\begin{align}
&\text{Context encoder: } 
& \mathbf{z} &= \mathbb{E}[E_\theta(\mathbf{s}_t, \mathbf{a}_t, r_t, \mathbf{s}_{t+1})] \label{eq:context_encoder} \\
&\text{Obs. encoder: } 
& \mathbf{x}_t &= F_\phi(\mathbf{s}_t) \label{eq:obs_encoder} \\
&\text{Latent dynamics: } 
& \hat{\mathbf{x}}_{t+1} &= D_\phi(\mathbf{x}_t, \mathbf{a}_t, \mathbf{z}) \label{eq:dynamics} \\
&\text{Reward model: } 
& \hat{r}_t &= R_\phi(\mathbf{x}_t, \mathbf{a}_t, \mathbf{z}) \label{eq:reward} \\
&\text{Q-function: } 
& q &= Q_\psi(\mathbf{x}_t, \mathbf{a}_t, \mathbf{z}) \label{eq:critic} \\
&\text{Value function: } 
& v &= V_\omega(\mathbf{x}_t, \mathbf{z}) \label{eq:value} \\
&\text{Policy: } 
& \mathbf{a}_t &\sim \pi_\eta(\mathbf{a}_t \mid \mathbf{x}_t, \mathbf{z}) \label{eq:actor}
\end{align}
\end{subequations}
The world model consists of the observation encoder $F_\phi$, latent dynamics $D_\phi$, and reward model $R_\phi$.
We denote the encoding as $\Phi(\mathbf{s}) \coloneqq F_{\phi}(\mathbf{s})$.
The policy $\pi_\eta$, value function $V_\omega$, and Q-function $Q_\psi$ are conditioned on both the latent state $\mathbf{x}_t$ and the task representation $\mathbf{z}$.
Unlike prior work, we jointly train the world model and the context encoder, allowing task inference to benefit directly from temporal consistency.
Importantly, we avoid observation reconstruction and benefit from the powerful representation learning provided by a self-predictive loss.

\subsection{Contextual World Model}
In context-based  OMRL, a context encoder $E_{\theta}$ infers the task from a small set of transitions.
The observation encoder $F_{\phi}$ is then shared across all tasks, while the latent dynamics $D_\phi$ and reward model $R_\phi$ are conditioned on the inferred task representation $\mathbf{z}$.
This reflects the fact that tasks differ in dynamics and reward functions.

\paragraph{Task representation.}
During training, we sample a meta-batch of tasks.
For each task $i$ with dataset $\mathcal{D}_i$, the task representation is given by
\begin{equation}
\mathbf{z}^i = \mathbb{E}_{(\mathbf{s}_t, \mathbf{a}_t, r_t, \mathbf{s}_{t+1}) \sim \mathcal{D}_i}
\left[E_\theta(\mathbf{s}_t, \mathbf{a}_t, r_t, \mathbf{s}_{t+1})\right].
\label{eq:expectation_z}
\end{equation}

\paragraph{World model.}
Observations from all tasks are mapped to continuous latent vectors using \cref{eq:obs_encoder}.

The latent dynamics model predicts next latent state $\hat{\mathbf{x}}_{t+1}$ conditioned on the current latent state $\mathbf{x}_t$, action $\mathbf{a}_t$, and task representation $\mathbf{z}$. 
Rewards are also predicted by a task-conditioned reward model.

We train the context encoder jointly with the world model -- observation encoder, latent dynamics, and reward function -- using a multi-step temporal consistency (TC) objective, given by
\begin{equation}
\mathcal{L}_{\text{TC}}(\theta, \phi) =
\sum_{h=0}^{H-1} \gamma^h \Big(
\|(D_\phi(\hat{\mathbf{x}}_{t+h}, \mathbf{a}_{t+h}, \mathbf{z})- \mathbf{x}_{t+h+1}) \|_2^2 
+ \| R_\phi(\hat{\mathbf{x}}_{t+h}, \mathbf{a}_{t+h}, \mathbf{z}) - r_{t+h} \|_2^2
\Big),
\label{eq:worldmodel_objective}
\end{equation}
where $\hat{\mathbf{x}}_t = F_\phi(\mathbf{s}_t)$ is the initial latent state,
$\hat{\mathbf{x}}_{t+h+1}$ is computed from the latent dynamics model,
and $\mathbf{x}_{t+h+1} = \mathrm{sg}(F_{\bar{\phi}}(\mathbf{s}_{t+h+1}))$ is the target latent state computed using an exponential moving average (EMA) encoder.
Here, $H$ is the prediction horizon, $\gamma$ is the discount factor, and $\bar{\phi}$ denotes EMA parameters. Additionally, contrastive learning, e.g., the InfoNCE loss (\cref{eq:infonce} in  \cref{sec:contrastive}), enhances the discriminability of task representations. We investigate the effects of training the context encoder using temporal consistency, contrastive learning, and their combination on task representation learning and downstream generalization.

\subsection{Meta Policy Optimization} \label{sec:offline_rl_IQL}
After learning the world model, we train a policy using offline reinforcement learning.
We condition the policy and value functions on the latent state and task representation.
To address distribution shift in offline RL, we adopt Implicit Q-Learning (IQL \citep{iql}).
In \cref{sec:offline_rl}, we show evidence that this investigation is compatible with other offline RL methods. 

IQL avoids out-of-distribution actions by learning a value function using expectile regression
\begin{equation*}
\mathcal{L}_V(\omega) =
\mathbb{E}\big[\mathcal{L}_2^\tau(Q_{\bar{\psi}}(\mathbf{x}_t, \mathbf{a}_t, \mathbf{z}) - V_\omega(\mathbf{x}_t, \mathbf{z}))\big],
\end{equation*}
where $\mathcal{L}_2^\tau(x) = (\tau - \mathbb{1}[x < 0])x^2$ and $\bar{\psi}$ denotes EMA parameters.
The Q-function is trained using standard temporal-difference learning.

The policy is optimized using advantage-weighted regression \citep{awr}, given by
\begin{equation*}
\mathcal{L}_\pi(\eta) =
\mathbb{E} \left[ \exp(A(\mathbf{x}_t, \mathbf{a}_t, \mathbf{z}) / \mathcal{B})
\log \pi_\eta(\mathbf{a}_t \mid \mathbf{x}_t, \mathbf{z})
\right],
\end{equation*}
where
$A(\mathbf{x}_t, \mathbf{a}_t, \mathbf{z}) = Q_\psi(\mathbf{x}_t, \mathbf{a}_t, \mathbf{z}) - V_\omega(\mathbf{x}_t, \mathbf{z})$ 
is the advantage and $\mathcal{B}$ is a temperature parameter.

It is worth noting that the world model is used solely for representation learning and not for planning. 
Specifically, observations are encoded into a latent space, and both the policy and value functions are learned directly over these latent states. 
Consequently, learning proceeds within the latent MDP induced by the encoder.

\subsection{Theoretical Analysis}
\label{sec:theory_main}

Our meta policy optimization operates in the learned latent space $(\mathbf{x}_t,\mathbf{z}_t)$ without reconstructing observations.
Here we build on the well-known simulation lemma~\citep{kearnsNearOptimalReinforcementLearning2002}
and bound the value error incurred by {\em (i)} latent abstraction via $\Phi(\mathbf{s})$, {\em (ii)} learned world-model error, and {\em (iii)} task inference via the context encoder.
All proofs are deferred to Appendix~\ref{app:theory}.

\paragraph{Setup.}
For task $i$ with MDP $\mathcal{M}_i=\langle \mathcal{S},\mathcal{A},R_i,P_i,\gamma,\rho_0\rangle$ 
and bounded rewards
$|R_i(\mathbf{s}, \mathbf{a})| \le R_{\max}$.
Let the learned latent state be $\mathbf{x} = \Phi(\mathbf{s})\coloneqq F_\phi(\mathbf{s}) \in \mathcal{X}$.
Given $(\mathbf{x}, \mathbf{a}, \mathbf{z})$, the learned dynamics model induces a kernel
$\widehat{P}_\phi(\cdot \mid \mathbf{x}, \mathbf{a}, \mathbf{z})$ over $\mathcal{X}$ and the learned reward model outputs $\widehat{R}_\phi(\mathbf{x}, \mathbf{a}, \mathbf{z})$.
We use the infinite-horizon discounted value in MDP $\mathcal{M}_i$ as
\begin{align*}
V_{\mathcal{M}_i}^\pi(\mathbf{s}) =
\mathbb{E} \bigg[\sum_{t\ge 0}\gamma^t R_i(\mathbf{s}_t,\mathbf{a}_t)
\mid \mathbf{s}_0=\mathbf{s},\ \mathbf{a}_t\sim \pi(\cdot\mid \mathbf{s}_t)\bigg]. 
\end{align*}
\paragraph{Sources of error.}
We quantify three sources of error:

\emph{(1) Latent abstraction error.}
Let $P_i^\Phi(\mathbf{x}'\mid \mathbf{s},\mathbf{a})$ be the pushforward transition induced by $\Phi$ (Appendix~\ref{app:theory}).
We assume there exists a Markov kernel $\bar{P}_i^\Phi(\cdot\mid \mathbf{x},\mathbf{a})$ and latent reward function $\bar{R}_i^\Phi(\mathbf{x},\mathbf{a})$ such that
\begin{subequations}
\begin{align}
\epsilon_{\mathrm{abs}}^{P} &\coloneq
\sup_{\mathbf{s},\mathbf{a}}
\big\|P_i^\Phi(\cdot \mid \mathbf{s}, \mathbf{a}) - \bar{P}_i^\Phi(\cdot \mid \Phi(\mathbf{s}), \mathbf{a})\big\|_1, \label{eq:abs_error} \\
\epsilon_{\mathrm{abs}}^{R} 
&\coloneq \sup_{\mathbf{s},\mathbf{a}}
\big|R_i(\mathbf{s},\mathbf{a})-\bar{R}_i^\Phi(\Phi(\mathbf{s}),\mathbf{a})\big|.
\end{align}
\end{subequations}
\emph{(2) World model approximation error (true task).}
When conditioned on the true task $i$, define
\begin{subequations}
\begin{align}
\epsilon_{\mathrm{model}}^{P}
&\coloneq \sup_{\mathbf{x}, \mathbf{a}}
\big\|\widehat{P}_\phi(\cdot\mid \mathbf{x}, \mathbf{a},i)-\bar{P}_i^\Phi(\cdot\mid \mathbf{x}, \mathbf{a})\big\|_1, \label{eq:model_error} \\
\epsilon_{\mathrm{model}}^{R}
&\coloneq \sup_{\mathbf{x}, \mathbf{a}}
\big|\widehat{R}_\phi(\mathbf{x}, \mathbf{a},i)-\bar{R}_i^\Phi(\mathbf{x}, \mathbf{a})\big|.
\end{align}
\end{subequations}
\emph{(3) Task inference error (conditioning on $\mathbf{z}$).}
Let $\mathbf{z}=g_\theta(\mathcal{H})$ be the task representation inferred from history $\mathcal{H}$ collected in task $i$.
Define
\begin{subequations}
\begin{align}
\epsilon_{\mathrm{task}}^{P}
&\coloneq \sup_{\mathbf{x},\mathbf{a}}
\big\|\widehat{P}_\phi(\cdot\mid \mathbf{x}, \mathbf{a}, \mathbf{z})-\widehat{P}_\phi(\cdot\mid \mathbf{x}, \mathbf{a}, i)\big\|_1, \label{eq:task_error} \\
\epsilon_{\mathrm{task}}^{R}
&\coloneq \sup_{\mathbf{x},\mathbf{a}}
\big|\widehat{R}_\phi(\mathbf{x}, \mathbf{a}, \mathbf{z})-\widehat{R}_\phi(\mathbf{x}, \mathbf{a}, i)\big|.
\end{align}
\end{subequations}
\paragraph{Value error bound.}
Let $\pi(\mathbf{a}\mid \mathbf{x}, \mathbf{z})$ be any latent policy and define the lifted policy
$\pi^\Phi(\mathbf{a}\mid \mathbf{s}, \mathbf{z})\coloneqq \pi(\mathbf{a}\mid \Phi(\mathbf{s}), \mathbf{z})$.
Let $V_{\mathcal{M}_i}^{\pi^\Phi}$ be its value in the original MDP and let $V_{\widehat{\mathcal{M}}(\mathbf{z})}^{\pi}$ be its value in the learned latent MDP induced by $(\widehat{P}_\phi(\cdot\mid \mathbf{x}, \mathbf{a}, \mathbf{z}), \widehat{R}_\phi(\mathbf{x}, \mathbf{a}, \mathbf{z}))$.

\begin{theorem}[Value error from abstraction, model learning, and task inference]
\label{thm:main_value_bound}
For any task $i$ and any latent policy $\pi$,
\begin{subequations}
\begin{align}
\sup_{\mathbf{s}\in\mathcal{S}} \Big|
V_{\mathcal{M}_i}^{\pi^\Phi}(\mathbf{s})
&- V_{\widehat{\mathcal{M}}(\mathbf{z})}^{\pi}(\Phi(\mathbf{s})) \Big|
\le \frac{\epsilon^{R}}{1-\gamma}
+ \frac{\gamma R_{\max}}{(1-\gamma)^2} \epsilon^{P},
\label{eq:main_value_bound}
\nonumber \\
\text{with } \qquad
\epsilon^{R} &= \epsilon_{\mathrm{abs}}^{R}
+ \epsilon_{\mathrm{model}}^{R}
+ \epsilon_{\mathrm{task}}^{R}, \\
\epsilon^{P} &= \epsilon_{\mathrm{abs}}^{P}
+ \epsilon_{\mathrm{model}}^{P}
+ \epsilon_{\mathrm{task}}^{P}.
\end{align}
\end{subequations}
\end{theorem}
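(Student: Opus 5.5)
We prove the bound with a simulation-lemma argument in which the three error sources are peeled off by the triangle inequality. Introduce two intermediate latent MDPs over $\mathcal{X}$: the \emph{ideal abstract MDP} $\bar{\mathcal{M}}_i=(\bar P_i^\Phi,\bar R_i^\Phi)$, and the \emph{true-task learned MDP} $\widehat{\mathcal{M}}(i)=(\widehat P_\phi(\cdot\mid\cdot,\cdot,i),\widehat R_\phi(\cdot,\cdot,i))$. Then
\[
\big|V_{\mathcal{M}_i}^{\pi^\Phi}(\mathbf{s})-V_{\widehat{\mathcal{M}}(\mathbf{z})}^{\pi}(\Phi(\mathbf{s}))\big|
\le \big|V_{\mathcal{M}_i}^{\pi^\Phi}(\mathbf{s})-V_{\bar{\mathcal{M}}_i}^{\pi}(\Phi(\mathbf{s}))\big|
+\big|V_{\bar{\mathcal{M}}_i}^{\pi}-V_{\widehat{\mathcal{M}}(i)}^{\pi}\big|
+\big|V_{\widehat{\mathcal{M}}(i)}^{\pi}-V_{\widehat{\mathcal{M}}(\mathbf{z})}^{\pi}\big|.
\]
Each term is one application of a simulation-type lemma with per-step reward discrepancy $\epsilon^R_{\bullet}$ and transition discrepancy $\epsilon^P_{\bullet}$. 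Summing the three gives the stated bound with $\epsilon^R,\epsilon^P$ as sums.

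The two purely latent terms come from the standard simulation lemma. Take two MDPs on the same space with the same policy, $\sup|R-R'|\le\epsilon_R$ and $\sup\|P-P'\|_1\le\epsilon_P$. Write the Bellman equations and subtract to get
\[
|V-V'|(\mathbf{x})\le \epsilon_R+\gamma\,\big|\mathbb{E}_{P}[V-V']\big|+\gamma\,\big|(P-P')V'\big|.
\]
Bound $|(P-P')V'|\le \epsilon_P\|V'\|_\infty\le \epsilon_P R_{\max}/(1-\gamma)$. Taking sups gives $\|V-V'\|_\infty\le \epsilon_R/(1-\gamma)+\gamma R_{\max}\epsilon_P/(1-\gamma)^2$. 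For the model term this directly uses $\epsilon^{\bullet}_{\mathrm{model}}$, and for the task term it uses $\epsilon^{\bullet}_{\mathrm{task}}$, in both cases with $\mathbf{z}$ held fixed along the rollout. Two bookkeeping points need attention here. First, the value-norm bound requires the reward of the comparison MDP to be bounded by $R_{\max}$; one should either assume the learned and abstract rewards are clipped to $[-R_{\max},R_{\max}]$ or subtract a constant so that the centred oscillation is used. Second, the $\|\cdot\|_1$ pairing can be sharpened to $\tfrac12$ times the span, which gives slack.

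The abstraction term is the main obstacle, because the two MDPs live on different spaces, $\mathcal{S}$ and $\mathcal{X}$. The plan is to compare $V_{\mathcal{M}_i}^{\pi^\Phi}(\mathbf{s})$ with $\bar V(\Phi(\mathbf{s}))$, where $\bar V:=V_{\bar{\mathcal{M}}_i}^\pi$, by lifting $\bar V$ to $\mathcal{S}$ via $\bar V\circ\Phi$. The lifted policy $\pi^\Phi$ chooses actions depending only on $\Phi(\mathbf{s})$, so both Bellman equations use the same action distribution $\pi(\cdot\mid\Phi(\mathbf{s}),\mathbf{z})$. Subtracting them gives
\[
\Delta(\mathbf{s}) := V^{\pi^\Phi}_{\mathcal{M}_i}(\mathbf{s})-\bar V(\Phi(\mathbf{s}))
=\mathbb{E}_{\mathbf{a}}\Big[R_i-\bar R_i^\Phi + \gamma\,\mathbb{E}_{P_i}\Delta(\mathbf{s}') + \gamma\big(\mathbb{E}_{P_i^\Phi}-\mathbb{E}_{\bar P_i^\Phi(\cdot\mid\Phi(\mathbf{s}),\mathbf{a})}\big)\bar V\Big],
\]
where the key identity $\mathbb{E}_{\mathbf{s}'\sim P_i}\bar V(\Phi(\mathbf{s}'))=\mathbb{E}_{\mathbf{x}'\sim P_i^\Phi}\bar V(\mathbf{x}')$ is exactly the definition of the pushforward. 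The same contraction argument then yields $\epsilon^R_{\mathrm{abs}}/(1-\gamma)+\gamma R_{\max}\epsilon^P_{\mathrm{abs}}/(1-\gamma)^2$. Finally, add the three bounds and take the supremum over $\mathbf{s}$. Measurability and boundedness of $\bar V$ are handled by the same clipping convention.
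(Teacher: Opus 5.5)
Your proposal is correct and follows the paper's proof: the same triangle inequality through $\bar{\mathcal{M}}_i^\Phi$ and $\widehat{\mathcal{M}}_i$, with a simulation-lemma bound on each of the three terms. Your explicit Bellman-difference argument for the abstraction step, via the pushforward identity $\mathbb{E}_{P_i}\bar V(\Phi(\mathbf{s}'))=\mathbb{E}_{P_i^\Phi}\bar V$, is more careful than the paper's, which just invokes the same-space simulation lemma across $\mathcal{S}$ and $\mathcal{X}$.

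The reward-boundedness caveat you raise is real. The paper also assumes it implicitly, through the hypothesis $|\widehat{R}|\le R_{\max}$ in its simulation lemma, so clipping matches the paper. Centring alone buys nothing, however: half the span of a value with rewards in $[-R_{\max},R_{\max}]$ is already $R_{\max}/(1-\gamma)$. You can drop the assumption entirely by merging the model and task steps. Compare $\bar{\mathcal{M}}_i^\Phi$ with $\widehat{\mathcal{M}}(\mathbf{z})$ directly, and pair the kernel difference against $\bar V$, which is bounded by $R_{\max}/(1-\gamma)$ because its reward is an average of $R_i$.
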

\paragraph{Interpretation.}
The bound decomposes value error into: 
{\em (i)} latent abstraction error (how non-Markov the latent state $\mathbf{x}$ is),
{\em (ii)} world model error (how accurately $D_\phi$ and $R_\phi$ match the induced latent MDP), and
{\em (iii)} task inference error (how well $\mathbf{z}$ substitutes for the true task ID for predicting latent dynamics and rewards).

\paragraph{Implications for representation learning.}
Crucially, the bound in \cref{thm:main_value_bound} does not require reconstructing observations from the latent state.
Unlike reconstruction-based world models, which aim to preserve all information in $\mathbf{s}_t$, our analysis only requires that the learned representations $(\mathbf{x}, \mathbf{z})$ preserve information for predictive control.

Concretely, the abstraction error $(\epsilon_{\mathrm{abs}}^{P},\epsilon_{\mathrm{abs}}^{R})$ is governed by how well the observation encoder $\Phi$ groups states with similar future latent dynamics and rewards.
Our temporal consistency loss directly targets this by enforcing that $\mathbf{x}_{t+1}$ is predictable from $(\mathbf{x}_t, \mathbf{a}_t, \mathbf{z})$, rather than from raw observations.
The world model error $(\epsilon_{\mathrm{model}}^{P},\epsilon_{\mathrm{model}}^{R})$ is reduced by training the latent dynamics and reward models to accurately predict future latent states and rewards under the inferred task.
Finally, the task inference error $(\epsilon_{\mathrm{task}}^{P},\epsilon_{\mathrm{task}}^{R})$ is controlled by jointly training the context encoder with the world model, encouraging $\mathbf{z}$ to capture precisely the task-dependent factors needed for prediction.

Together, these objectives yield a latent representation that is sufficient for planning and control, without requiring reconstruction of $\mathbf{s}_t$.
We provide a more detailed theoretical analysis in Appendix~\ref{app:theory}. 

\section{Experiments}
We evaluate temporal consistency for task representation learning on multi-task benchmarks from MuJoCo~\citep{mujoco}, Contextual DeepMind Control (DMC)~\citep{deepmindcontrol, hyperzero}, and Meta-World~\citep{metaworld}.
Our experiments address the following questions:
\begin{itemize}
    \item[\textbf{RQ1}] How does learning the context encoder jointly with the world model's self-supervised temporal consistency loss affect the task representation?
    \item[\textbf{RQ2}] Does temporal consistency for task representation learning reduce different sources of error described in our theoretical analysis (\cref{sec:theory_main})?
    \item[\textbf{RQ3}] How does incorporating temporal consistency into training the context encoder impact the generalization to unseen tasks compared to prior methods? 
\end{itemize}

\paragraph{Baselines.}
We compare against the following context-based OMRL methods (more details in \cref{sec:baseline_details}):
\begin{itemize}
    \item \textbf{FOCAL}~\citep{focal} trains the context encoder using a distance-based contrastive objective that pulls representations from the same task together and pushes different tasks apart. \looseness-1
    \item \textbf{CSRO}~\citep{csro} extends FOCAL by reducing context distribution shift using CLUB as a mutual information regularizer \citep{club}. 
    \item \textbf{UNICORN}~\citep{unicorn} trains a context encoder using task-conditioned dynamics and reward models with one-step prediction. \textit{UNICORN-SS} further augments this with a contrastive objective, whereas \textit{UNICORN-SUP} relies exclusively reconstruction.  
    \item \textbf{AutoRegression} where we extend \textit{UNICORN-SS} to multi-step prediction by performing backpropagation through time in the state space. 
    Comparing this baseline with the contextual world model highlights the role of multi-step temporal consistency in latent space for task representation learning.
\end{itemize}

\paragraph{Datasets.} 
We generate offline datasets using Dropout Q-learning (DroQ)~\citep{droq}, training a separate agent for each task.
Each dataset contains 1000 trajectories collected at different stages of training, ranging from random behavior to near-expert performance over 1M environment steps. \looseness-1

\paragraph{Benchmarks.} 
For MuJoCo and Contextual-DMC, we use 20 training tasks, 10 in-distribution test tasks, and 10 out-of-distribution test tasks.
Out-of-distribution tasks differ in the ranges of underlying variation factors.

In Meta-World, we follow standard Meta-RL settings (ML1, ML10, ML45).
ML1 evaluates generalization across goals within a single environment.
ML10 and ML45 evaluate generalization to unseen environments without task identifiers, using 10 and 45 training environments, respectively, and 5 held-out test environments.

\begin{figure}[t]
    \centering
    \includegraphics[width=\linewidth]{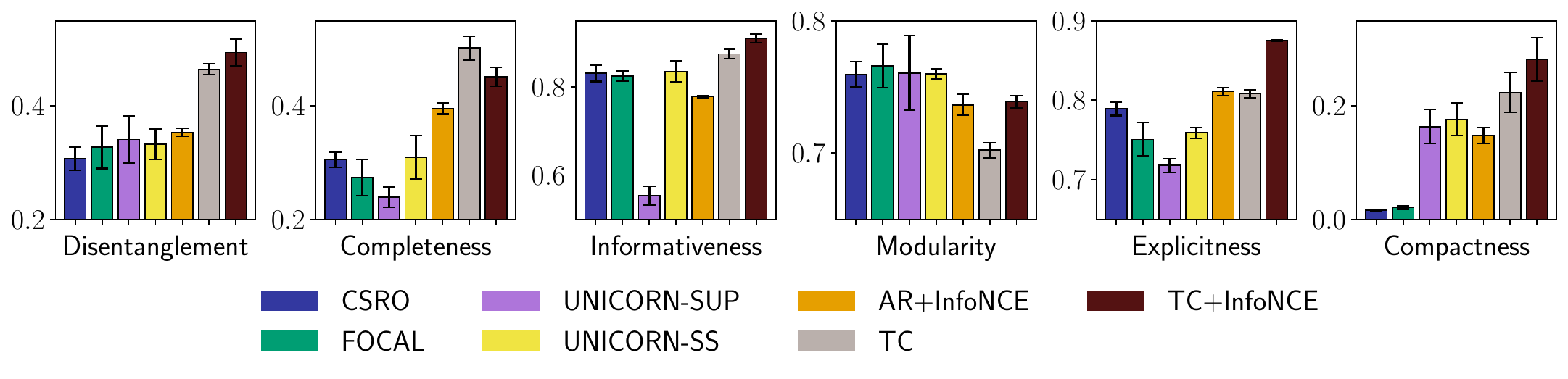}
    \caption{
    Disentanglement metrics (DCI, InfoMEC) for the Cheetah-length-speed (LS) environment. 
    \textbf{Temporal consistency disentangles the variation factors more effectively, while contrastive learning enhances task distinguishability. }
    {TC} denotes training the context encoder solely with the temporal consistency
    objective, and {AR} denotes auto-regression. 
    Averaged over 6 random seeds.
    }
    \label{fig:DCI}
\end{figure}

\begin{figure}[t]
    \centering
    \includegraphics[width=\linewidth]{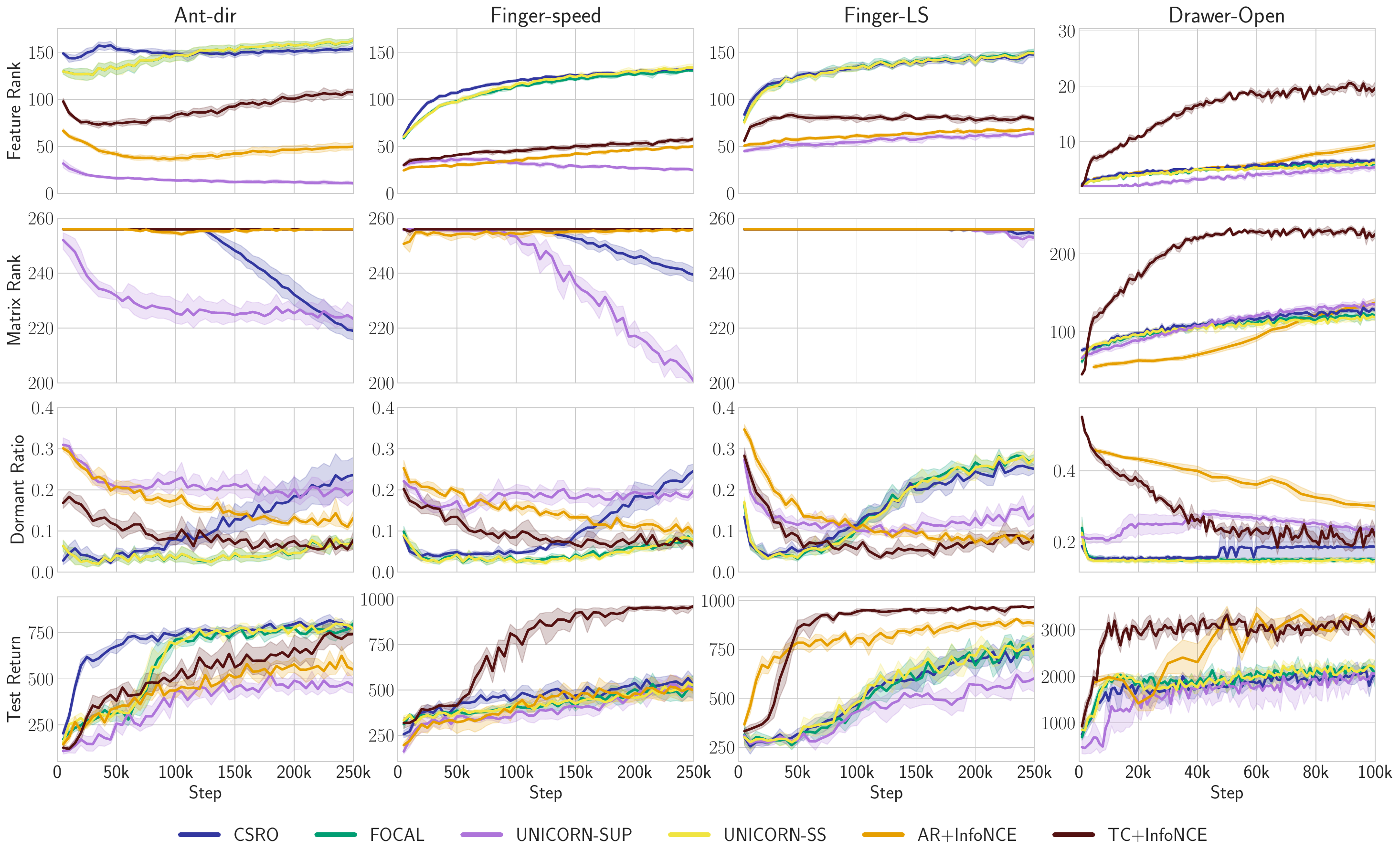}
    \caption{
        Representation metrics (Feature Rank, Rank, and Dormant ratio) of the context encoder. \textbf{Temporal consistency (TC) maintains low dormant neuron ratio and high matrix rank} compared to pure reconstruction (UNICORN-SUP) and auto-regression (AR). 
        The shaded area represents 95\% confidence interval across 6 random seeds.
    }
    \label{fig:ce_metrics} 
\end{figure}

\subsection{Investigating Task Representation Learning}

We first study how different training objectives affect the quality of the learned task representations.
Previous work often relies on t-SNE visualizations to assess task separation, but such visualizations are qualitative and sensitive to hyperparameters.
We instead use disentanglement metrics, which measure how well latent dimensions align with underlying task variation factors.
We report DCI~\citep{DCI} metrics (disentanglement, completeness, informativeness) and InfoMEC~\citep{InfoMEC} metrics (modularity, explicitness, compactness).
Together, these metrics capture whether a task representation is structured, informative, and aligned with true task factors.

\Cref{fig:DCI} reports results for the Cheetah-LS environment, which varies desired speed and torso length.
Training the context encoder using latent temporal consistency (\textit{TC}) leads to better disentanglement than reconstruction-based training (UNICORN-SUP) and auto regression (\textit{AR}).
Adding contrastive objectives improves task distinguishability, reflected in higher informativeness and explicitness.
Combining both objectives yields the strongest overall representations.
\Cref{sec:appendix_disentanglement} presents disentanglement metrics for more environments, demonstrating consistent behavior.

We analyze task representation learning using feature rank, matrix rank, and dormant neuron ratio (\cref{fig:ce_metrics}).
The matrix rank \citep{iqrl} quantifies correlations among learned features, while the dormant neuron ratio \citep{dormant, plasticity_loss} measures the proportion of neurons with near-zero activations, which limits the expressivity of the network.
Given the representation matrix -- defined as the activations of the last hidden layer of the context encoder -- the feature rank \citep{feature_rank_rl, sharpness_aware_low_rank, representation_collapse_ppo, simplicial_embeddings_improve_sample} measures feature diversity by identifying the smallest subspace that preserves $(1-\varepsilon)\%$ of the total variance; throughout our experiments, we set $\varepsilon = 0.01$.
Self-prediction learns less correlated features than reconstruction (UNICORN SUP), maintains higher matrix rank, and significantly reduces dormant neurons, suggesting that latent temporal consistency leads to more expressive and robust task representations \citep{plasticity_loss}.
\cref{fig:extra_rep_metrics} provides representation metrics for more environments.

\subsection{Error Decomposition}
In this section, we conduct experiments to evaluate the theoretical analysis in \cref{sec:theory_main}. 
To quantify each error source, we train separate networks using privileged information, the true task ID $i$, unavailable in the offline meta-RL setting. 
\Cref{sec:error_details} provides more detail. 
\begin{itemize}[]
\item \textbf{Abstraction error} A network $P_{\text{abs}}(\mathbf{x}_{t+1} \mid \mathbf{s}_t, \mathbf{a}_t, i)$ is trained to predict the next latent from the state, action, and task ID. The error is the discrepancy between this prediction and the encoded next state $\Phi(\mathbf{s})$ (\cref{eq:abs_error}). \looseness-1
\item \textbf{World modeling error} A network $P_{\text{model}}(\mathbf{x}_{t+1} \mid \mathbf{x}_t, \mathbf{a}_t, i)$ is trained to predict the next latent from the current latent, action, and task ID. The error is measured against the encoded next state $\Phi(\mathbf{s})$ (\cref{eq:model_error}). \looseness-1
\item \textbf{Task error} We compare predictions of $P_{\text{model}}$ with those of the contextual world model, which conditions on the learned task representation $\mathbf{z}$ instead of task ID $i$ (\cref{eq:task_error}). \looseness-1
\end{itemize}
\cref{fig:errors} illustrates the evolution of these errors during training. 
All networks are trained and evaluated simultaneously on the same transitions. 
Jointly training the context encoder and the world model reduces all three error terms, and their sum upper-bounds the temporal consistency loss. 
Task error is the dominant term in most experiments, indicating that the main source of error is due to estimating the underlying task from previous transitions with the context encoder. 

\begin{figure}[t]
    \centering
    \includegraphics[width=\linewidth]{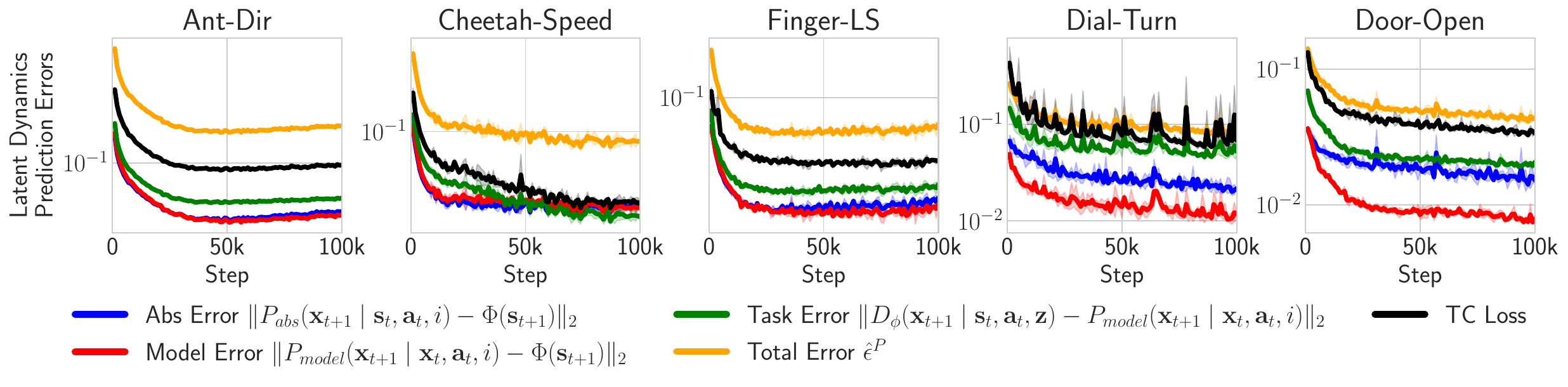}
    \caption{
    Estimating different error sources.
    \textbf{Joint training of the context encoder and world model reduces all error terms, with task error dominating; their sum upper-bounds the temporal consistency loss.}
    Abstraction error arises from mapping states to a latent space, model error reflects world model accuracy, and task error measures how well the context encoder identifies the task. 
    Shaded regions denote 95\% confidence intervals over three random seeds. \looseness-1
    }
    \label{fig:errors}
\end{figure}

\subsection{Generalization to New Tasks}

We evaluate whether improved task representations lead to better generalization.
During meta-testing, the agent is not given task identifiers and must infer the task online from interaction data.
The task representation is initialized to zero and updated at each step using collected transitions.\looseness-2

\Cref{fig:aggregate} reports few-shot performance on in-distribution tasks.
These results suggest that training the context encoder with both the latent temporal consistency and contrastive objectives improves both in-distribution performance and generalization. 
\Cref{sec:appendix_generalization} reports few-shot performance across benchmarks as well as zero-shot results on MuJoCo and Contextual-DMC.
We also evaluate generalization to unseen environments in Meta-World ML10 and ML45 (\cref{tab:ML_new_env} in \cref{sec:appendix_generalization}).
Temporal consistency with contrastive learning achieves higher success rates on both training and test environments, although generalization to entirely new environments remains challenging for all methods.
Increasing the diversity of training environments improves test performance.

\begin{figure}[t]
    \centering
    \subfloat[\centering MuJoCo and Contextual DMC]{{\includegraphics[width=0.45\linewidth]{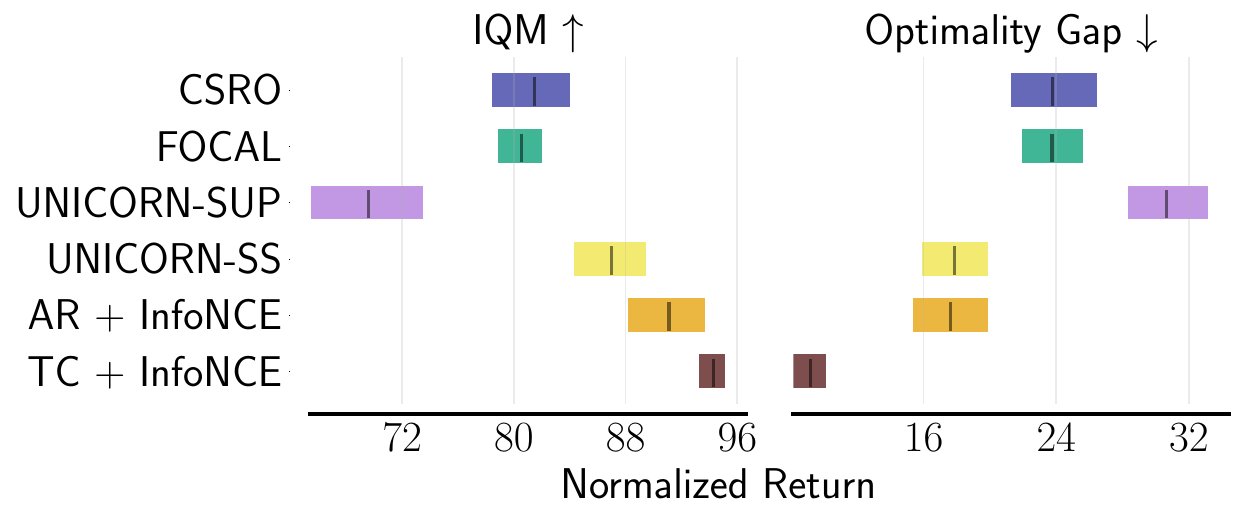} }}%
    \qquad
    \subfloat[\centering MetaWorld]{{\includegraphics[width=0.45\linewidth]{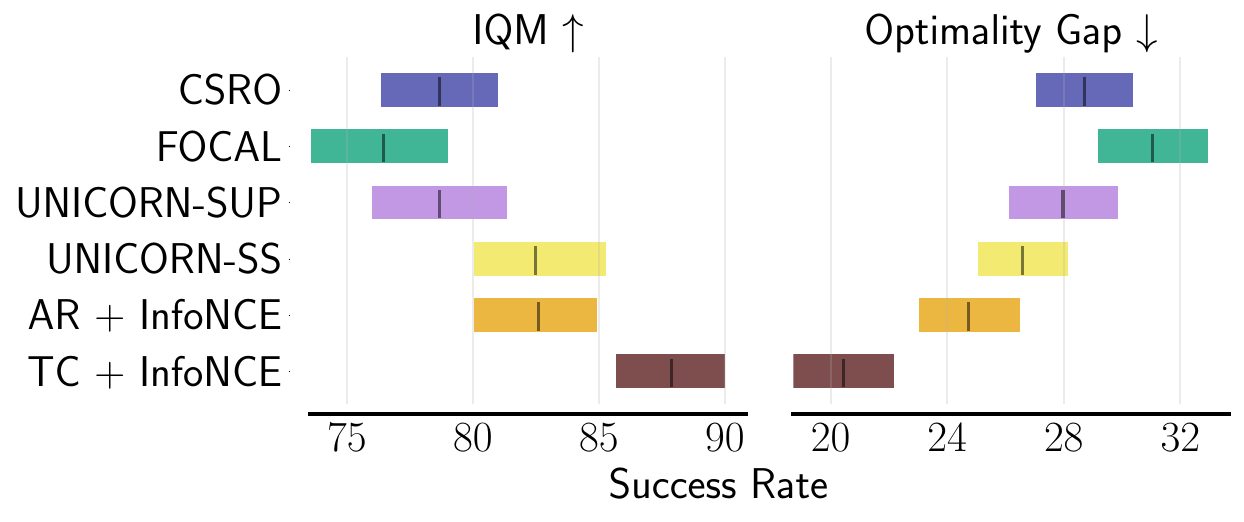} }}%
    \caption{Few-shot in-distribution performance. 
    \textbf{Temporal consistency in the latent space improves generalization compared to reconstruction and auto-regression}
    (9 MuJoCO and Contextual DMC environments and 30 MetaWorld environments, each experiment with 6 random seeds).
    }
    \label{fig:aggregate}%
\end{figure}
\subsection{Further Results}
\Cref{sec:furthere_results} in the appendix presents further empirical results including planning (\cref{sec:plan}), comparisons with DreamerV3 and MetaDT (\cref{sec:dreamerv3}), scaling behavior with model size (\cref{sec:scaling}), architecture of the context encoder (\cref{sec:ce_architecture}), different latent space formulation (\cref{sec:latent_formulation}), training horizon (\cref{sec:multi_step_tc}), the choice of offline RL algorithm (\cref{sec:offline_rl}), bounding the task representation $\mathbf{z}$ (\cref{sec:bounding_z}), and a comparison of computational cost with baseline methods (\cref{sec:computation_cost}).
\Cref{sec:contrastive} further analyzes the impact of contrastive learning and latent temporal consistency objectives on generalization.

\section{Related Work} 
We review world models, self-predictive RL, and context-based offline meta-RL, highlighting how our method bridges gaps between these directions.

\paragraph{World models and self-predictive RL.} 
World models learn compact state representations by mapping observations to a latent state space and learning to predict future latent states and rewards.
They have been successfully used in model-based RL \citep{planet, muzero, tdmpc, hansen2024tdmpc2, dcmpc}, and for representation learning in model-free RL \citep{tcrl, td7, iqrl, general_modelfree}.
Across both settings, a unifying principle is \emph{self-predictive RL} \citep{schwarzer2021data,tcrl}, which provides a strong self-supervised signal for learning latent state spaces.
Recent works have also shown that reconstruction is unnecessary and that enforcing multi-step latent consistency leads to improved performance \citep{tdmpc, dcmpc, iqrl}.
However, these methods are typically task-specific or assume access to a known task identifier, and therefore do not address generalization across tasks or environments.

\paragraph{Context-based offline meta-RL.}
Context-based offline meta-RL (OMRL) methods aim to generalize to unseen tasks by learning a task representation from a context of past transitions.
Prior work trains context encoders using contrastive objectives \citep{focal, corro}, but such approaches are sensitive to context distribution shift between training and testing.
CSRO \citep{csro} and ER-TRL \citep{ertrl} mitigate this shift by regularizing the dependence between task representations and behavior policies.
UNICORN \citep{unicorn} addresses task inference via predictive objectives, but operates directly in the observation space and does not learn a latent state space for control.

We unify these two lines of work.
Unlike prior OMRL methods, we learn a latent state space using self-predictive world models without reconstructing observations.
By training the world model and context encoder together, both the latent state $\mathbf{x}$ and task representation $\mathbf{z}$ capture the task-dependent predictive structure required for control.

\section{Conclusion} 
This paper investigates temporal consistency for contextual OMRL, where we condition world models on learned task representations and jointly train the world model and context encoder without reconstruction.
The resulting self-supervised representations capture task-relevant variation while avoiding representation collapse, leading to improved generalization to unseen tasks.
We further provide value error bounds in the latent MDP induced by the observation encoder, showing that accurate control is possible without observation reconstruction.
Overall, our results demonstrate that predictive latent representations suffice for generalization in offline meta-RL, and that jointly learning the context encoder and world model is both principled and effective. \looseness-1

\paragraph{Limitations} 
Extending our experiments to accommodate different state and action spaces across tasks is a promising direction for future research.
Furthermore, we evaluate temporal consistency for task representation learning in environments with only state information; in principle, the observation encoder architecture could be modified to support visual observations.



\phantomsection%
\addcontentsline{toc}{section}{References}
\begingroup
\small
\bibliographystyle{unsrt}
\bibliography{main}
\endgroup

\clearpage
\appendix
\section*{Appendices}
In this appendix, we provide implementation details in \cref{sec:appendix_details}, theoretical analysis in \cref{app:theory}, and additional results in \cref{sec:furthere_results}. 
\subsection*{Large Language Models}
We use large language models (LLMs) to assist with paper writing, including proofreading for typos and grammar errors. 
We also employ LLMs to generate scripts for visualizing results and creating figures.

\section{Implementation Details} \label{sec:appendix_details}
We used PyTorch \citep{pytorch} for implementation and the AdamW optimizer \citep{adamw} for training the world model and the Adam optimizer \citep{adam} for the other models. All neural networks are implemented as MLPs where each intermediate linear layer is followed by Layer Normalization (\ie pre-norm) \citep{ba2016layernormalization} and a Mish activation function \citep{misra2019mish}. 
Below we summarize the architecture of the world model and policy for the Cheetah-LS environment.
\begin{table}[h]
\caption{Hyperparameters for jointly training context encoder and world model.}
\label{tab:hyperparameters}
\begin{center}
\scriptsize
\begin{sc}
\begin{tabular}{lll}
\hline
Hyperparameter & Value & Description \\
\hline
\textbf{Data Collection} & & \\
Train steps & $10^6$ & \\
Random steps & $5 \times 10^4$ &  Num. random steps at start \\
Num. eval episodes & $50$ & Num. trajectories in evaluation \\ 
Eval. every steps & $5\times 10^4$ & \\
Policy MLP dims & $[512, 512]$ & \\
Value Function MLP dims & $[512, 512]$ & \\
Dropout ratio & 0.1 & \\
Learning rate & $10^{-4}$ &  Value function, entropy coeff \\
 & $3\times 10^{-4}$ & Policy \\ 
Target Entropy & $-\|\mathcal{A}\|_1$ & \\
Batch size & $1024$ & \\
Discount factor $\gamma$ & $0.99$ & \\
Momentum coef  & $0.005$ & Soft update target networks \\
\hline
\textbf{Contextual World Model} & & \\
Observation Encoder MLP dims & $[256]$ &  \\
Context Encoder MLP dims & $[256, 256]$ &  \\
Latent Dynamics and Reward MLP dims  & $[512, 512]$ &  \\
Task representation dim & 10& \\
Latent dim & 512& \\
Consistency coeff & 1.0 & \\
Reward Coeff & 1.0 & \\
Discount Factor $\gamma$ & 0.99& \\
Training Horizon $H$ &  5& \\
Learning Rate & $10^{-4}$& \\
Contrastive Obj Weight $\beta$ & 1.0 & \\

Momentum coef  & $0.005$ & Soft update target encoder \\
\hline 
\textbf{Offline Meta-RL} & & \\
Meta batch size & $16$ & \\
Batch size & $256$ & \\
Num. train task & 20 & mujoco \& dmc \\
                & 40 & metaworld \\
Num. eval task & 10 &  \\
Num. OOD task & 10 & mujoco \& dmc \\
                & 0 & metaworld \\
Context size & $64$ & \\
Buffer size & $2\times 10^5$ &  for each task\\
Discount factor $\gamma$ & $0.99$ & \\
(Q-)Value Function MLP dims & $[256, 256]$ &  \\
Num. Q Functions & 2 &  \\
Policy MLP dims & $[256, 256]$ &  \\
Learning rate & $3 \times 10^{-4}$ & Policy and (Q-)Value functions\\
Expectile regression $\tau$ & 0.8 & \\
Inverse temperature $\mathcal{B}$ & 3.0 & In policy optimization \\ 
Momentum coef. & $0.005$ & soft update target networks \\ 
Num. test trajectories & 3 & k few-shot\\
\hline
\end{tabular}
\end{sc}
\end{center}
\label{tab:hyperparameters}
\end{table}

\begin{lstlisting}[basicstyle=\ttfamily\scriptsize]
Context Encoder: MLP(
  (net): Sequential(
    (0): NormedLinear(in_features=41, out_features=256, bias=True, act=Mish)
    (1): NormedLinear(in_features=256, out_features=256, bias=True, act=Mish)
    (2): NormedLinear(in_features=256, out_features=256, bias=True, act=Mish)
    (3): Linear(in_features=256, out_features=10, bias=True)
  )
)
World Model: ContextualWorldModel(
  (Encoder): MLP(
    (net): Sequential(
      (0): NormedLinear(in_features=17, out_features=256, bias=True, act=Mish)
      (1): Linear(in_features=256, out_features=512, bias=True)
    )
  )
  (Encoder_tar): MLP(
    (net): Sequential(
      (0): NormedLinear(in_features=17, out_features=256, bias=True, act=Mish)
      (1): Linear(in_features=256, out_features=512, bias=True)
    )
  )
  (Trans): MLP(
    (net): Sequential(
      (0): NormedLinear(in_features=534, out_features=512, bias=True, act=Mish)
      (1): NormedLinear(in_features=512, out_features=512, bias=True, act=Mish)
      (2): Linear(in_features=512, out_features=512, bias=True)
    )
  )
  (Reward): Vectorized [MLP(
    (net): Sequential(
      (0): NormedLinear(in_features=534, out_features=512, bias=True, act=Mish)
      (1): NormedLinear(in_features=512, out_features=512, bias=True, act=Mish)
      (2): Linear(in_features=512, out_features=1, bias=True)
    )
  )]
)
Policy: MLP(
  (net): Sequential(
    (0): NormedLinear(in_features=522, out_features=256, bias=True, act=Mish)
    (1): NormedLinear(in_features=256, out_features=256, bias=True, act=Mish)
    (2): Linear(in_features=256, out_features=12, bias=True)
  )
)
Q-Functions: Vectorized [MLP(
  (net): Sequential(
    (0): NormedLinear(in_features=534, out_features=256, bias=True, act=Mish)
    (1): NormedLinear(in_features=256, out_features=256, bias=True, act=Mish)
    (2): Linear(in_features=256, out_features=1, bias=True)
  )
), MLP(
  (net): Sequential(
    (0): NormedLinear(in_features=534, out_features=256, bias=True, act=Mish)
    (1): NormedLinear(in_features=256, out_features=256, bias=True, act=Mish)
    (2): Linear(in_features=256, out_features=1, bias=True)
  )
)]
Value Function: MLP(
  (net): Sequential(
    (0): NormedLinear(in_features=522, out_features=256, bias=True, act=Mish)
    (1): NormedLinear(in_features=256, out_features=256, bias=True, act=Mish)
    (2): Linear(in_features=256, out_features=1, bias=True)
  )
)
\end{lstlisting}

\paragraph{Hardware} 
We used AMD Instinct MI250X GPUs to run our experiments. 
All experiments have been run on a single GPU with 2 CPU workers and 32GB of RAM.

\paragraph{Hyperparameters}
\cref{tab:hyperparameters} illustrates the hyperparameters for our experiments. 
We use the same hyperparameters for all of the experiments. 
For a fair comparison, we use the same network architecture for all the baselines. 


\subsection{Environments}
We evaluated all methods on 3 MuJoCo \citep{mujoco} environments, 6 Contextual DeepMind Control \citep{deepmindcontrol, hyperzero} environments, and 50 Meta-World ML1 \cite{metarl_tutorial} environments. 
\cref{tab:envs} provides details of the environments we used, including the dimensionality of the observation and action spaces, and the distribution of variation factors. Below we summarize the environments:
\begin{itemize} [left=1em]
    \item \textbf{Ant-direction:} an ant (quadruped) robot moving in different desired directions in different tasks.
    \item \textbf{Hopper-mass:} a hopper (one-legged robot) must move as fast as it can, while the mass is different for different tasks.
    \item \textbf{Walker-friction:} a walker (bi-legged) robot must move as fast as it can, while the friction coefficient is different for different tasks. 
    \item \textbf{Cheetah-speed:} a cheetah robot moving forwards/backwards with different desired speeds in different tasks. 
    \item \textbf{Finger-speed:} a planar finger robot rotating a body on an unactuated hinge with different desired angular speeds (both directions) for different tasks. 
    \item \textbf{Walker-speed:} a walker robot moving forwards/backwards with different desired speeds in different tasks. 
    \item \textbf{Cheetah-length-speed (LS):} a cheetah robot moving forwards, where torso length (change in morphology) and/or desired speed are different for each task.
    \item \textbf{Finger-length-speed (LS):} a planar finger robot rotating a body on an unactuated hinge, while the length of the link and/or desired angular speed differ in each task. 
    \item \textbf{Walker-length-speed (LS):} a walker robot moving forwards, where torso length (change in morphology) and/or desired speed are different for each task.
    \item \textbf{Meta-World ML1}: consists of 50 robotic manipulation environments featuring a Sawyer arm with various everyday objects.  Each environment consists of 50 different tasks where the position of objects and goals is different for each task. 
    \item {\textbf{Meta-World ML10:} evaluates generalization to new environments. Similar to the ML1 setting, it consists of robotic manipulation environments, where 10 environments are used for training and 5 environments are reserved for testing generalization capabilities. The testing environments share structural similarities with the training environments. During testing, no prior information about the environment (such as environment ID) is provided, and agents must identify and adapt to the environment solely based on interaction data.}
    \item {\textbf{Meta-World ML45:} evaluates generalization to new environments, similar to the ML10 setting, but with a larger and more diverse set of 45 training environments.}
\end{itemize}

\begin{table*}[h]
\label{tab:envs}
\caption{Environment used for evaluation of different methods.}
\begin{center}
\begin{sc}
\scriptsize
\begin{tabular}{lllll}
\hline
\textbf{Environment} & \textbf{Obs dim} & \textbf{Action dim}  & \textbf{ID Variation Factors}  & \textbf{OOD Variation Factors}\\
\hline
ant-direction & 29 & 8 & $\theta \sim [-\pi, \pi]$ & $\theta \sim [-1.5\pi, -\pi] \cup [\pi, 1.5 \pi]$ \\
hopper-mass & 11 & 3 & $\log f_m \sim [-1.5, 1.5]$ & $\log f_m \sim [-2, -1.5] \cup [1.5, 2]$\\
walker-friction & 17 & 6 & $\log f_f \sim [-1.5, 1.5]$ & $\log f_f \sim [-2, -1.5] \cup [1.5, 2]$\\
\hline 
cheetah-speed & 17 & 6 & $v \sim [-10, -6]\cup[-2, 2]\cup[6,10]$  &  $v \sim [-6, -2]\cup[2,6]$ \\
finger-speed & 17 & 6 & $v \sim [-15, -9]\cup[-3, 3]\cup[9,15]$  &  $v \sim [-9, -3]\cup[3,9]$ \\
walker-speed & 24 & 6 & $v \sim [-5, -3]\cup[-1, 1]\cup[3,5]$  &  $v \sim [-3, -1]\cup[1,3]$ \\
cheetah-length-speed & 17 & 6 & $ v \sim [3, 8]$  & $v\in \{1,2,9,10 \}$ \\
                    &     &   & $L \sim [0.4, 0.6]$ & $L \in \{ 0.3, 0.35, 0.65, 0.7 \}$\\ 
finger-length-speed & 9 & 2 & $ v \sim [5, 10]$   & $v\in \{3,4,11,12 \}$ \\
                    &   &   & $L \sim [0.15, 0.25]$ & $L \in \{ 0.1, 0.12, 0.27, 0.3 \}$\\ 
walker-length-speed & 24 & 6 & $ v \sim [2, 4.5]$   & $v\in \{1,1.5,5,5.5 \}$ \\
                    &    &   & $L \sim [0.2, 0.4]$ & $L \in \{ 0.1, 0.15, 0.45, 0.5 \}$\\ 
\hline
Meta-World  & 39 & 4 & &\\
\hline
\end{tabular}
\end{sc}
\end{center}
\label{tab:envs}
\end{table*}

\newpage
\subsection{Baselines} \label{sec:baseline_details}
In this section, we provide further details of the baselines we compare against.

\paragraph{FOCAL}
\citep{focal} utilizes distance metric learning to train the context encoder. 
Intuitively, the context encoder should map contexts from the same task closely in the latent task representation $\mathbf{z}$ and contexts from different tasks far apart. 
They show that pairwise contrastive learning \citep{contrastive_original} can lead to degenerate task representations, where embeddings from multiple tasks collapse into shared clusters.
To encourage discriminative task representations, FOCAL proposes a negative-power variant of the contrastive loss:
\begin{equation*} 
    \mathrm{L}_{\text{FOCAL}}(\phi) = \mathbb{1} \{ i=j \} \| \mathbf{z}^i - \mathbf{z}^j\|^2_2 
    + \mathbb{1} \{ i \ne j \} \frac{\beta}{\| \mathbf{z}^i - \mathbf{z}^j\|^2_2 + \epsilon_0},
\end{equation*}
wwhere $\phi$ denotes the parameters of the context encoder, $\mathbf{z}^i$ is the task representation corresponding to task $i$, and $\beta$ is a hyperparameter.
Although FOCAL improves over approaches that directly couple task representation learning with Q-learning, it does not explicitly model task-specific dynamics or rewards.
Moreover, during inference, it assumes access to a context set sampled from a distribution similar to the training data, which limits applicability under distribution shift.

\paragraph{CSRO} \citep{csro} utilizes adversarial training to mitigate context distribution shift \citep{smac_omrl}.
Since the learned policy may differ from the behavior policies used to collect the offline data, the context distribution encountered during evaluation can differ substantially from the training distribution.
To address this issue, CSRO minimizes the mutual information between the task representation $\mathbf{z}$ and the behavior-dependent components $(\mathbf{s}, \mathbf{a})$.
Specifically, it employs the CLUB \citep{club} upper bound on mutual information
$\mathcal{I}_{\text{club}} = \mathbb{E}_i [\log p(\mathbf{z}^i \mid \mathbf{s}^i, \mathbf{a}^i)] -\mathbb{E}_j [\log p(\mathbf{z}^j \mid \mathbf{s}^i, \mathbf{a}^i)]$.
Since $p(\mathbf{z}^i \mid \mathbf{s}^i, \mathbf{a}^i)$ is unavailable, CSRO approximates it using a variational distribution estimator trained via maximum likelihood:
\begin{equation*}
    \mathcal{L}_{\text{VD}} (\psi) =  -\mathbb{E} [\log q_\psi(\mathbf{z}^i\mid \mathbf{s}^i, \mathbf{a}^i)],
\end{equation*}
where $\psi$ denotes the parameters of the variational estimator.
To minimize the mutual information and, subsequently, the context distribution shift, the context encoder is trained to minimize
\begin{equation*}
    \mathcal{L}_{\min\text{MI}}(\phi) = \mathbb{E}_i [\log q_\psi(\mathbf{z}^i \mid \mathbf{s}^i, \mathbf{a}^i)] -\mathbb{E}_j [\log q_\psi(\mathbf{z}^j \mid \mathbf{s}^i, \mathbf{a}^i)].
\end{equation*}
CSRO trains the context encoder with $\mathcal{L}(\phi) = \mathcal{L}_{\text{FOCAL}} +  \mathcal{L}_{\min\text{MI}}$ forming an adversarial objective with variational distribution estimator $q_\psi$. 
Although CSRO improves generalization to unseen tasks compared to FOCAL, it still does not explicitly model task-dependent dynamics or rewards.

\paragraph{UNICORN} \citep{unicorn} studies context-based OMRL from an information-theoretic perspective, showing that reconstruction (predicting reward and next state) maximizes the $\mathcal{I}(Z; X_t\mid X_b)$ and contrastive learning in FOCAL maximizes $\mathcal{I}(Z; X)$, where $X_t$ and $X_b$ denote the task-dependent and behavior-dependent components of the context, respectively.
By maximizing $\mathcal{I}(Z; X_t\mid X_b)$, the reconstruction objective encourages the task representation $\mathbf{z}$ to capture task-specific dynamics and rewards, thereby reducing context distribution shift:
\begin{equation} \label{eq:reconstruction}
    \mathcal{L}_\text{Reconstruction}(\phi) = \|\mathbf{s}_{t+1}-D_\phi(\mathbf{s}_t, \mathbf{a}_t, \mathbf{z}) \|^2_2 + \lambda \|r_t-R_\phi(\mathbf{s}_t, \mathbf{a}_t, \mathbf{z}) \|^2_2,
\end{equation}
where $D_\phi$ and $R_\phi$ denote the dynamics and reward models jointly trained with the context encoder.
\textit{UNICORN-SUP} trains the context encoder using only $\mathcal{L}_{\text{Reconstruction}}$,
whereas \textit{UNICORN-SS} augments the reconstruction objective with contrastive learning:
$\mathcal{L}_{\text{Reconstruction}} + \lambda_{\text{FOCAL}} \mathcal{L}_{\text{FOCAL}}$.

\paragraph{AutoRegression:}
We extend \cref{eq:reconstruction} to perform multi-step prediction using backpropagation through time:
\begin{equation*}
    \mathcal{L}_\text{AutoRegression}(\phi) = \sum_{h=0}^{H-1} \|\mathbf{s}_{t+h+1}-D_\phi(\hat{\mathbf{s}}_{t+h}, \mathbf{a}_{t+h}, \mathbf{z}) \|^2_2 + \lambda \|r_t-R_\phi(\hat{\mathbf{s}}_{t+h}, \mathbf{a}_{t+h}, \mathbf{z}) \|^2_2,
\end{equation*}
where $\hat{\mathbf{s}}_t=\mathbf{s}_t$ and $\hat{\mathbf{s}}_{t+h+1}=D_\phi(\hat{\mathbf{s}}_{t+h}, \mathbf{a}_{t+h}, \mathbf{z})$. 
Comparing this baseline with single-step reconstruction isolates the effect of multi-step prediction on task representation learning.
Comparing it with our jointly trained world model and context encoder further highlights the importance of latent temporal consistency in task representation learning.

\clearpage

\section{Theoretical Analysis} \label{app:theory}
Our actor-critic operates in the learned latent space $(\mathbf{x}_t,\mathbf{z}_t)$ without reconstructing observations.
We give conditions under which planning/control in $(\mathbf{x},\mathbf{z})$ is sufficient and derive value-error bounds.
Our bounds build on the \emph{simulation lemma} (\eg, \citep{kearnsNearOptimalReinforcementLearning2002}), which relates value differences between two MDPs to discrepancies in reward and transition kernels.
We extend it in two steps: {\em (i)} from the original task MDP to the latent MDP induced by the observation encoder $\Phi$, and {\em (ii)} from conditioning on the true task to conditioning on the learned context representation $\mathbf{z}$.

\subsection{Preliminaries and notation}

For each task $i$, let $\mathcal{M}_i=\langle \mathcal{S},\mathcal{A},R_i,P_i,\gamma,\rho_0\rangle$ with bounded rewards
$|R_i(\mathbf{s},\mathbf{a})|\le R_{\max}$ for all $\mathbf{s}$ and $\mathbf{a}$.
We use the infinite-horizon discounted value
\begin{align}
V_i^\pi(\mathbf{s}) =
\mathbb{E} \bigg[\sum_{t\ge 0}\gamma^t R_i(\mathbf{s}_t,\mathbf{a}_t)
\mid \mathbf{s}_0=\mathbf{s},\ \mathbf{a}_t\sim \pi(\cdot\mid \mathbf{s}_t)\bigg].
\end{align}

\paragraph{Latent state and task representation.}
Our latent state is
\begin{align}
\mathbf{x}=\Phi(\mathbf{s}) \coloneqq F_\phi(\mathbf{s}) \in \mathcal{X},
\end{align}
and the context encoder maps a context set $\mathcal{H}_t$ of transitions to a task representation
\begin{align}
\mathbf{z}_t = g_\theta(\mathcal{H}_t)
\coloneqq
\mathbb{E}_{(\mathbf{s},\mathbf{a},r,\mathbf{s}')\sim \mathcal{H}_t}
\big[E_\theta(\mathbf{s},\mathbf{a},r,\mathbf{s}')\big].
\end{align}
At test time $\mathbf{z}_t$ may change as more context is accumulated.
In the bounds below, we treat $\mathbf{z}$ as fixed after a context set is collected (matching few-shot evaluation with a fixed context window).

\paragraph{World model (task-conditioned).}
Given $(\mathbf{x},\mathbf{a},\mathbf{z})$, the latent dynamics $D_\phi$ defines a distribution
$\widehat{P}_\phi(\cdot\mid \mathbf{x},\mathbf{a},\mathbf{z})$ over $\mathcal{X}$,
and the reward model defines $\widehat{R}_\phi(\mathbf{x},\mathbf{a},\mathbf{z})$.

\paragraph{Norms.}
For distributions $p,q$ over a finite set, $\|p-q\|_1=\sum_x |p(x)-q(x)|$.
For bounded functions $V:\mathcal{S}\to\mathbb{R}$, $\|V\|_\infty=\sup_{\mathbf{s}} |V(\mathbf{s})|$.

\paragraph{Simulation lemma.}
We use a standard simulation-lemma style inequality in total variation (TV).
\begin{lemma}[Simulation lemma (TV form)]
\label{lem:sim_lemma_tv}
Let $M=\langle \mathcal{S},\mathcal{A},R,P,\gamma\rangle$ and
$\widehat{M}=\langle \mathcal{S},\mathcal{A},\widehat{R},\widehat{P},\gamma\rangle$
share the same state space $\mathcal{S}$ and action space $\mathcal{A}$, with
$|R(\mathbf{s}, \mathbf{a})| \le R_{\max}$ and $|\widehat{R}(\mathbf{s}, \mathbf{a})|\le R_{\max}$ for all $(\mathbf{s}, \mathbf{a})$.
Assume for all $(\mathbf{s}, \mathbf{a})$:
\begin{align}
|R(\mathbf{s},\mathbf{a}) - \widehat{R}(\mathbf{s}, \mathbf{a})| \le \epsilon_R, \qquad
\|P(\cdot\mid \mathbf{s}, \mathbf{a}) - \widehat{P}(\cdot \mid \mathbf{s}, \mathbf{a})\|_1 \le \epsilon_P.
\end{align}
Then for any stationary policy $\pi(\mathbf{a}\mid \mathbf{s})$,
\begin{align}
\|V_M^\pi - V_{\widehat{M}}^\pi\|_\infty
\le \frac{\epsilon_R}{1-\gamma}
+ \frac{\gamma R_{\max}}{(1-\gamma)^2}\,\epsilon_P.
\label{eq:sim_tv_bound}
\end{align}
\end{lemma}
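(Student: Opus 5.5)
We prove \cref{lem:sim_lemma_tv} by writing the value difference as a fixed point of a contraction. We use the Bellman equations for the fixed policy $\pi$ in both MDPs and compare them pointwise. Write $r^\pi(\mathbf{s}) = \mathbb{E}_{\mathbf{a}\sim\pi(\cdot\mid\mathbf{s})}[R(\mathbf{s},\mathbf{a})]$, define $\widehat{r}^\pi$ analogously, and let $P^\pi$, $\widehat{P}^\pi$ denote the state-to-state kernels averaged over $\mathbf{a}\sim\pi$. Then
\begin{align*}
V_M^\pi = r^\pi + \gamma P^\pi V_M^\pi, \qquad V_{\widehat{M}}^\pi = \widehat{r}^\pi + \gamma \widehat{P}^\pi V_{\widehat{M}}^\pi .
\end{align*}

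Subtracting the two equations and adding and subtracting $\gamma \widehat{P}^\pi V_M^\pi$ gives
\begin{align*}
V_M^\pi - V_{\widehat{M}}^\pi = (r^\pi - \widehat{r}^\pi) + \gamma (P^\pi - \widehat{P}^\pi) V_M^\pi + \gamma \widehat{P}^\pi (V_M^\pi - V_{\widehat{M}}^\pi).
\end{align*}
We take sup norms and bound each term.

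\textbf{Reward term.} It is at most $\epsilon_R$, because averaging over $\mathbf{a}\sim\pi$ preserves the pointwise bound.

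\textbf{Transition term.} For each $(\mathbf{s},\mathbf{a})$, Hölder's inequality gives $|\sum_{\mathbf{s}'}(P-\widehat{P})(\mathbf{s}'\mid\mathbf{s},\mathbf{a})V(\mathbf{s}')| \le \|P-\widehat{P}\|_1\|V\|_\infty$. Here $V = V_M^\pi$ satisfies $\|V_M^\pi\|_\infty \le R_{\max}/(1-\gamma)$ because rewards are bounded. Averaging over $\pi$, the term is at most $\gamma\epsilon_P R_{\max}/(1-\gamma)$.

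\textbf{Last term.} Since $\widehat{P}^\pi$ is a Markov kernel, it is a sup-norm nonexpansion, so this term is at most $\gamma\|V_M^\pi - V_{\widehat{M}}^\pi\|_\infty$.

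Let $\Delta = \|V_M^\pi - V_{\widehat{M}}^\pi\|_\infty$, which is finite since both values are bounded by $R_{\max}/(1-\gamma)$. Combining the three bounds yields
\begin{align*}
\Delta \le \epsilon_R + \frac{\gamma R_{\max}}{1-\gamma}\epsilon_P + \gamma\Delta .
\end{align*}
Rearranging and dividing by $1-\gamma$ gives exactly \eqref{eq:sim_tv_bound}.

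\textbf{Main obstacle: the transition term.} The only delicate point is this term, and in particular the choice of which value function to place next to the kernel difference. We pair $(P^\pi - \widehat{P}^\pi)$ with $V_M^\pi$, which is bounded by the hypothesis on $R$. Pairing it with $V_{\widehat{M}}^\pi$ would instead use the bound on $\widehat{R}$, and either choice works because both rewards are bounded by $R_{\max}$.

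One could tighten the constant by a factor of $2$ using $\|V - c\|_\infty$ with a centering constant $c$. We do not need this, since the stated bound uses the cruder constant.

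If $\mathcal{S}$ is not finite, the $\ell_1$ norm is read as the total-variation norm of the signed measure $P-\widehat{P}$. The same Hölder-type inequality $|\int V\,d(P-\widehat{P})| \le \|V\|_\infty\|P-\widehat{P}\|_1$ holds for bounded measurable $V$, so the argument is unchanged.
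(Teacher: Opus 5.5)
Your proof is correct and yields exactly the stated constant. The steps are all sound:
\begin{itemize}
\item the two Bellman identities;
\item the add-and-subtract of $\gamma\widehat{P}^\pi V_M^\pi$;
\item H\"older's inequality with $\|V_M^\pi\|_\infty\le R_{\max}/(1-\gamma)$;
\item the final rearrangement, which is legitimate because $\Delta<\infty$ and $\gamma<1$.
\end{itemize}
Equivalently, you are bounding the Neumann-series identity $V_M^\pi-V_{\widehat M}^\pi=(I-\gamma\widehat P^\pi)^{-1}\bigl[(r^\pi-\widehat r^\pi)+\gamma(P^\pi-\widehat P^\pi)V_M^\pi\bigr]$.

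The paper gives no argument of its own and simply cites Kearns and Singh (2002). Their simulation lemma is formulated for $T$-step returns under entrywise closeness of transition probabilities, so it does not literally contain this discounted $\ell_1$ statement. Your self-contained derivation is what actually certifies the bound as written.

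Your explicit version also buys something the citation does not. In \cref{lem:orig_to_latent} the two value functions live on different spaces ($\mathcal{S}$ versus $\mathcal{X}$), so the lemma cannot be invoked verbatim there. Your recursion still goes through for $\Delta=\sup_{\mathbf{s}}\bigl|V_{\mathcal{M}_i}^{\pi^\Phi}(\mathbf{s})-V_{\bar{\mathcal{M}}_i^\Phi}^{\pi}(\Phi(\mathbf{s}))\bigr|$, provided you pair the kernel difference $P_i^\Phi-\bar P_i^\Phi$ with the latent value $V_{\bar{\mathcal{M}}_i^\Phi}^{\pi}$, which is bounded since $|\bar R_i^\Phi|\le R_{\max}$. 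There, unlike in the shared-space setting, only one of your two pairings is available.

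One small correction to an aside. Under the two-sided hypothesis $|R|\le R_{\max}$, the values can range over an interval of length up to $2R_{\max}/(1-\gamma)$. Centering therefore yields $\inf_c\|V-c\|_\infty\le R_{\max}/(1-\gamma)$, which is the same constant you already use. The factor-$2$ gain only appears for one-signed rewards such as $R\in[0,R_{\max}]$.
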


\begin{proof}
See \citep{kearnsNearOptimalReinforcementLearning2002}.
\end{proof}

\subsection{Step 1: From the original MDP to the latent MDP induced by $\Phi$}

Fix a task $i$ and suppress $i$ in notation when clear.

\paragraph{Pushforward (latent) transition.}
Given $P_i$ on $\mathcal{S}$ and $\Phi:\mathcal{S}\to\mathcal{X}$, we define the pushforward transition  for any $(\mathbf{s},\mathbf{a})$ as
\begin{align}
P_i^\Phi(\mathbf{x}' \mid \mathbf{s}, \mathbf{a})
\coloneqq
\sum_{\mathbf{s}' \in \mathcal{S}} P_i(\mathbf{s}' \mid \mathbf{s}, \mathbf{a})\,\mathbbm{1}\{\Phi(\mathbf{s}')=\mathbf{x}'\}.
\label{eq:pushforward}
\end{align}

\paragraph{Latent Markov approximation.}
Because $\Phi$ may be many-to-one, the induced process on $\mathbf{x}$ need not be Markov.
As such, we define a Markov approximation using conditional expectation within preimages of $\Phi$, given by
\begin{align}
\bar{P}_i^\Phi(\mathbf{x}' \mid \mathbf{x}, \mathbf{a})
&\coloneqq
\mathbb{E} \left[P_i^\Phi(\mathbf{x}' \mid \mathbf{s},\mathbf{a}) \mid \Phi(\mathbf{s})=\mathbf{x}\right],
\label{eq:latent_markov_P} \\
\bar{R}_i^\Phi(\mathbf{x}, \mathbf{a})
&\coloneqq \mathbb{E} \left[
R_i(\mathbf{s}, \mathbf{a}) \mid \Phi(\mathbf{s})=\mathbf{x}
\right].
\label{eq:latent_markov_R}
\end{align}

\begin{assumption}(Approximately Markov).
There exist $\epsilon_{\mathrm{abs}}^{P},\epsilon_{\mathrm{abs}}^{R}\ge 0$ such that for all $(\mathbf{s}, \mathbf{a})$ with $\mathbf{x} = \Phi(\mathbf{s})$ we have
\begin{align}
\|P_i^\Phi(\cdot \mid \mathbf{s}, \mathbf{a})-\bar{P}_i^\Phi(\cdot \mid \mathbf{x}, \mathbf{a})\|_1
\le \epsilon_{\mathrm{abs}}^{P}, \qquad
|R_i(\mathbf{s}, \mathbf{a}) - \bar{R}_i^\Phi(\mathbf{x}, \mathbf{a})|
\le \epsilon_{\mathrm{abs}}^{R}.
\end{align}
\label{eq:assump_A}
\end{assumption}
Intuitively, states that map to the same $\mathbf{x}$ should induce similar one-step latent transitions and rewards.
Essentially, $\epsilon_{\mathrm{abs}}^{P}$ represents errors in the transition dynamics due to the encoder's $\Phi$ latent space breaking the Markov property.
Similarly, $\epsilon_{\mathrm{abs}}^{R}$, quantifies how much reward-relevant information is lost by $\Phi$.

\paragraph{Latent policy lifting.}
Any latent policy $\pi(\cdot \mid \mathbf{x},\cdot)$ induces a state-space policy
$\pi^\Phi(\mathbf{a} \mid \mathbf{s}, \cdot)=\pi(\mathbf{a} \mid \Phi(\mathbf{s}), \cdot)$.

\begin{lemma}[Original MDP $\rightarrow$ latent Markov MDP]
\label{lem:orig_to_latent}
Fix task $i$ and latent policy $\pi(\mathbf{a} \mid \mathbf{x}, i)$.
Under \cref{eq:assump_A}, the value of $\pi^\Phi$ in $\mathcal{M}_i$ is close to the value of $\pi$ in the latent Markov MDP
$\bar{\mathcal{M}}_i^\Phi=\langle \mathcal{X},\mathcal{A},\bar{R}_i^\Phi,\bar{P}_i^\Phi,\gamma\rangle$:
\begin{align}
\sup_{\mathbf{s} \in \mathcal{S}} \Big|
V_{\mathcal{M}_i}^{\pi^\Phi}(\mathbf{s})
- V_{\bar{\mathcal{M}}_i^\Phi}^{\pi}(\Phi(\mathbf{s}))
\Big| \le \frac{\epsilon_{\mathrm{abs}}^{R}}{1-\gamma}
+ \frac{\gamma R_{\max}}{(1-\gamma)^2}\,\epsilon_{\mathrm{abs}}^{P}.
\label{eq:orig_to_latent_bound}
\end{align}
\end{lemma}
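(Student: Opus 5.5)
The plan is to compare the two value functions directly through their Bellman equations, evaluated along the coupling $\mathbf{x}=\Phi(\mathbf{s})$. We cannot invoke \cref{lem:sim_lemma_tv} verbatim, because $\mathcal{M}_i$ lives on $\mathcal{S}$ while $\bar{\mathcal{M}}_i^\Phi$ lives on $\mathcal{X}$. We therefore rerun its proof with the pulled-back latent value $W(\mathbf{s})\coloneqq V_{\bar{\mathcal{M}}_i^\Phi}^{\pi}(\Phi(\mathbf{s}))$ playing the role of the second value function. Set $V\coloneqq V_{\mathcal{M}_i}^{\pi^\Phi}$, $\bar V\coloneqq V_{\bar{\mathcal{M}}_i^\Phi}^{\pi}$ and $\Delta\coloneqq\sup_{\mathbf{s}}|V(\mathbf{s})-W(\mathbf{s})|$. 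Because $|R_i|\le R_{\max}$ and $\bar R_i^\Phi$ is a conditional average of $R_i$, both values are bounded by $R_{\max}/(1-\gamma)$, so $\Delta<\infty$.

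The first step writes the two Bellman equations at a fixed $\mathbf{s}$ with $\mathbf{x}=\Phi(\mathbf{s})$. Since $\pi^\Phi(\cdot\mid\mathbf{s})=\pi(\cdot\mid\mathbf{x})$, both use the same action distribution:
\begin{align*}
V(\mathbf{s})&=\mathbb{E}_{\mathbf{a}\sim\pi(\cdot\mid\mathbf{x})}\Big[R_i(\mathbf{s},\mathbf{a})+\gamma\,\mathbb{E}_{\mathbf{s}'\sim P_i(\cdot\mid\mathbf{s},\mathbf{a})}V(\mathbf{s}')\Big],\\
W(\mathbf{s})&=\mathbb{E}_{\mathbf{a}\sim\pi(\cdot\mid\mathbf{x})}\Big[\bar R_i^\Phi(\mathbf{x},\mathbf{a})+\gamma\,\mathbb{E}_{\mathbf{x}'\sim\bar P_i^\Phi(\cdot\mid\mathbf{x},\mathbf{a})}\bar V(\mathbf{x}')\Big].
\end{align*}
The second step subtracts these and inserts the intermediate term $\gamma\,\mathbb{E}_{\mathbf{s}'\sim P_i}W(\mathbf{s}')$. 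By the pushforward definition \cref{eq:pushforward}, this term equals $\gamma\,\mathbb{E}_{\mathbf{x}'\sim P_i^\Phi(\cdot\mid\mathbf{s},\mathbf{a})}\bar V(\mathbf{x}')$. The difference $V(\mathbf{s})-W(\mathbf{s})$ then splits into three pieces:
\begin{itemize}
\item the reward gap $R_i-\bar R_i^\Phi$, bounded by $\epsilon_{\mathrm{abs}}^R$ through \cref{eq:assump_A};
\item the term $\gamma\,\mathbb{E}_{\mathbf{s}'}[V(\mathbf{s}')-W(\mathbf{s}')]$, bounded by $\gamma\Delta$;
\item the transition gap $\gamma\big(\mathbb{E}_{P_i^\Phi}\bar V-\mathbb{E}_{\bar P_i^\Phi}\bar V\big)$, bounded by $\gamma\|\bar V\|_\infty\,\epsilon_{\mathrm{abs}}^P\le \gamma R_{\max}\epsilon_{\mathrm{abs}}^P/(1-\gamma)$, using Hölder with $\|\cdot\|_1$ and \cref{eq:assump_A}.
\end{itemize}

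The final step takes the supremum over $\mathbf{s}$, which gives $\Delta\le\epsilon_{\mathrm{abs}}^R+\gamma\Delta+\gamma R_{\max}\epsilon_{\mathrm{abs}}^P/(1-\gamma)$. Rearranging yields exactly \cref{eq:orig_to_latent_bound}.

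The main obstacle is the transition gap, and specifically making the pushforward identity $\mathbb{E}_{\mathbf{s}'\sim P_i}[\bar V(\Phi(\mathbf{s}'))]=\sum_{\mathbf{x}'}P_i^\Phi(\mathbf{x}'\mid\mathbf{s},\mathbf{a})\bar V(\mathbf{x}')$ rigorous. This is what lets us compare the non-Markov latent process against $\bar P_i^\Phi$ at the level of one step while the recursion remains on $\mathcal{S}$. In the finite setting the paper adopts for $\|\cdot\|_1$ this is just a regrouping of a finite sum over the preimages $\Phi^{-1}(\mathbf{x}')$. The conditional expectation defining $\bar P_i^\Phi$ only needs to be some fixed weighting over each preimage, because \cref{eq:assump_A} is assumed pointwise. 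I would also note that if one wanted the sharper constant $\|\bar V\|_\infty$ replaced by half the span, the factor would improve, but the stated bound needs only the crude estimate.
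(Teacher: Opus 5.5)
Your proof is correct. It runs the same simulation-lemma recursion the paper relies on; the difference is in how that recursion is applied.

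The paper's proof cites \cref{lem:sim_lemma_tv} as a black box for the pair $(P_i^\Phi(\cdot\mid\mathbf{s},\mathbf{a}),R_i(\mathbf{s},\mathbf{a}))$ versus $(\bar P_i^\Phi(\cdot\mid\mathbf{x},\mathbf{a}),\bar R_i^\Phi(\mathbf{x},\mathbf{a}))$, treating the first as an MDP on the same state space as the second. Strictly, it is not one. Its kernel is indexed by $\mathbf{s}$ rather than by $\Phi(\mathbf{s})$, so it defines no Markov chain on $\mathcal{X}$, and the lemma as stated (two MDPs on a common state space) does not apply verbatim. You identify exactly this mismatch and repair it by running the recursion on $\mathcal{S}$ with the pulled-back value $W=\bar V\circ\Phi$. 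The pushforward identity then turns the transition term into a one-step comparison of $P_i^\Phi$ and $\bar P_i^\Phi$. This is what actually justifies the paper's one-line citation.

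An alternative repair that keeps the black-box use of the lemma is to lift $\bar{\mathcal{M}}_i^\Phi$ to $\mathcal{S}$. Take a kernel obtained by reweighting $P_i(\cdot\mid\mathbf{s},\mathbf{a})$ inside each preimage $\Phi^{-1}(\mathbf{x}')$ (placing mass arbitrarily where $P_i$ has none) so that its pushforward is $\bar P_i^\Phi(\cdot\mid\Phi(\mathbf{s}),\mathbf{a})$. Its $\ell_1$ distance to $P_i$ is then exactly $\|P_i^\Phi-\bar P_i^\Phi\|_1$, and its value under $\pi^\Phi$ is $\bar V\circ\Phi$. Your direct argument avoids this construction.

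Your version also makes visible that the only boundedness used is $\|\bar V\|_\infty\le R_{\max}/(1-\gamma)$, which rests on $|\bar R_i^\Phi|\le R_{\max}$. This holds for the conditional-average definition you invoke. It is equally, if silently, needed for the paper's citation, since \cref{lem:sim_lemma_tv} assumes both rewards are bounded by $R_{\max}$.

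One small caveat on your closing remark: with only $|R_i|\le R_{\max}$, half the span of $\bar V$ can still be as large as $R_{\max}/(1-\gamma)$. So the half-span refinement improves the constant only under extra structure, such as one-signed rewards.
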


\begin{proof}
Consider two (latent) MDPs under the same policy $\pi$:
one uses the (state-dependent) pushforward kernel $P_i^\Phi(\cdot\mid \mathbf{s},\mathbf{a})$ and reward $R_i(\mathbf{s},\mathbf{a})$,
and the other uses $\bar{P}_i^\Phi(\cdot\mid \mathbf{x},\mathbf{a})$ and $\bar{R}_i^\Phi(\mathbf{x},\mathbf{a})$.
\cref{eq:assump_A}, bounds the one-step reward and transition discrepancies uniformly for any $\mathbf{s}$ with latent state $\mathbf{x}$.
Applying \cref{lem:sim_lemma_tv} with $\epsilon_R=\epsilon_{\mathrm{abs}}^{R}$ and $\epsilon_P=\epsilon_{\mathrm{abs}}^{P}$ yields \eqref{eq:orig_to_latent_bound}.
\end{proof}

\subsection{Step 2: From the latent MDP to the learned world model (true task ID)}

\begin{assumption}(Latent model error with true task).
There exist $\epsilon_{\mathrm{model}}^P,\epsilon_{\mathrm{model}}^R\ge 0$ such that for all $(\mathbf{x},\mathbf{a})$ we have
\begin{align}
\|\widehat{P}_\phi(\cdot \mid \mathbf{x}, \mathbf{a}, i)-\bar{P}_i^\Phi(\cdot \mid \mathbf{x}, \mathbf{a})\|_1
\le \epsilon_{\mathrm{model}}^P, \qquad
|\widehat{R}_\phi(\mathbf{x}, \mathbf{a}, i)-\bar{R}_i^\Phi(\mathbf{x}, \mathbf{a})|
\le \epsilon_{\mathrm{model}}^R.
\end{align}
\label{eq:assump_B}
\end{assumption}

\begin{lemma}[Latent Markov MDP $\rightarrow$ learned world model (true task)]
\label{lem:latent_to_model_true_task}
Fix task $i$ and latent policy $\pi(\mathbf{a}\mid \mathbf{x}, i)$.
Under \cref{eq:assump_B},
\begin{align}
\|V_{\bar{\mathcal{M}}_i^\Phi}^{\pi} - V_{\widehat{\mathcal{M}}_i}^{\pi}\|_\infty
\le \frac{\epsilon_{\mathrm{model}}^R}{1-\gamma}
+ \frac{\gamma R_{\max}}{(1-\gamma)^2} \epsilon_{\mathrm{model}}^P,
\label{eq:latent_to_model_bound}
\end{align}
where $\widehat{\mathcal{M}}_i=\langle \mathcal{X}, \mathcal{A}, \widehat{R}_\phi(\cdot, \cdot, i),\widehat{P}_\phi(\cdot \mid \cdot, \cdot, i),
\gamma \rangle$.
\end{lemma}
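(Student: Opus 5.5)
The plan is a direct application of the simulation lemma (\cref{lem:sim_lemma_tv}) on the shared latent state space $\mathcal{X}$. Both $\bar{\mathcal{M}}_i^\Phi=\langle \mathcal{X},\mathcal{A},\bar{R}_i^\Phi,\bar{P}_i^\Phi,\gamma\rangle$ and $\widehat{\mathcal{M}}_i=\langle \mathcal{X},\mathcal{A},\widehat{R}_\phi(\cdot,\cdot,i),\widehat{P}_\phi(\cdot\mid\cdot,\cdot,i),\gamma\rangle$ are genuine Markov MDPs on the same state and action spaces. The fixed latent policy $\pi(\mathbf{a}\mid\mathbf{x},i)$ is, for fixed $i$, a stationary policy on $\mathcal{X}$. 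Unlike \cref{lem:orig_to_latent}, no lifting or pushforward is needed, so this step is cleaner than Step 1.

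\emph{Checking the hypotheses.} The first thing to verify is the reward-boundedness hypothesis of \cref{lem:sim_lemma_tv} for both MDPs. For $\bar{R}_i^\Phi$ this follows from \cref{eq:latent_markov_R}: it is a conditional expectation of $R_i$, and $|R_i|\le R_{\max}$, so $|\bar{R}_i^\Phi(\mathbf{x},\mathbf{a})|\le R_{\max}$ by Jensen. For $\widehat{R}_\phi(\cdot,\cdot,i)$ there is no such guarantee in general, and I expect this to be the only real obstacle. I would handle it in one of two ways. The first is to state that the learned reward head is clipped to $[-R_{\max},R_{\max}]$; clipping can only reduce $|\widehat{R}_\phi-\bar{R}_i^\Phi|$, since $\bar{R}_i^\Phi$ already lies in that interval, so \cref{eq:assump_B} is preserved. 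The second is to note that in the standard proof of the simulation lemma only one of the two reward functions needs to be bounded by $R_{\max}$. That proof uses the decomposition
\[
V_M^\pi - V_{\widehat{M}}^\pi = (I-\gamma \widehat{P}^\pi)^{-1}\big[(R^\pi-\widehat{R}^\pi) + \gamma (P^\pi-\widehat{P}^\pi)V_M^\pi\big],
\]
which requires only $\|V_M^\pi\|_\infty\le R_{\max}/(1-\gamma)$ for $M=\bar{\mathcal{M}}_i^\Phi$. If I take this second route, I would sketch the decomposition explicitly rather than cite the lemma verbatim.

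\emph{Transferring the one-step bounds.} Next I would note that \cref{eq:assump_B} gives, uniformly over $(\mathbf{x},\mathbf{a})$,
\[
|\bar{R}_i^\Phi-\widehat{R}_\phi(\cdot,\cdot,i)|\le\epsilon_{\mathrm{model}}^R
\quad\text{and}\quad
\|\bar{P}_i^\Phi-\widehat{P}_\phi(\cdot\mid\cdot,\cdot,i)\|_1\le\epsilon_{\mathrm{model}}^P.
\]
These are exactly the hypotheses of \cref{lem:sim_lemma_tv} with $\epsilon_R=\epsilon_{\mathrm{model}}^R$ and $\epsilon_P=\epsilon_{\mathrm{model}}^P$. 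Taking the policy-averaged quantities $R^\pi(\mathbf{x})=\sum_{\mathbf{a}}\pi(\mathbf{a}\mid\mathbf{x},i)R(\mathbf{x},\mathbf{a})$ and $P^\pi$ defined analogously, the same bounds hold after averaging over $\pi$, by convexity of the norms.

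\emph{Concluding.} Invoking \cref{lem:sim_lemma_tv}, or the decomposition above, gives \cref{eq:latent_to_model_bound}. The $\ell_1$ norm of the kernel difference, times $\|V\|_\infty\le R_{\max}/(1-\gamma)$, multiplied by $\gamma$ and by $\|(I-\gamma\widehat{P}^\pi)^{-1}\|_\infty\le 1/(1-\gamma)$, yields the factor $\gamma R_{\max}/(1-\gamma)^2$. This lemma then chains with \cref{lem:orig_to_latent} and an analogous task-inference step via the triangle inequality to give \cref{thm:main_value_bound}.
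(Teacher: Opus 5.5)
Your proposal is correct and follows the paper's argument, whose proof is simply \cref{lem:sim_lemma_tv} applied on $\mathcal{X}$ with $\epsilon_R=\epsilon_{\mathrm{model}}^R$ and $\epsilon_P=\epsilon_{\mathrm{model}}^P$. Your check of the hypothesis $|\widehat{R}_\phi(\cdot,\cdot,i)|\le R_{\max}$ fills a gap the paper passes over; of your two fixes, the resolvent decomposition (which needs only $\|V^\pi_{\bar{\mathcal{M}}_i^\Phi}\|_\infty\le R_{\max}/(1-\gamma)$) proves the lemma as stated, whereas clipping changes $\widehat{\mathcal{M}}_i$ itself.
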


\begin{proof}
Apply \cref{lem:sim_lemma_tv} on $\mathcal{S}=\mathcal{X}$ with $\epsilon_R=\epsilon_{\mathrm{model}}^R$ and $\epsilon_P=\epsilon_{\mathrm{model}}^P$.
\end{proof}

\begin{theorem}[Original task MDP vs learned world model (true task)]
\label{thm:orig_vs_model_true_task}
Fix task $i$ and latent policy $\pi(\mathbf{a}\mid \mathbf{x}, i)$.
Under \cref{eq:assump_A,eq:assump_B},
\begin{align}
\sup_{\mathbf{s}\in\mathcal{S}}
\Big|
V_{\mathcal{M}_i}^{\pi^\Phi}(\mathbf{s})
-
V_{\widehat{\mathcal{M}}_i}^{\pi}(\Phi(\mathbf{s}))
\Big| \le \frac{\epsilon_{\mathrm{abs}}^{R} + \epsilon_{\mathrm{model}}^{R} }{1-\gamma}
+ \frac{\gamma R_{\max}}{(1-\gamma)^2} (\epsilon_{\mathrm{abs}}^{P} + \epsilon_{\mathrm{model}}^{P}).
\label{eq:taskID_main_bound}
\end{align}
\end{theorem}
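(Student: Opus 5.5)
The proof chains \cref{lem:orig_to_latent} and \cref{lem:latent_to_model_true_task} through the triangle inequality. For each $\mathbf{s}\in\mathcal{S}$ with $\mathbf{x}=\Phi(\mathbf{s})$, insert the value of $\pi$ in the intermediate latent Markov MDP $\bar{\mathcal{M}}_i^\Phi$:
\begin{align*}
\Big| V_{\mathcal{M}_i}^{\pi^\Phi}(\mathbf{s}) - V_{\widehat{\mathcal{M}}_i}^{\pi}(\mathbf{x}) \Big|
\le \Big| V_{\mathcal{M}_i}^{\pi^\Phi}(\mathbf{s}) - V_{\bar{\mathcal{M}}_i^\Phi}^{\pi}(\mathbf{x}) \Big|
+ \Big| V_{\bar{\mathcal{M}}_i^\Phi}^{\pi}(\mathbf{x}) - V_{\widehat{\mathcal{M}}_i}^{\pi}(\mathbf{x}) \Big|.
\end{align*}

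\textbf{First term.} It is bounded uniformly in $\mathbf{s}$ by \cref{lem:orig_to_latent}, which holds under \cref{eq:assump_A}. The bound is $\epsilon_{\mathrm{abs}}^R/(1-\gamma) + \gamma R_{\max}\epsilon_{\mathrm{abs}}^P/(1-\gamma)^2$.

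\textbf{Second term.} The second term is at most $\|V_{\bar{\mathcal{M}}_i^\Phi}^{\pi} - V_{\widehat{\mathcal{M}}_i}^{\pi}\|_\infty$ taken over $\mathcal{X}$. Since $\Phi(\mathbf{s})\in\mathcal{X}$, the pointwise value is dominated by the sup. \cref{lem:latent_to_model_true_task}, under \cref{eq:assump_B}, then bounds it by $\epsilon_{\mathrm{model}}^R/(1-\gamma) + \gamma R_{\max}\epsilon_{\mathrm{model}}^P/(1-\gamma)^2$.

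\textbf{Combining.} Adding the two bounds and grouping the reward and transition terms gives exactly the right-hand side of \eqref{eq:taskID_main_bound}. Taking the supremum over $\mathbf{s}$ on the left finishes the argument, because the bound does not depend on $\mathbf{s}$.

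\textbf{Main obstacle: reward bounds in the simulation lemma.} \cref{lem:sim_lemma_tv} needs both MDPs' rewards bounded by $R_{\max}$, which matters when applying \cref{lem:latent_to_model_true_task}.
\begin{itemize}
\item $\bar{R}_i^\Phi$ is a conditional average of $R_i$, so $|\bar{R}_i^\Phi|\le R_{\max}$ automatically.
\item The learned $\widehat{R}_\phi$ is not bounded a priori. I would assume it is clipped to $[-R_{\max},R_{\max}]$. Alternatively, I would note that the lemma's transition term only requires the value function of one of the two MDPs to be bounded by $R_{\max}/(1-\gamma)$. This holds if the simulation lemma is applied with $\bar{\mathcal{M}}_i^\Phi$ as the reference MDP, whose value is then the one paired with the kernel difference.
\end{itemize}

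A secondary subtlety lies in \cref{lem:orig_to_latent}, since the "MDP" with the pushforward kernel is not Markov in $\mathbf{x}$. The theorem uses that lemma as stated, so nothing more is needed here.
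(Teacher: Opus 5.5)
Your proposal is correct and follows the paper's proof exactly: insert $V_{\bar{\mathcal{M}}_i^\Phi}^{\pi}(\Phi(\mathbf{s}))$ via the triangle inequality, then bound the two pieces with \cref{lem:orig_to_latent} and \cref{lem:latent_to_model_true_task}. The paper leaves implicit the point you raise, that the simulation lemma needs $|\widehat{R}_\phi|\le R_{\max}$, and your fix is sound: either clip $\widehat{R}_\phi$, or pair the kernel difference with the bounded value of $\bar{\mathcal{M}}_i^\Phi$.
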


\begin{proof}
By the triangle inequality,
\begin{align}
\Big|
V_{\mathcal{M}_i}^{\pi^\Phi}(\mathbf{s})-V_{\widehat{\mathcal{M}}_i}^{\pi}(\Phi(\mathbf{s}))
\Big| \le \Big| V_{\mathcal{M}_i}^{\pi^\Phi}(\mathbf{s})-V_{\bar{\mathcal{M}}_i^\Phi}^{\pi}(\Phi(\mathbf{s}))
\Big| + \Big|
V_{\bar{\mathcal{M}}_i^\Phi}^{\pi}(\Phi(\mathbf{s}))-V_{\widehat{\mathcal{M}}_i}^{\pi}(\Phi(\mathbf{s})) \Big|.
\end{align}
Bound the first term using \cref{lem:orig_to_latent} and the second term using \cref{lem:latent_to_model_true_task}.
\end{proof}

\subsection{Step 3: Adding the context encoder (unknown task at test time)}

So far, the latent model is conditioned on the true task ID $i$.
At meta-test time $i$ is unknown and the agent instead conditions 
on $\mathbf{z}$ inferred via our context encoder.

\begin{assumption}(Task-inference sufficiency in latent space).
There exist $\epsilon_{\mathrm{task}}^{P},\epsilon_{\mathrm{task}}^{R}\ge 0$ such that for any task $i$ and any $(\mathbf{x},\mathbf{a})$,
if $\mathbf{z}=g_\theta(\mathcal{H})$ is computed from history collected in task $i$, then
\begin{align}
\|\widehat{P}_\phi(\cdot \mid \mathbf{x}, \mathbf{a}, \mathbf{z})-\widehat{P}_\phi(\cdot \mid \mathbf{x}, \mathbf{a}, i)\|_1
\le \epsilon_{\mathrm{task}}^{P},
\qquad |\widehat{R}_\phi(\mathbf{x}, \mathbf{a}, \mathbf{z})-\widehat{R}_\phi(\mathbf{x}, \mathbf{a}, i)| 
\le \epsilon_{\mathrm{task}}^{R}.
\end{align}
\label{eq:assump_C}
\end{assumption}
This is an approximate sufficiency requirement.
That is, we assume that conditioning on $\mathbf{z}$ is nearly as informative as conditioning on $i$ for predicting latent dynamics and rewards.

\begin{lemma}[Extra value error from task inference in the learned latent MDP]
\label{lem:inference_gap}
Fix task $i$ and latent policy $\pi(\mathbf{a}\mid \mathbf{x},\mathbf{z})$.
Under \cref{eq:assump_C},
\begin{align}
\sup_{\mathbf{x}\in\mathcal{X}} \Big|
V_{\widehat{\mathcal{M}}(\mathbf{z})}^{\pi}(\mathbf{x})
- V_{\widehat{\mathcal{M}}_i}^{\pi_i}(\mathbf{x})
\Big| \le \frac{\epsilon_{\mathrm{task}}^{R}}{1-\gamma}
+ \frac{\gamma R_{\max}}{(1-\gamma)^2}\,\epsilon_{\mathrm{task}}^{P},
\label{eq:inference_bound}
\end{align}
where $\widehat{\mathcal{M}}(\mathbf{z})$ uses $(\widehat{P}_\phi(\cdot \mid \mathbf{x}, \mathbf{a}, \mathbf{z}),\widehat{R}_\phi(\mathbf{x}, \mathbf{a}, \mathbf{z}))$ and
$\pi_i(\mathbf{a} \mid \mathbf{x}) \coloneqq \pi(\mathbf{a}\mid \mathbf{x},\mathbf{z})$ is the realized policy after inferring $\mathbf{z}$.
\end{lemma}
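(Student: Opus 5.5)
The plan is to reduce \cref{lem:inference_gap} to a single application of the simulation lemma (\cref{lem:sim_lemma_tv}) on the latent state space $\mathcal{X}$, in the same way that \cref{lem:latent_to_model_true_task} was obtained.

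\textbf{Step 1: fix the policy.} Once the context set has been collected, $\mathbf{z}$ is a fixed vector, as agreed in the preliminaries. So the realized policy $\pi_i(\mathbf{a}\mid\mathbf{x}) = \pi(\mathbf{a}\mid\mathbf{x},\mathbf{z})$ is a stationary Markov policy on $\mathcal{X}$. By definition, $V^{\pi}_{\widehat{\mathcal{M}}(\mathbf{z})}$ is the value of this same stationary policy in the MDP $\langle \mathcal{X},\mathcal{A},\widehat{R}_\phi(\cdot,\cdot,\mathbf{z}),\widehat{P}_\phi(\cdot\mid\cdot,\cdot,\mathbf{z}),\gamma\rangle$. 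The left-hand side of \eqref{eq:inference_bound} therefore compares one stationary policy $\pi_i$ evaluated in two MDPs on the common space $\mathcal{X}\times\mathcal{A}$: $\widehat{\mathcal{M}}(\mathbf{z})$ and $\widehat{\mathcal{M}}_i$.

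\textbf{Step 2: check the simulation lemma's hypotheses.} \cref{eq:assump_C} gives, uniformly over $(\mathbf{x},\mathbf{a})$, a reward gap of at most $\epsilon^R_{\mathrm{task}}$ and a transition $\ell_1$ gap of at most $\epsilon^P_{\mathrm{task}}$. \cref{lem:sim_lemma_tv} also needs both reward functions bounded by $R_{\max}$. I would state this as part of the standing setup: the reward head is clipped or bounded, as is implicit in \cref{thm:main_value_bound}.

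If one only wants to assume that the true rewards are bounded, there is a fallback. Bound the $\widehat{P}$-dependent term by $\gamma\|V\|_\infty\,\epsilon_P/(1-\gamma)$, where $V$ is the value in $\widehat{\mathcal{M}}_i$. Then control $\|V\|_\infty$ by $(R_{\max}+\epsilon^R_{\mathrm{model}}+\epsilon^R_{\mathrm{abs}})/(1-\gamma)$, using \cref{eq:assump_A,eq:assump_B}. This gives the stated bound up to the replacement of $R_{\max}$ by this slightly larger constant.

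\textbf{Step 3: conclude.} Apply \cref{lem:sim_lemma_tv} with $\epsilon_R=\epsilon^R_{\mathrm{task}}$ and $\epsilon_P=\epsilon^P_{\mathrm{task}}$. The $\sup_{\mathbf{x}}$ norm on $\mathcal{X}$ is exactly the $\|\cdot\|_\infty$ norm in that lemma, so \eqref{eq:inference_bound} follows directly.

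For transparency, the underlying calculation is the usual one. Write $\Delta = V_1 - V_2$ for the two values under $\pi_i$. The Bellman equations give
\[
\Delta(\mathbf{x}) = \mathbb{E}_{\mathbf{a}\sim\pi_i}\Big[\widehat{R}_\phi(\mathbf{x},\mathbf{a},\mathbf{z}) - \widehat{R}_\phi(\mathbf{x},\mathbf{a},i) + \gamma\big(\widehat{P}_\phi(\cdot\mid\mathbf{x},\mathbf{a},\mathbf{z}) - \widehat{P}_\phi(\cdot\mid\mathbf{x},\mathbf{a},i)\big)^{\top} V_2 + \gamma\,\widehat{P}_\phi(\cdot\mid\mathbf{x},\mathbf{a},\mathbf{z})^{\top}\Delta\Big].
\]
Taking sup norms yields $\|\Delta\|_\infty \le \epsilon_R + \gamma\,\epsilon_P\,\|V_2\|_\infty + \gamma\|\Delta\|_\infty$. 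Using $\|V_2\|_\infty\le R_{\max}/(1-\gamma)$ and rearranging gives the bound.

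\textbf{Main obstacle.} The delicate point is Step 1: legitimately treating the policy as fixed. In $\widehat{\mathcal{M}}_i$ the policy is evaluated as $\pi_i$, which is still conditioned on $\mathbf{z}$ and not on $i$. The statement's definition of $\pi_i$ makes this consistent, but only if $\mathbf{z}$ does not evolve within the rollout. If $\mathbf{z}_t$ were updated online, the policy would be history-dependent and the simulation lemma as stated would not apply directly. I would therefore state explicitly that $\mathbf{z}$ is frozen, which matches the fixed-context-window convention already adopted in the preliminaries.
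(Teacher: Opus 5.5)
Your proposal is correct and matches the paper's proof, which is exactly one application of the simulation lemma (\cref{lem:sim_lemma_tv}) on $\mathcal{X}$ between $\widehat{\mathcal{M}}_i$ and $\widehat{\mathcal{M}}(\mathbf{z})$, with $\epsilon_R=\epsilon^R_{\mathrm{task}}$ and $\epsilon_P=\epsilon^P_{\mathrm{task}}$ taken from \cref{eq:assump_C} and $\mathbf{z}$ held fixed as stipulated in the preliminaries. The paper leaves implicit the hypothesis $|\widehat{R}_\phi|\le R_{\max}$ on the learned reward head, which the simulation lemma requires; you state it explicitly and also give a fallback, which is a worthwhile addition rather than a different route.
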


\begin{proof}
Apply \cref{lem:sim_lemma_tv} on $\mathcal{S}=\mathcal{X}$ between MDP$_1=\widehat{\mathcal{M}}_i$ and MDP$_2=\widehat{\mathcal{M}}(\mathbf{z})$,
using \cref{eq:assump_C} to bound the one-step reward and transition discrepancies.
\end{proof}

\paragraph{Final combined bound (unknown task, learned context).}
Combining \cref{thm:orig_vs_model_true_task} and \cref{lem:inference_gap} gives, for any latent policy $\pi$,
\begin{align}
\sup_{\mathbf{s} \in \mathcal{S}} \Big|
V_{\mathcal{M}_i}^{\pi^\Phi}(\mathbf{s})
- V_{\widehat{\mathcal{M}}(\mathbf{z})}^{\pi}(\Phi(\mathbf{s}))
\Big| \le \frac{ \epsilon_{\mathrm{abs}}^{R} + \epsilon_{\mathrm{model}}^{R} + \epsilon_{\mathrm{task}}^{R}}{1-\gamma}
+ \frac{\gamma R_{\max}}{(1-\gamma)^2}
(\epsilon_{\mathrm{abs}}^{P} + \epsilon_{\mathrm{model}}^{P} + \epsilon_{\mathrm{task}}^{P}).
\label{eq:final_combined_bound}
\end{align}

\paragraph{Discussion.}
The bounds depend on:
\begin{enumerate}[leftmargin=1.2em,itemsep=0.2em]
    \item how Markov the latent state space $\mathbf{x}=\Phi(\mathbf{s})$ is for predicting future latent states and rewards (\cref{eq:assump_A}),
    \item the learned latent model error (\cref{eq:assump_B}),
    \item the quality/sufficiency of task inference via $\mathbf{z}$ (\cref{eq:assump_C}).
\end{enumerate}
Importantly, none of these require reconstructing $\mathbf{s}$ from $\mathbf{x}$.
Instead, they require that $(\mathbf{x},\mathbf{z})$ preserves predictive information for control.

\clearpage

\section{Further Results} \label{sec:furthere_results}
\Cref{sec:appendix_generalization} provides additional comparisons with baselines, evaluating few-shot and zero-shot adaptation in both in-distribution and out-of-distribution task settings.
\Cref{sec:error_details} provides more detail on estimating different error sources and reports them for additional environments. 
\Cref{sec:extra_metrics} shows task representation metrics for additional environments.  
\Cref{sec:appendix_disentanglement} reports detailed disentanglement metrics across methods, highlighting the effects of temporal consistency and contrastive learning objectives on task representation learning. \Cref{sec:plan} reports online planing with the trained world model. 
\Cref{sec:dreamerv3} compares our contextual world model with DreamerV3 \citep{dreamerv3}, a state-of-the-art model-based RL method based on recurrent state-space models (RSSMs), to assess generalization to unseen tasks from history alone. 
It also compares temporal consistency for task representation to MetaDT \citep{wang2024metadt}, a state-of-the-art transformer-based OMRL method. 

Ablation studies are presented in 
\cref{sec:ce_architecture} (different architectures for the context encoder), \cref{sec:latent_formulation} (latent space formulations), \cref{sec:contrastive} (contrastive learning and task representation $\mathbf{z}$),\Cref{sec:multi_step_tc} (Training horizon), \cref{sec:scaling} (model size and parameter count), \cref{sec:offline_rl} (choice of offline RL algorithm), and \cref{sec:bounding_z} (bounding strategies for $\mathbf{z}$).

Finally, \cref{sec:computation_cost} analyzes the computational cost of jointly training the context encoder and world model relative to prior work, demonstrating that the contextual world model is computationally efficient while achieving improved performance.

\subsection{Generalization to New Tasks} \label{sec:appendix_generalization}
\Cref{tab:result_metaworld} summarizes the results across all environments in the Meta-World ML1 benchmark.
We collected datasets for these environments at varying difficulty levels using the same RL algorithm (DroQ) with identical hyperparameters.
In certain environments, such as Assembly, the datasets lack successful trajectories.
Consequently, various OMRL methods fail to learn the corresponding tasks.
In the majority of environments, temporal consistency for joint training of the context encoder and world model outperforms the baselines, achieving higher success rates during few-shot adaptation. 

\cref{tab:zeroshot_id} summarize zero-shot adaptation performance for in-distribution tasks and \cref{tab:zeroshot_ood} summarize zero-shot adaptation performance for out-of-distribution tasks. 
For environments where some variation factors in the out-of-distribution tasks interpolate those seen during training (\eg, [Cheetah, Finger, Walker]-speed), the performance on out-of-distribution tasks is relatively close to in-distribution performance.
In contrast, when out-of-distribution generalization requires extrapolating beyond the training variation factors (\eg, [Cheetah, Finger, Walker]-LS, Ant-Dir), a larger performance gap between in-distribution and out-of-distribution tasks is observed.
Temporal consistency performs more consistently compared to baselines when generalizing to out-of-distribution tasks, where it significantly outperforms the baselines (two-sided t-test, $p<0.05$) for both in-distribution and out-of-distribution zero-shot adaptation, respectively, indicating improved generalization to OOD tasks.
These results indicate that latent temporal consistency can improve performance on in-distribution tasks by converting the observation space to a latent space and can increase generalization to out-of-distribution tasks by encouraging the context encoder to capture latent dynamics. 

\Cref{tab:ML_new_env} reports generalization to new environments in ML10 and ML45 settings. Temporal consistency improves generalization on both training and test environments, although generalization to entirely new environments remains challenging for all methods.
Increasing the diversity of training environments improves test performance.

\clearpage
\begin{table}[t]
    \caption{
        Few-shot in-distribution performance on all environments in Meta-world benchmarks. Average success rate over 6 random seeds, $\pm$ represents $95\%$ confidence intervals. 
        Since we use DroQ with the same hyperparameters to collect datasets across all environments, some environments may lack successful trajectories.  
        \textbf{Bold} indicates the highest value with statistical significance according to the t-test with p-value $< 0.05$.
    }
    \begin{center}
    \scriptsize 
    \begin{tabular}{lllllll}
    \hline
     Env                       & FOCAL           & CSRO            & UNICORN-SUP     & UNICORN-SS      & AR + InfoNCE    & TC + InfoNCE    \\
    \hline
     Assembly                  & $0.0 \pm 0.0$   & $1.7 \pm 3.3$   & $0.0 \pm 0.0$   & $0.0 \pm 0.0$   & $0.0 \pm 0.0$   & $0.0 \pm 0.0$   \\
     Basketball                & $0.0 \pm 0.0$   & $0.0 \pm 0.0$   & $0.0 \pm 0.0$   & $0.0 \pm 0.0$   & $0.0 \pm 0.0$   & $\mathbf{5.0} \pm 4.4$   \\
     Bin-picking               & $1.7 \pm 3.3$   & $6.7 \pm 4.1$   & $1.7 \pm 3.3$   & $0.0 \pm 0.0$   & $0.0 \pm 0.0$   & $0.0 \pm 0.0$   \\
     Box-close                 & $5.0 \pm 4.4$   & $16.7 \pm 10.9$ & $8.3 \pm 6.0$   & $3.3 \pm 6.5$   & $5.0 \pm 6.7$   & $21.7 \pm 6.0$  \\
     Button-press              & $93.3 \pm 6.5$  & $96.7 \pm 4.1$  & $91.7 \pm 10.6$ & $93.3 \pm 9.7$  & $86.7 \pm 10.9$ & $100.0 \pm 0.0$ \\
     Button-press-topdown      & $78.3 \pm 19.9$ & $78.3 \pm 7.9$  & $75.0 \pm 19.4$ & $90.0 \pm 7.2$  & $80.0 \pm 13.4$ & $\mathbf{100.0} \pm 0.0$ \\
     Button-press-topdown-wall & $21.7 \pm 6.0$  & $30.0 \pm 10.1$ & $43.3 \pm 13.1$ & $40.0 \pm 8.8$  & $33.3 \pm 21.3$ & $\mathbf{93.3} \pm 6.5$  \\
     Button-press-wall         & $91.7 \pm 6.0$  & $85.0 \pm 8.4$  & $80.0 \pm 16.0$ & $88.3 \pm 3.3$  & $85.0 \pm 4.4$  & $98.3 \pm 3.3$  \\
     Coffee-button             & $98.3 \pm 3.3$  & $98.3 \pm 3.3$  & $98.3 \pm 3.3$  & $98.3 \pm 3.3$  & $98.3 \pm 3.3$  & $100.0 \pm 0.0$ \\
     Coffee-pull               & $2.5 \pm 4.9$   & $0.0\pm0.0$     & $1.7 \pm 3.3$   & $0.0 \pm 0.0$   & $0.0 \pm 0.0$   & $5.0 \pm 4.4$   \\
     Coffee-push               & $15.0 \pm 12.1$ & $16.7 \pm 8.3$  & $11.7 \pm 7.9$  & $20.0 \pm 18.2$ & $11.7 \pm 6.0$  & $30.0 \pm 8.8$  \\
     Dial-turn                 & $90.0 \pm 0.0$  & $93.3 \pm 6.5$  & $75.0 \pm 9.8$  & $88.3 \pm 11.8$ & $81.7 \pm 6.0$  & $91.7 \pm 6.0$  \\
     Disassemble               & $5.0 \pm 4.4$   & $16.7 \pm 6.5$  & $18.0 \pm 7.3$  & $13.3 \pm 6.5$  & $28.3 \pm 6.0$  & $25.0 \pm 6.7$  \\
     Door-close                & $100.0 \pm 0.0$ & $100.0 \pm 0.0$ & $96.7 \pm 6.5$  & $95.0 \pm 9.8$  & $93.3 \pm 9.7$  & $100.0 \pm 0.0$ \\
     Door-lock                 & $90.0 \pm 5.1$  & $78.3 \pm 19.9$ & $86.7 \pm 4.1$  & $86.7 \pm 9.7$  & $70.0 \pm 13.4$ & $88.3 \pm 9.4$  \\
     Door-open                 & $96.7 \pm 6.5$  & $96.7 \pm 4.1$  & $60.0 \pm 18.9$ & $93.3 \pm 4.1$  & $78.3 \pm 12.8$ & $100.0 \pm 0.0$ \\
     Door-unlock               & $91.7 \pm 6.0$  & $95.0 \pm 4.4$  & $85.0 \pm 4.4$  & $90.0 \pm 8.8$  & $91.7 \pm 6.0$  & $100.0 \pm 0.0$ \\
     Drawer-close              & $98.3 \pm 3.3$  & $100.0 \pm 0.0$ & $98.3 \pm 3.3$  & $98.3 \pm 3.3$  & $96.7 \pm 4.1$  & $100.0 \pm 0.0$ \\
     Drawer-open               & $35.0 \pm 8.4$  & $28.3 \pm 11.8$ & $55.0 \pm 21.9$ & $26.7 \pm 12.0$ & $36.7 \pm 13.1$ & $56.7 \pm 8.3$  \\
     Faucet-close              & $81.7 \pm 9.4$  & $90.0 \pm 7.2$  & $71.7 \pm 11.8$ & $96.7 \pm 4.1$  & $71.7 \pm 18.5$ & $98.3 \pm 3.3$  \\
     Faucet-open               & $81.7 \pm 13.8$ & $88.3 \pm 10.6$ & $76.7 \pm 23.6$ & $91.7 \pm 7.9$  & $88.3 \pm 7.9$  & $86.7 \pm 9.7$  \\
     Hammer                    & $35.0 \pm 12.1$ & $31.7 \pm 11.8$ & $38.3 \pm 14.7$ & $36.7 \pm 14.9$ & $38.3 \pm 11.8$ & $25.0 \pm 11.0$ \\
     Hand-insert               & $18.3 \pm 9.4$  & $23.3 \pm 12.0$ & $20.0 \pm 8.8$  & $11.7 \pm 6.0$  & $21.7 \pm 6.0$  & $26.7 \pm 9.7$  \\
     Handle-press              & $91.7 \pm 6.0$  & $93.3 \pm 6.5$  & $93.3 \pm 8.3$  & $93.3 \pm 4.1$  & $91.7 \pm 6.0$  & $96.7 \pm 4.1$  \\
     Handle-press-side         & $96.7 \pm 4.1$  & $83.3 \pm 6.5$  & $95.0 \pm 4.4$  & $96.7 \pm 6.5$  & $90.0 \pm 7.2$  & $100.0 \pm 0.0$ \\
     Handle-pull               & $48.3 \pm 12.8$ & $40.0 \pm 10.1$ & $51.7 \pm 14.7$ & $43.3 \pm 12.0$ & $48.3 \pm 14.7$ & $\mathbf{73.3} \pm 4.1$  \\
     Handle-pull-side          & $71.7 \pm 13.8$ & $73.3 \pm 15.7$ & $55.0 \pm 11.0$ & $73.3 \pm 17.3$ & $45.0 \pm 13.1$ & $80.0 \pm 7.2$  \\
     Lever-pull                & $15.0 \pm 11.0$ & $30.0 \pm 7.2$  & $21.7 \pm 9.4$  & $21.7 \pm 10.6$ & $25.0 \pm 11.0$ & $30.0 \pm 8.8$  \\
     Peg-insert-side           & $23.3 \pm 14.9$ & $15.0 \pm 4.4$  & $35.0 \pm 12.1$ & $23.3 \pm 13.1$ & $23.3 \pm 12.0$ & $31.7 \pm 11.8$ \\
     Peg-unplug-side           & $66.7 \pm 9.7$  & $80.0 \pm 15.2$ & $68.3 \pm 7.9$  & $65.0 \pm 6.7$  & $56.7 \pm 16.5$ & $73.3 \pm 10.9$ \\
     Pick-out-of-hole          & $23.3 \pm 6.5$  & $30.0 \pm 16.0$ & $1.7 \pm 3.3$   & $20.0 \pm 11.3$ & $13.3 \pm 8.3$  & $6.7 \pm 6.5$   \\
     Pick-place                & $0.0 \pm 0.0$   & $0.0 \pm 0.0$   & $1.7 \pm 2.2$   & $0.0 \pm 0.0$   & $0.0 \pm 0.0$   & $1.7 \pm 2.2$   \\
     Pick-place-wall           & $0.0 \pm 0.0$   & $3.3 \pm 4.1$   & $0.0 \pm 0.0$   & $1.7 \pm 3.3$   & $0.0 \pm 0.0$   & $0.0 \pm 0.0$   \\
     Plate-slide               & $58.3 \pm 13.8$ & $50.0 \pm 7.2$  & $50.0 \pm 11.3$ & $58.3 \pm 13.8$ & $55.0 \pm 18.8$ & $56.7 \pm 6.5$  \\
     Plate-slide-back          & $10.0 \pm 7.2$  & $10.0 \pm 7.2$  & $10.0 \pm 8.8$  & $26.7 \pm 23.6$ & $26.7 \pm 10.9$ & $36.7 \pm 24.1$ \\
     Plate-slide-back-side     & $74.0 \pm 14.7$ & $70.0 \pm 29.9$ & $70.0 \pm 10.1$ & $55.0 \pm 18.8$ & $75.0 \pm 19.4$ & $58.3 \pm 13.8$ \\
     Plate-slide-side          & $55.0 \pm 18.8$ & $61.7 \pm 7.9$  & $70.0 \pm 14.3$ & $58.3 \pm 12.8$ & $56.7 \pm 4.1$  & $67.5 \pm 14.7$ \\
     Push                      & $11.7 \pm 6.0$  & $15.0 \pm 8.4$  & $8.3 \pm 7.9$   & $10.0 \pm 5.1$  & $21.7 \pm 10.6$ & $\mathbf{35.0} \pm 8.4$  \\
     Push-back                 & $16.7 \pm 4.4$  & $18.3 \pm 8.6$  & $16.7 \pm 5.6$  & $16.7 \pm 7.4$  & $15.0 \pm 4.4$  & $13.3 \pm 2.8$  \\
     Push-wall                 & $26.7 \pm 9.7$  & $30.0 \pm 18.9$ & $40.0 \pm 14.3$ & $26.7 \pm 12.0$ & $8.3 \pm 9.4$   & $\mathbf{70.0} \pm 11.3$ \\
     Reach                     & $6.7 \pm 6.5$   & $1.7 \pm 3.3$   & $5.0 \pm 4.4$   & $1.7 \pm 3.3$   & $3.3 \pm 4.1$   & $6.7 \pm 6.5$   \\
     Reach-wall                & $8.3 \pm 6.0$   & $10.0 \pm 5.1$  & $1.7 \pm 3.3$   & $2.0 \pm 3.9$   & $3.3 \pm 4.1$   & $10.0 \pm 7.2$  \\
     Shelf-place               & $8.3 \pm 6.0$   & $0.0 \pm 0.0$   & $6.7 \pm 6.5$   & $6.7 \pm 6.5$   & $5.0 \pm 4.4$   & $10.0 \pm 7.2$  \\
     Soccer                    & $18.3 \pm 7.9$  & $11.7 \pm 12.8$ & $16.7 \pm 9.7$  & $21.7 \pm 9.4$  & $23.3 \pm 10.9$ & $\mathbf{45.0} \pm 11.0$ \\
     Stick-pull                & $0.0 \pm 0.0$   & $0.0 \pm 0.0$   & $3.3 \pm 6.5$   & $0.0 \pm 0.0$   & $0.0 \pm 0.0$   & $3.3 \pm 4.1$   \\
     Stick-push                & $1.7 \pm 3.3$   & $8.3 \pm 7.9$   & $23.3 \pm 10.9$ & $6.7 \pm 6.5$   & $11.7 \pm 9.4$  & $0.0 \pm 0.0$   \\
     Sweep                     & $63.3 \pm 13.1$ & $76.7 \pm 13.1$ & $85.0 \pm 9.8$  & $68.3 \pm 11.8$ & $63.3 \pm 8.3$  & $85.0 \pm 9.8$  \\
     Sweep-into                & $66.7 \pm 14.0$ & $70.0 \pm 5.1$  & $80.0 \pm 14.3$ & $63.3 \pm 8.3$  & $75.0 \pm 12.1$ & $83.3 \pm 6.5$  \\
     Window-close              & $98.3 \pm 3.3$  & $91.7 \pm 12.8$ & $98.3 \pm 3.3$  & $95.0 \pm 6.7$  & $93.3 \pm 6.5$  & $100.0 \pm 0.0$ \\
     Window-open               & $85.0 \pm 6.7$  & $83.3 \pm 4.1$  & $88.3 \pm 7.9$  & $78.3 \pm 11.8$ & $68.3 \pm 12.8$ & $96.7 \pm 6.5$  \\
    \hline
    \end{tabular}
    \end{center}
    \label{tab:result_metaworld}
\end{table}

\begin{table}[t]
    \caption{
        Few-shot in-distribution performance on MuJoCo and Contextual-DMC benchmarks.
        \textbf{Temporal consistency for task representation learning improves in-distribution generalization}. 
        Average returns over 6 random seeds, $\pm$ represents $95\%$ confidence intervals. 
        \textbf{Bold} indicates the highest value with statistical significance according to the t-test with p-value $< 0.05$.
    }
    \label{tab:result_mujoco}
    \begin{center}
    \scriptsize
\begin{tabular}{lllllll}
\hline
 Env             & FOCAL            & CSRO             & UNICORN-SUP       & UNICORN-SS       & AR + InfoNCE      & TC + InfoNCE     \\
\hline
 Ant-dir         & $804.0 \pm 35.0$ & $798.0 \pm 39.3$ & $429.0 \pm 30.6$  & $812.9 \pm 24.5$ & $653.5 \pm 52.5$  & $\mathbf{863.1} \pm 36.2$ \\
 Cheetah-LS      & $852.2 \pm 26.4$ & $831.2 \pm 60.0$ & $832.0 \pm 52.9$  & $795.9 \pm 39.4$ & $930.3 \pm 9.4$   & $\mathbf{944.8} \pm 4.9$  \\
 Cheetah-speed   & $515.7 \pm 62.7$ & $576.3 \pm 78.2$ & $586.4 \pm 40.7$  & $554.3 \pm 71.6$ & $591.3 \pm 109.1$ & $\mathbf{764.1} \pm 39.2$ \\
 Finger-LS       & $880.8 \pm 39.2$ & $869.2 \pm 46.6$ & $753.2 \pm 56.3$  & $885.6 \pm 14.1$ & $913.0 \pm 15.2$  & $\mathbf{968.0} \pm 5.5$  \\
 Finger-speed    & $609.5 \pm 25.0$ & $631.6 \pm 56.1$ & $526.9 \pm 36.8$  & $515.9 \pm 25.5$ & $528.3 \pm 129.9$ & $\mathbf{967.4} \pm 2.0$  \\
 Hopper-mass     & $572.7 \pm 13.0$ & $476.4 \pm 68.7$ & $442.5 \pm 119.2$ & $540.9 \pm 36.5$ & $551.2 \pm 12.4$  & $\mathbf{590.6} \pm 3.5$  \\
 Walker-LS       & $875.0 \pm 51.5$ & $899.2 \pm 41.8$ & $914.2 \pm 23.0$  & $880.7 \pm 54.7$ & $917.2 \pm 19.1$  & $934.6 \pm 20.1$ \\
 Walker-friction & $532.3 \pm 46.7$ & $521.8 \pm 34.4$ & $539.1 \pm 13.7$  & $485.5 \pm 57.8$ & $526.1 \pm 47.8$  & $563.6 \pm 33.5$ \\
 Walker-speed    & $768.9 \pm 30.1$ & $771.2 \pm 20.0$ & $518.6 \pm 55.3$  & $730.7 \pm 48.6$ & $452.1 \pm 55.9$  & $\mathbf{835.7} \pm 37.3$ \\
\hline
\end{tabular}
    \end{center}
\end{table}

\begin{table}[t]
    \caption{
        Zero-shot in-distribution performance on MuJoCo and Contextual-DMC benchmarks. 
        \textbf{Temporal consistency for task representation learning improves in-distribution generalization}. 
        Average returns over 6 random seeds, $\pm$ represents $95\%$ confidence intervals.
        \textbf{Bold} indicates the highest value with statistical significance according to the t-test with p-value $< 0.05$.
    }
    \begin{center}
    \scriptsize
\begin{tabular}{lllllll}
\hline
 Env             & FOCAL            & CSRO              & UNICORN-SUP       & UNICORN-SS       & AR + InfoNCE      & TC + InfoNCE     \\
\hline
 Ant-dir         & $678.4 \pm 40.3$ & $699.3 \pm 26.6$  & $366.6 \pm 41.6$  & $668.1 \pm 46.4$ & $502.5 \pm 86.5$  & $\mathbf{755.7} \pm 43.1$ \\
 Cheetah-LS      & $825.3 \pm 36.6$ & $828.3 \pm 34.9$  & $841.9 \pm 50.9$  & $794.5 \pm 44.5$ & $860.8 \pm 25.3$  & $\mathbf{935.0} \pm 11.9$ \\
 Cheetah-speed   & $447.3 \pm 73.0$ & $556.5 \pm 31.3$  & $447.9 \pm 49.1$  & $490.2 \pm 82.1$ & $532.5 \pm 148.2$ & $\mathbf{706.4} \pm 33.1$ \\
 Finger-LS       & $863.0 \pm 58.3$ & $897.8 \pm 41.8$  & $824.0 \pm 30.6$  & $885.3 \pm 49.6$ & $931.3 \pm 8.7$   & $\mathbf{972.0} \pm 5.0$  \\
 Finger-speed    & $746.7 \pm 49.3$ & $773.7 \pm 48.5$  & $614.4 \pm 45.0$  & $671.9 \pm 54.6$ & $638.7 \pm 112.5$ & $\mathbf{943.3} \pm 8.4$  \\
 Hopper-mass     & $535.0 \pm 33.3$ & $450.8 \pm 79.2$  & $491.6 \pm 145.6$ & $533.7 \pm 38.3$ & $536.2 \pm 16.1$  & $566.0 \pm 13.5$ \\
 Walker-LS       & $898.9 \pm 30.3$ & $882.5 \pm 100.3$ & $889.9 \pm 36.9$  & $900.0 \pm 49.2$ & $871.9 \pm 32.6$  & $937.2 \pm 9.9$  \\
 Walker-friction & $522.0 \pm 32.8$ & $503.7 \pm 39.5$  & $476.7 \pm 32.4$  & $522.1 \pm 21.6$ & $510.1 \pm 81.6$  & $578.2 \pm 13.6$ \\
 Walker-speed    & $653.1 \pm 99.4$ & $767.1 \pm 31.7$  & $513.1 \pm 48.7$  & $598.6 \pm 54.4$ & $506.5 \pm 43.3$  & $785.7 \pm 50.5$ \\
\hline
\end{tabular}
    \end{center}
    \label{tab:zeroshot_id}
\end{table}

\begin{table}[t]
    \caption{
        Zero-shot out-of-distribution performance on MuJoCo and Contextual-DMC benchmarks. 
        \textbf{Temporal consistency for task representation learning improves out-of-distribution generalization}.  
        Average returns over 6 random seeds, $\pm$ represents $95\%$ confidence intervals.
        \textbf{Bold} indicates the highest value with statistical significance according to the t-test with p-value $< 0.05$.
    }    
    \begin{center}
    \scriptsize
\begin{tabular}{lllllll}
\hline
 Env             & FOCAL             & CSRO             & UNICORN-SUP        & UNICORN-SS        & AR + InfoNCE      & TC + InfoNCE     \\
\hline
 Ant-dir         & $368.8 \pm 64.2$  & $399.2 \pm 63.9$ & $-211.4 \pm 195.5$ & $405.9 \pm 38.5$  & $284.8 \pm 103.4$ & $410.7 \pm 36.9$ \\
 Cheetah-LS      & $826.6 \pm 20.6$  & $813.9 \pm 28.1$ & $795.8 \pm 44.3$   & $806.1 \pm 28.6$  & $849.3 \pm 23.6$  & $865.5 \pm 20.4$ \\
 Cheetah-speed   & $607.8 \pm 160.4$ & $603.5 \pm 96.5$ & $554.8 \pm 75.9$   & $598.8 \pm 103.4$ & $622.8 \pm 155.7$ & $756.0 \pm 92.5$ \\
 Finger-LS       & $786.8 \pm 32.6$  & $762.7 \pm 57.9$ & $691.5 \pm 54.8$   & $816.1 \pm 47.7$  & $846.1 \pm 18.4$  & $\mathbf{886.7} \pm 11.8$ \\
 Finger-speed    & $771.3 \pm 39.0$  & $822.8 \pm 32.3$ & $675.3 \pm 66.0$   & $709.6 \pm 43.6$  & $598.3 \pm 87.6$  & $\mathbf{948.1} \pm 9.3$  \\
 Hopper-mass     & $547.3 \pm 13.3$  & $543.6 \pm 32.1$ & $463.9 \pm 133.6$  & $550.7 \pm 11.3$  & $423.1 \pm 22.6$  & $583.4 \pm 4.1$  \\
 Walker-LS       & $658.0 \pm 41.0$  & $611.5 \pm 31.6$ & $649.9 \pm 50.7$   & $657.6 \pm 41.7$  & $705.7 \pm 36.2$  & $\mathbf{788.0} \pm 27.8$ \\
 Walker-friction & $473.7 \pm 36.5$  & $475.1 \pm 22.5$ & $435.2 \pm 53.8$   & $484.6 \pm 25.0$  & $475.6 \pm 33.2$  & $474.1 \pm 30.3$ \\
 Walker-speed    & $659.6 \pm 120.1$ & $767.2 \pm 24.0$ & $535.5 \pm 44.2$   & $623.7 \pm 98.0$  & $576.9 \pm 44.0$  & $772.5 \pm 56.5$ \\
\hline
\end{tabular}
    \label{tab:zeroshot_ood}
    \end{center}
\end{table}

\begin{table}[t]
    \caption{{
    \textbf{Generalization to new environments:} Temporal consistency for task representation learning demonstrates better generalization to unseen environments.
    Increasing the number of training environments can improve generalization to testing environments. 
    Average success rate over 6 random seeds, $\pm$ represents $95\%$ confidence intervals. 
    \textbf{Bold} indicates the highest value with statistical significance according to the t-test with p-value $< 0.05$.
    }}
    \label{tab:ML_new_env}
    \begin{center}
    \begin{scriptsize}
    \begin{tabular}{lllllll}
    \hline
     Env        & CSRO           & FOCAL          & UNICORN-SUP      & UNICORN-SS    & AR + InfoNCE   & TC + InfoNCE   \\
    \hline
     ML10-Train & $43.3 \pm 7.6$ & $43.3 \pm 6.5$ & $25.4 \pm 3.7$ & $43.3 \pm 3.0$ & $44.4 \pm 4.2$ & $\mathbf{83.5} \pm 6.1$ \\
     ML10-Test  & $3.3 \pm 4.1$  & $5.0 \pm 5.1$  & $2.5 \pm 3.3$  & $4.2 \pm 4.7$  & $5.5 \pm 4.9$  & $\mathbf{15.0} \pm 3.9$  \\
     ML45-Train & $32.8 \pm 2.4$ & $32.2 \pm 2.9$ & $30.7 \pm 1.7$ & $35.9 \pm 2.4$ & $42.4 \pm 2.1$ & $\mathbf{58.7} \pm 4.0$ \\
     ML45-Ood   & $8.3 \pm 6.0$  & $3.3 \pm 4.1$  & $18.3 \pm 1.2$ & $18.3 \pm 2.6$ & $5.0 \pm 4.4$  & $\mathbf{24.0} \pm 6.7$  \\
    \hline
    \end{tabular}
    \end{scriptsize}
    \end{center}
\end{table}

\clearpage
\subsection{Error Decomposition} \label{sec:error_details}
To approximately quantify each error source, we train separate networks using privileged information (the true task ID $i$, which is unavailable in the offline meta-RL setting). 
Note that while the theoretical analysis in \cref{sec:theory_main} defines each error term as $\sup \|\cdot\|_1$, we approximate these quantities empirically using the $\ell_2$-norm $\|\cdot\|_2$.
\paragraph{Abstraction error} We train a network $P_{\text{abs}}(\mathbf{x}_{t+1} \mid \mathbf{s}_t, \mathbf{a}_t, i)$ to predict the next latent vector from the current state, action, and task ID. The abstraction error is estimated as the discrepancy between this prediction and the encoded next state, where $\mathbf{x}_{t+1}:= \Phi(\mathbf{s}_{t+1})$ is the next latent state induced by the observation encoder and $\bar{P}_i^{\phi}$ is the true latent dynamics:

\begin{equation*}
    \hat{\epsilon}^P_{\text{abs}} = \left\| \underbrace{P_{\text{abs}}(\cdot \mid \mathbf{s}_t, \mathbf{a}_t, i)}_{\approx P_i^{\Phi}(\cdot \mid \mathbf{s}_t, \mathbf{a}_t)} - \underbrace{\Phi(\mathbf{s}_{t+1})}_{\approx \bar{P}_i^{\phi}(\cdot \mid \Phi(\mathbf{s}_t), \mathbf{a}_t)} \right\|_2.
\end{equation*}

\paragraph{World modeling error} We train a second network $P_{\text{model}}(\mathbf{x}_{t+1} \mid \mathbf{x}_t, \mathbf{a}_t, i)$ to predict the next latent vector from the current latent vector, action, and task ID. The world modeling error is estimated as the discrepancy between the predicted next latent state and the induced next latent state by the observation encoder $\Phi(\mathbf{s}_{t+1})$:

\begin{equation*}
\hat{\epsilon}^P_{\text{model}} = \left\| \underbrace{P_{\text{model}}(\cdot \mid \mathbf{x}_t, \mathbf{a}_t, i)}_{=\hat{P}_{\phi}(\cdot \mid \mathbf{x}_t, \mathbf{a}_t, i)} - \underbrace{\Phi(\mathbf{s}_{t+1})}_{\approx \bar{P}_i^{\phi}(\cdot \mid \Phi(\mathbf{s}_t), \mathbf{a}_t)} \right\|_2.
\end{equation*}

\paragraph{Task error} Finally, we compare the predictions of the learned world model with those of our contextual world model, which conditions on the learned task representation $\mathbf{z}$ instead of the true task ID $i$:

\begin{equation*}
\hat{\epsilon}^P_{\text{task}} = \left\| \underbrace{D_\phi(\cdot \mid \mathbf{x}_t, \mathbf{a}_t, \mathbf{z})}_{=\hat{P}_{\phi}(\cdot \mid \mathbf{x}_t, \mathbf{a}_t, \mathbf{z})} - \underbrace{P_{\text{model}}(\cdot \mid \mathbf{x}_t, \mathbf{x}_t, i)}_{=\hat{P}_{\phi}(\cdot \mid \mathbf{x}_t, \mathbf{a}_t, i)} \right\|_2.
\end{equation*}
All networks are trained simultaneously on the same transitions, and all error terms are evaluated on the same transitions.  
\cref{fig:mdp_errors} illustrates the evolution of these errors during training. 
We observe the following:
\begin{itemize}
    \item Jointly training the context encoder and the world model generally reduces all three error terms over the course of training.
    \item The estimated task error is generally larger than both the estimated abstraction and modeling errors, highlighting the importance of task representation learning.
    \item The sum of the decomposed errors upper-bounds the temporal consistency loss
    \begin{equation*}
    \hat{\epsilon}^P_{\text{abs}} + \hat{\epsilon}^P_{\text{model}} + \hat{\epsilon}^P_{\text{task}} \;\ge\; \left\| \mathbf{x}_{t+1} - D_\phi(\cdot \mid \mathbf{x}_t, \mathbf{a}_t, \mathbf{z}) \right\|_2.
    \end{equation*}
\end{itemize}

\begin{figure}[t]
    \centering
    \includegraphics[width=\linewidth]{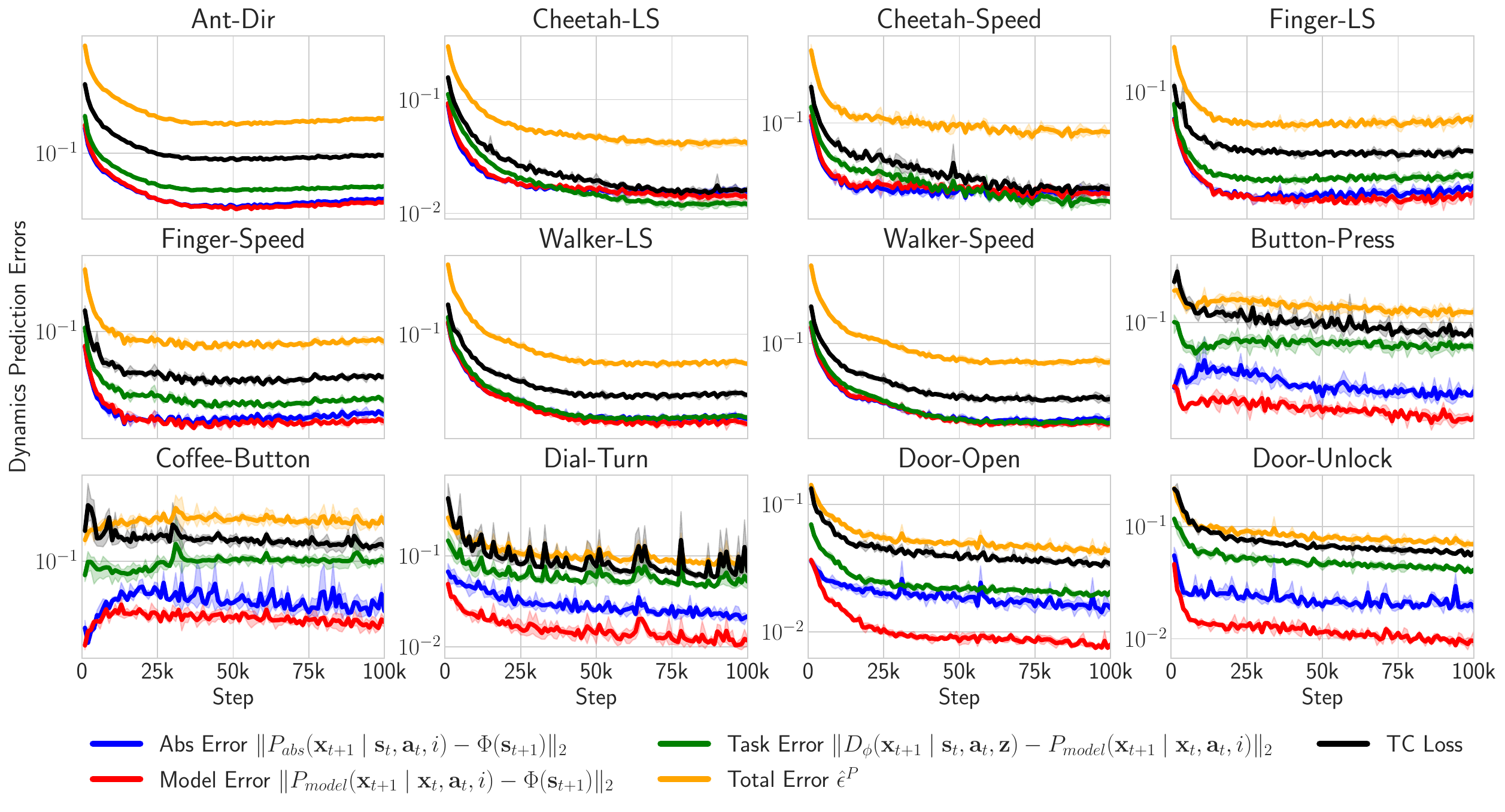}
    \caption{
    Estimating different error sources.
    \textbf{Joint training of the context encoder and world model reduces all error terms, with task error dominating; their sum upper-bounds the temporal consistency loss.}
    Abstraction error arises from mapping states to a latent space, model error reflects world model accuracy, and task error measures how well the context encoder identifies the task. 
    Separate networks ($P_{\text{abs}}$ and $P_\text{model}$) with privileged information (task ID $i$) were trained for quantifying these error terms. 
    Shaded regions denote 95\% confidence intervals over three random seeds. 
    }
    \label{fig:mdp_errors}
\end{figure}

\clearpage
\subsection{Task Representation Metrics} \label{sec:extra_metrics}
\cref{fig:extra_rep_metrics} illustrates representation metrics (feature rank, rank, and dormant ratio) along with performance on 10 MetaWorld environments. 
\begin{figure}[h]
    \centering
    \includegraphics[width=0.95\linewidth]{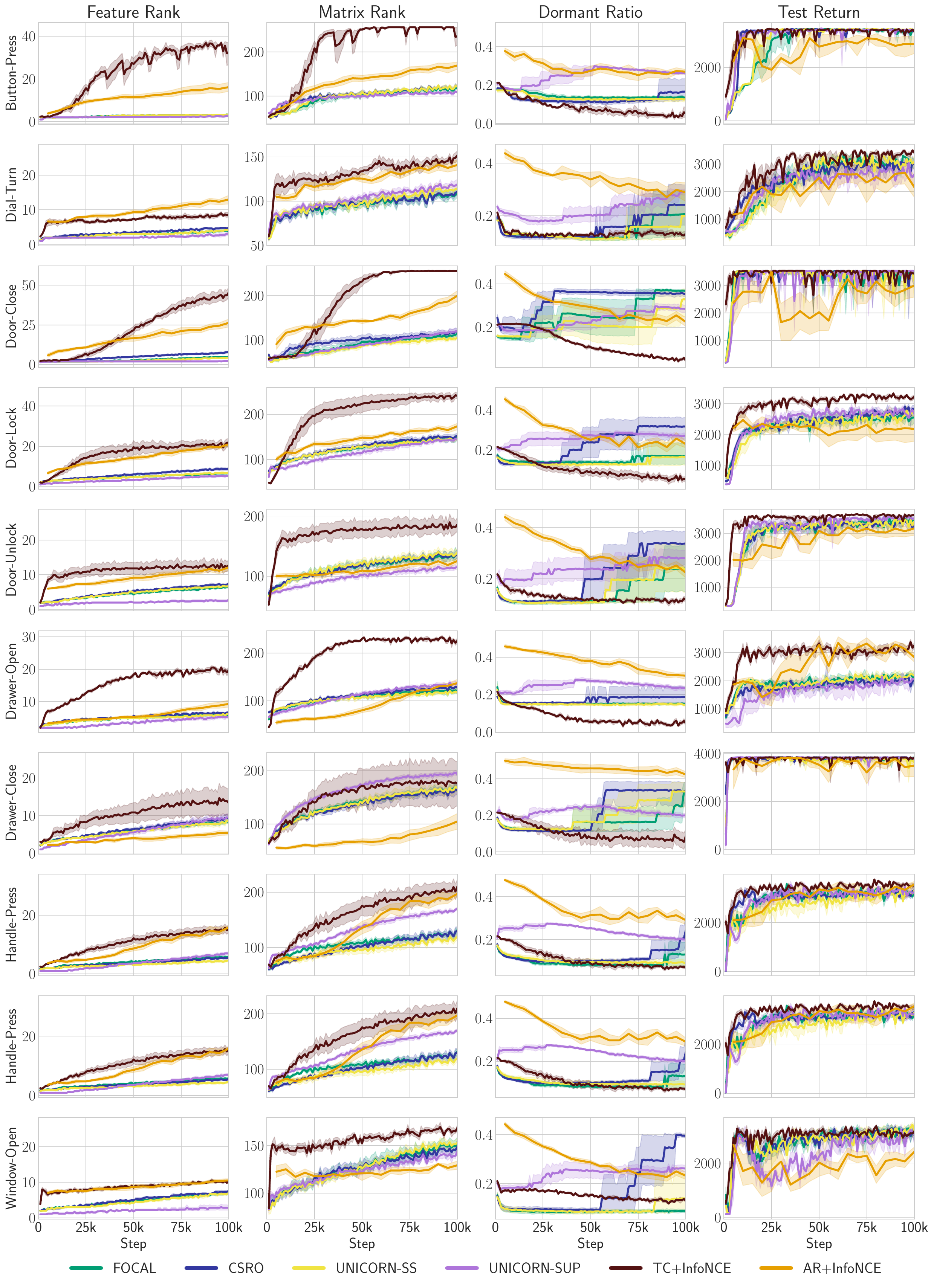}
    \caption{
        Representation metrics (feature rank, rank, and dormant ratio) of the context encoder across 10 environments from the Meta-World benchmark.
        \textbf{Temporal consistency maintains a low dormant neuron ratio and a high matrix rank while learning more diverse features (higher feature rank).}
        The shaded area represents the 95\% confidence interval across six random seeds.
    }
    \label{fig:extra_rep_metrics}
\end{figure}

\clearpage
\subsection{Disentanglement Metrics} \label{sec:appendix_disentanglement}
Previous work often relies on t-SNE visualizations to assess task separation, but such visualizations are qualitative and sensitive to hyperparameters.
We instead use disentanglement metrics, which measure how well latent dimensions align with underlying task variation factors.
We report DCI~\citep{DCI} metrics (disentanglement, completeness, informativeness) and InfoMEC~\citep{InfoMEC} metrics (modularity, explicitness, compactness).
Together, these metrics capture whether a task representation is structured, informative, and aligned with true task factors.

\cref{tab:DCI}, \cref{tab:disentanle_finger} and \cref{tab:disentanle_walker} illustrate disentanglement metrics for Cheetah-LS Finger-LS and Walker-LS environments, respectively. 
We observe similar trends as \cref{tab:DCI}, where training the context encoder based on temporal consistency (TC) results in higher disentanglement than training the context encoder with reconstruction (UNICORN-SUP), and including contrastive learning can improve task distinguishability (informativeness, explicitness). 
However, while the reconstruction objective  (UNICORN-SUP) results in higher disentanglement than contrastive objectives (FOCAL, InfoNCE) in Cheetah-LS (\cref{tab:DCI}), in [Finger, Walker]-LS reconstruction objective results in a lower disentanglement score.  

\begin{table}[ht]
    \caption{
        Disentanglement metrics (DCI, InfoMEC) for the Cheetah-length-speed (LS) environment. 
        \textbf{Latent temporal consistency disentangles the variation factors more effectively}, while \textbf{contrastive learning enhances task distinguishability}, reflected in informativeness and explicitness. 
        \textit{TC} denotes training the context encoder solely with temporal consistency~(\cref{eq:worldmodel_objective}); \textit{FOCAL} represents contrastive learning, \textit{UNICORN-SUP} indicates training with reconstruction (decoder), and \textit{AR} indicates auto-regression, training the context encoder with backpropagation through time in state space.
        Average metrics over 6 random seeds, $\pm$ represents $95\%$ confidence intervals.
        }
    \begin{center}
    \scriptsize
    \begin{tabular}{lllllll}
    \hline
                  & Disentanglement   & Completeness    & Informativeness   & Modularity      & Explicitness    & Compactness     \\
    \hline
     FOCAL        & $0.33 \pm 0.11$   & $0.25 \pm 0.08$ & $0.82 \pm 0.04$   & $0.77 \pm 0.04$ & $0.75 \pm 0.05$ & $0.02 \pm 0.01$ \\
     CSRO         & $0.31 \pm 0.05$   & $0.32 \pm 0.05$ & $0.83 \pm 0.04$   & $0.75 \pm 0.03$ & $0.78 \pm 0.03$ & $0.02 \pm 0.00$ \\
     UNICORN-SS   & $0.32 \pm 0.06$   & $0.29 \pm 0.09$ & $0.83 \pm 0.04$   & $0.76 \pm 0.01$ & $0.76 \pm 0.04$ & $0.03 \pm 0.01$ \\
     UNICORN-SUP  & $0.36 \pm 0.09$   & $0.23 \pm 0.06$ & $0.54 \pm 0.06$   & $0.76 \pm 0.05$ & $0.71 \pm 0.02$ & $0.18 \pm 0.07$ \\
     AR + InfoNCE & $0.41 \pm 0.04$   & $0.39 \pm 0.05$ & $0.78 \pm 0.01$   & $0.74 \pm 0.02$ & $0.81 \pm 0.01$ & $0.15 \pm 0.03$ \\
     \hline 
     TC           & $0.45 \pm 0.06$   & $0.49 \pm 0.07$ & $0.87 \pm 0.03$   & $0.70 \pm 0.02$ & $0.86 \pm 0.02$ & $0.23 \pm 0.02$ \\
     TC + InfoNCE & $0.50 \pm 0.05$   & $0.46 \pm 0.05$ & $0.89 \pm 0.02$   & $0.74 \pm 0.03$ & $0.87 \pm 0.01$ & $0.24 \pm 0.04$ \\
    \hline
    \end{tabular}
    \end{center}
    \label{tab:DCI}
\end{table}

\begin{table}[ht]
    \caption{
        Disentanglement metrics (DCI, InfoMEC) for the Finger Length/Speed environment. 
        Average metrics over 6 random seeds, $\pm$ represents $95\%$ confidence intervals.
    }
    \scriptsize
    \begin{center}
    \begin{tabular}{lllllll}
    \hline
                  & Disentanglement   & Completeness    & Informativeness   & Modularity      & Explicitness    & Compactness     \\
    \hline
     FOCAL        & $0.36 \pm 0.06$   & $0.30 \pm 0.07$ & $0.70 \pm 0.01$   & $0.83 \pm 0.02$ & $0.80 \pm 0.01$ & $0.04 \pm 0.01$ \\
     CSRO         & $0.36 \pm 0.03$   & $0.38 \pm 0.08$ & $0.71 \pm 0.02$   & $0.82 \pm 0.02$ & $0.79 \pm 0.01$ & $0.04 \pm 0.01$ \\
     UNICORN-SS   & $0.33 \pm 0.05$   & $0.31 \pm 0.08$ & $0.69 \pm 0.02$   & $0.79 \pm 0.02$ & $0.78 \pm 0.01$ & $0.05 \pm 0.01$ \\
     UNICORN-SUP  & $0.25 \pm 0.05$   & $0.34 \pm 0.06$ & $0.57 \pm 0.01$   & $0.85 \pm 0.02$ & $0.73 \pm 0.01$ & $0.10 \pm 0.04$ \\
     AR + InfoNCE & $0.23 \pm 0.05$   & $0.40 \pm 0.05$ & $0.63 \pm 0.02$   & $0.87 \pm 0.02$ & $0.74 \pm 0.01$ & $0.08 \pm 0.01$ \\
     \hline 
     TC           & $0.41 \pm 0.03$   & $0.43 \pm 0.02$ & $0.70 \pm 0.01$   & $0.82 \pm 0.02$ & $0.85 \pm 0.01$ & $0.13 \pm 0.01$ \\
     TC + InfoNCE & $0.46 \pm 0.06$   & $0.43 \pm 0.04$ & $0.82 \pm 0.03$   & $0.82 \pm 0.02$ & $0.87 \pm 0.02$ & $0.13 \pm 0.01$ \\
    \hline
    \end{tabular}
    \end{center}
    \label{tab:disentanle_finger}
\end{table}

\begin{table}[ht]
    \caption{
        Disentanglement metrics (DCI, InfoMEC) for the Walker Length/Speed environment. 
        Average metrics over 6 random seeds, $\pm$ represents $95\%$ confidence intervals.
    }
    \begin{center}
    \scriptsize
    \begin{tabular}{lllllll}
    \hline
                  & Disentanglement   & Completeness    & Informativeness   & Modularity      & Explicitness    & Compactness     \\
    \hline
     FOCAL        & $0.33 \pm 0.07$   & $0.31 \pm 0.06$ & $0.83 \pm 0.04$   & $0.69 \pm 0.03$ & $0.82 \pm 0.02$ & $0.03 \pm 0.01$ \\
     CSRO         & $0.42 \pm 0.08$   & $0.41 \pm 0.09$ & $0.79 \pm 0.03$   & $0.72 \pm 0.03$ & $0.82 \pm 0.02$ & $0.03 \pm 0.01$ \\
     UNICORN-SS   & $0.38 \pm 0.07$   & $0.31 \pm 0.05$ & $0.84 \pm 0.03$   & $0.73 \pm 0.03$ & $0.84 \pm 0.03$ & $0.04 \pm 0.01$ \\
     UNICORN-SUP  & $0.20 \pm 0.03$   & $0.18 \pm 0.07$ & $0.38 \pm 0.04$   & $0.80 \pm 0.04$ & $0.64 \pm 0.02$ & $0.06 \pm 0.02$ \\
     AR + InfoNCE & $0.30 \pm 0.04$   & $0.21 \pm 0.05$ & $0.46 \pm 0.04$   & $0.82 \pm 0.06$ & $0.67 \pm 0.02$ & $0.10 \pm 0.04$ \\
     \hline 
     TC           & $0.39 \pm 0.05$   & $0.27 \pm 0.06$ & $0.75 \pm 0.04$   & $0.77 \pm 0.05$ & $0.86 \pm 0.03$ & $0.14 \pm 0.02$ \\
     TC + InfoNCE & $0.40 \pm 0.09$   & $0.30 \pm 0.09$ & $0.84 \pm 0.04$   & $0.79 \pm 0.04$ & $0.88 \pm 0.02$ & $0.14 \pm 0.01$ \\
    \hline
    \end{tabular}
    \end{center}
    \label{tab:disentanle_walker}
\end{table}

\clearpage
\subsection{Planning with the World Model} \label{sec:plan}

We have shown that jointly training the context encoder and world model improves the performance of offline policy optimization. In the previous experiments, however, we used the world model solely for representation learning. In this section, we investigate whether the trained world model, conditioned on the context encoder, can also be used for online planning.

A key challenge in planning with a learned Q-function in the offline setting is value overestimation for out-of-distribution (OOD) actions. Since the actions selected by the planner may follow a different distribution from those observed in the offline dataset, the learned Q-function can assign erroneously high values to such actions. To mitigate this issue, we use conservative Q-learning (CQL \citep{cql}) to train the Q-function, encouraging more conservative value estimates for OOD actions during planning.

\cref{fig:planning_horizon} illustrates the zero-shot adaptation performance under different planning horizons using online planning. Similar to TD-MPC2 \citep{hansen2024tdmpc2} and DC-MPC \citep{dcmpc}, we use model-predictive path-integral (MPPI) control for planning while updating the task representation $\mathbf{z}$ at each step. We consider two planning variants: planning based solely on the predicted rewards and planning that additionally uses the Q-function to estimate the value of the final state at the end of the planning horizon.

The Q-function can complement short-horizon planning by providing an estimate of the long-term return beyond the planning horizon. In reward-only planning, candidate trajectories are evaluated based only on the rewards predicted within the finite planning horizon, which may favor actions that yield high immediate rewards but lead to unfavorable states in the longer term. In contrast, incorporating the Q-value of the final predicted state provides a bootstrap estimate of the future return beyond the planning horizon. This allows the planner to account for long-term consequences without explicitly extending the planning horizon, potentially improving both decision quality and computational efficiency.

Overall, increasing the planning horizon improves zero-shot adaptation performance, at the cost of increased computational complexity. Longer horizons allow the planner to account for future outcomes over a greater number of steps in the learned latent space. Furthermore, incorporating conservative Q-value estimates into the planner generally improves performance, particularly for shorter planning horizons, where the Q-function provides a more informative estimate of the long-term consequences that cannot be captured within the limited planning horizon.

\begin{figure}[h]
    \centering
    \includegraphics[width=0.95\linewidth]{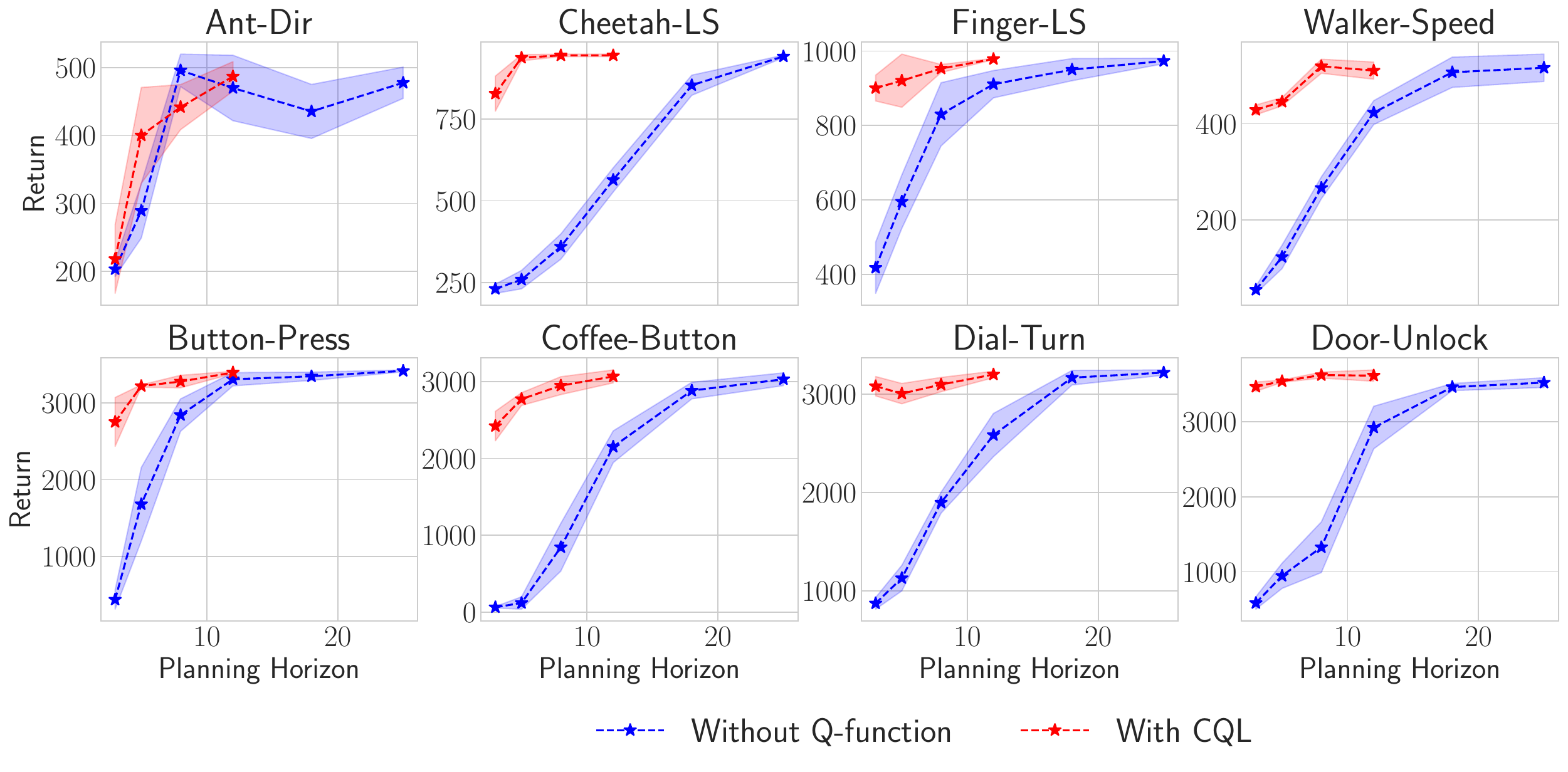}
    \caption{
        Planning with the trained world model and context encoder for zero-shot adaptation. \textbf{Increasing the planning horizon improves performance, while incorporating conservative Q-estimates further improves performance, particularly at shorter planning horizons.} Results are averaged over 6 random seeds; shaded areas represent the standard deviation. }
    \label{fig:planning_horizon}
\end{figure}

\newpage
\subsection{Comparison to DreamerV3 and MetaDT} \label{sec:dreamerv3}

DreamerV3 \citep{dreamerv3} is a model-based RL method that employs a Recurrent State-Space Model (RSSM \citep{ProbabilisticRSSM}) for latent dynamics while jointly predicting rewards, observations, and terminations. 
Its latent space includes both continuous and discrete variables.
The policy and value function are optimized within the world model. 
We use a public PyTorch implementation\footnote{https://github.com/NM512/dreamerv3-torch} with default hyperparameters.

MetaDT \citep{wang2024metadt} employs a transformer-based architecture to directly predict actions, without explicitly modeling environment dynamics or the reward function. 
It extends Prompt-DT \citep{prompting_dt} by conditioning on a sequence of task representations, where the task encoder is an RNN trained via reconstruction.

\cref{tab:dreamer} reports the zero-shot performance on in-distribution tasks. 
For DreamerV3, we reset the RSSM hidden state at the initial timestep during meta-testing. 
DreamerV3 struggles to generalize to new tasks in OMRL settings, particularly in environments where optimal policies differ significantly across tasks. 
For example, in [Cheetah, Finger, Walker]-speed environments, the agent must move both forward and backward at different speeds, and in Ant-dir environments, the agent must move in different directions. 
On the other hand, DreamerV3 shows better generalization in environments where optimal task-specific policies are more similar, such as Hopper-mass and Walker-friction. 
We also hypothesize that the policy may exploit inaccuracies in the world model, since the world model is trained solely on static datasets. 
The policy is optimized to maximize expected return under the world model’s predictions, without any penalty for acting in uncertain or poorly modeled regions. 
In online RL settings, the policy’s actions would be executed in the real environment, and the world model would be updated accordingly; however, this corrective mechanism is absent in the offline OMRL scenario.

Overall, jointly training context encoder and world model based on temporal consistency and contrastive learning outperforms Meta-DT in 5/9 environments, matches performance in 3/7, and is outperformed only in Walker-Speed. 

\begin{table}[ht]
    \caption{{
    \textbf{DreamerV3 fails to generalize in OMRL settings} and \textbf{our contextual world model outperforms MetaDT. 
    Zero-shot generalization to in-distribution tasks}. 
    Average returns/success rates over 3 random seeds, $\pm$ represents $95\%$ confidence intervals.
    }}
    \begin{center}
    \begin{small}
    \begin{tabular}{llll}
    \hline
     Environment     & TC + InfoNCE (ours)      & DreamerV3         & MetaDT            \\
    \hline
     Ant-dir         & $\mathbf{649.9 \pm 50.7}$ & $-3.6 \pm 3.0$    & $412.3 \pm 87.4$  \\
     Cheetah-LS      & $\mathbf{936.5 \pm 10.8}$ & $584.8 \pm 53.2$  & $533.8 \pm 113.8$ \\
     Cheetah-speed   & $\mathbf{664.4 \pm 51.7}$ & $178.7 \pm 20.8$  & $439.2 \pm 55.8$  \\
     Finger-LS       & $966.4 \pm 5.5$  & $438.6 \pm 85.3$  & $967.3 \pm 14.2$  \\
     Finger-speed    & $946.9 \pm 9.6$  & $187.0 \pm 72.7$  & $942.1 \pm 4.2$   \\
     Hopper-mass     & $\mathbf{579.9 \pm 9.5}$  & $555.1 \pm 3.6$   & $566.1 \pm 12.3$  \\
     Walker-friction & $\mathbf{580.6 \pm 4.7}$  & $523.6 \pm 41.1$  & $540.2 \pm 21.3$  \\
     Walker-LS       & $939.7 \pm 8.3$  & $643.6 \pm 46.0$  & $929.3 \pm 11.0$  \\
     Walker-speed    & $705.8 \pm 70.9$ & $149.9 \pm 10.3$  & $\mathbf{797.7 \pm 57.1}$  \\

    \hline
    \end{tabular}
    \end{small}
    \end{center}
    \label{tab:dreamer}
\end{table}

\newpage
\subsection{Ablation: Architecture of Context Encoder} \label{sec:ce_architecture}
Similar to previous OMRL methods, we conduct our experiments on deterministic MDPs where the reward function and transition dynamics differ across tasks. 
On these benchmarks, a common assumption is task-transition correspondence, where :
\begin{equation} \label{eq:task_assumption}
    \forall s, a \quad P_i(. \mid s, a)= P_j(.\mid s, a), R_i(. \mid s, a)=R_j(.\mid s, a) \iff \mathcal{M}_i =\mathcal{M}_j
\end{equation}  
\cref{eq:task_assumption} indicates that the context encoder can identify the task from a single transition. 
Based on this assumption, the context encoder identifies the underlying task by taking the expectation over the sequence of task representations according to \cref{eq:expectation_z}. 

\Cref{tab:architecture} compares the performance of different architectures in the context encoder. 
\textit{Attention} applies self-attention without positional encoding to compute the task representation 
via weighted averaging, operating on the same non-temporal context set as \textit{MLP}. 
By contrast, \textit(RNN) and \textit(Transformer) process sequential context batches (sequence length 5) during training, and leverage the full available sequence at inference time.

We have noticed that training an RNN-based context encoder with a temporal consistency objective of the world model can hinder performance. 
Using attention to perform weighted averaging did not provide a noticeable benefit, yet it reduced the performance in 2 environments. 
Using a transformer-based context encoder improved generalization in \textit{Finger-Speed}, we speculate that this is because when the planar robot is not in contact with the rotating body, its actions do not affect the angular velocity and thus do not influence the reward, making individual transitions less informative for task inference.

\begin{table}[h]
    \caption{
        Ablation study on the context encoder architecture for zero-shot in-distribution generalization. 
        While RNN-based encoders underperform, \textbf{transformer-based encoders perform comparably to MLPs and only show clear gains in Finger-Speed, suggesting that averaging representations from single transitions is sufficient in most environments.} Results are averaged over six seeds; $\pm$ denotes 95\% confidence intervals, and \textbf{bold} indicates statistically significant best results ($p < 0.05$, t-test).
    }
    \label{tab:architecture}
    \centering
    \scriptsize
    \begin{tabular}{lllll}
    \hline
    Env             & MLP                        & Attention         & GRU                & Transformer                \\
    \hline
    Ant-Dir         & $649.9 \pm 50.7$           & $453.5 \pm 106.1$ & $230.9 \pm 5.5$    & $593.1 \pm 60.2$           \\
    Cheetah-LS      & $936.5 \pm 10.8$           & $930.5 \pm 11.2$  & $755.4 \pm 0.0$    & $880.5 \pm 11.2$           \\
    Cheetah-Speed   & $664.4 \pm 51.7$           & $669.5 \pm 72.6$  & $427.8 \pm 25.0$   & $677.5 \pm 43.2$           \\
    Finger-LS       & $966.4 \pm 5.5$            & $967.0 \pm 10.3$  & $857.6 \pm 81.0$   & $970.0 \pm 13.1$           \\
    Finger-Speed    & $946.9 \pm 9.6$            & $693.6 \pm 146.7$ & $568.3 \pm 147.8$  & $\mathbf{970.6 \pm 3.1}$   \\
    Walker-LS       & $939.7 \pm 8.3$            & $941.5 \pm 9.5$   & $826.2 \pm 24.7$   & $937.5 \pm 8.4$            \\
    Walker-Speed    & $705.8 \pm 70.9$           & $698.7 \pm 49.2$  & $621.7 \pm 28.7$   & $715.4 \pm 53.4$           \\
    \hline
    Button-Press    & $100.0 \pm 0.0$            & $96.7 \pm 4.1$    & $35.0 \pm 29.4$    & $100.0 \pm 0.0$            \\
    Coffee-Button   & $100.0 \pm 0.0$            & $100.0 \pm 0.0$   & $88.3 \pm 6.0$     & $100.0 \pm 0.0$            \\
    Dial-Turn       & $\mathbf{100.0 \pm 0.0}$   & $83.3 \pm 10.9$   & $88.7 \pm 5.7$     & $90.0 \pm 7.1$             \\
    Door-Close      & $98.3 \pm 3.3$             & $98.3 \pm 3.3$    & $100.0 \pm 0.0$    & $98.3 \pm 3.3$             \\
    Door-Lock       & $91.7 \pm 10.6$            & $100.0 \pm 0.0$   & $90.0 \pm 0.0$     & $100.0 \pm 0.0$            \\
    Door-Open       & $98.3 \pm 3.3$             & $100.0 \pm 0.0$   & $95.0 \pm 7.0$     & $100.0 \pm 0.0$            \\
    Door-Unlock     & $100.0 \pm 0.0$            & $98.3 \pm 3.3$    & $100.0 \pm 0.0$    & $100.0 \pm 0.0$            \\
    Faucet-Close    & $98.3 \pm 3.3$             & $100.0 \pm 0.0$   & $100.0 \pm 0.0$    & $100.0 \pm 0.0$            \\
    Handle-Press    & $93.3 \pm 9.7$             & $98.3 \pm 3.3$    & $50.0 \pm 14.1$    & $100.0 \pm 0.0$            \\
    \hline
    \end{tabular}
\end{table}

\clearpage
\subsection{Ablation: Latent Space Formulation} \label{sec:latent_formulation}
We isolate the effect of different latent space formulations (classification vs.\ regression, continuous vs.\ discrete, bounded vs.\ unbounded) on the performance.
Prior work on RL uses continuous latent spaces with regression losses~\citep{tdmpc}, cosine similarity~\citep{tcrl}, or bounded and discretized representations~\citep{iqrl, hansen2024tdmpc2, dcmpc}.

\Cref{fig:ablation_worldmodels} shows that simply bounding or discretizing the latent space does not improve performance in our setting.
In contrast, formulating temporal consistency as a classification problem leads to better results.
This suggests that the cross-entropy loss, rather than discretization itself, is the main driver of performance gains as reported by \citep{dcmpc}.
Similarly, \citep{classify_value} demonstrates the advantages of classification compared to regression in training value functions.
We provide a detailed comparison in \cref{tab:world_model_ablation}.

\begin{figure}[ht]
    \centering
    \includegraphics[width=\linewidth]{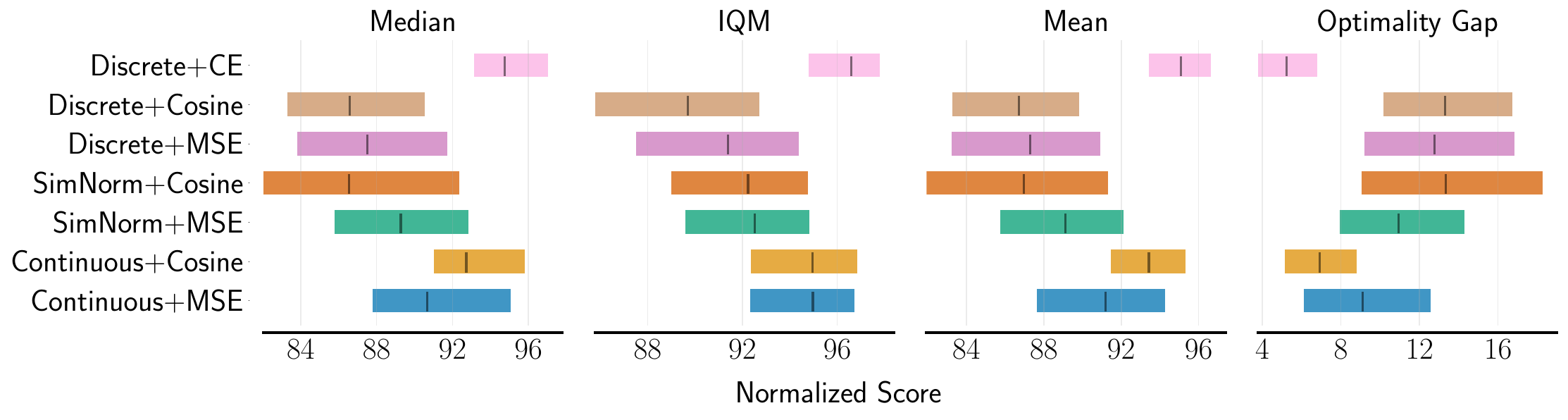}
    \caption{
        Different world model formulations: IQM and optimality gap of the normalized return for different world modeling methods, evaluated on 9 environments with 6 random seeds per environment. 
        \textbf{The main advantage of discretizing the latent space is due to classification} loss (cross entropy). 
        Bounding or discretizing the latent space alone does not improve performance. 
    }
    \label{fig:ablation_worldmodels}
\end{figure}

\begin{table}[ht]
    \centering
    \caption{Ablation on latent space, different choice on latent space, and temporal consistency loss function. Average returns over 6 random seeds, $\pm$ represents $95\%$ confidence intervals. }
    \label{tab:latent_formulation}
    \scriptsize
    \begin{tabular}{llllllll}
    \hline
     Environment     & Cont MSE    & Cont Cosine   & SimNorm MSE      & SimNorm Cosine   & Dis MSE      & Dis Cosine   & Dis CE      \\
    \hline
     Ant-dir         & $841.6 \pm 25.9$  & $865.6 \pm 41.4$    & $851.5 \pm 30.9$ & $854.5 \pm 34.0$ & $784.0 \pm 54.0$  & $800.7 \pm 65.9$  & $863.1 \pm 36.2$ \\
     Cheetah-LS      & $938.0 \pm 9.2$   & $895.8 \pm 49.8$    & $898.3 \pm 25.1$ & $905.6 \pm 25.0$ & $920.0 \pm 15.2$  & $897.0 \pm 33.2$  & $944.8 \pm 4.9$  \\
     Cheetah-speed   & $645.3 \pm 127.1$ & $735.6 \pm 27.6$    & $572.2 \pm 49.6$ & $453.9 \pm 96.0$ & $647.2 \pm 63.3$  & $648.5 \pm 49.7$  & $764.1 \pm 39.2$ \\
     Finger-LS       & $972.2 \pm 6.1$   & $971.6 \pm 3.6$     & $971.6 \pm 4.8$  & $973.7 \pm 2.3$  & $974.7 \pm 2.2$   & $976.8 \pm 2.5$   & $968.0 \pm 5.5$  \\
     Finger-speed    & $949.4 \pm 2.9$   & $932.3 \pm 18.9$    & $915.0 \pm 63.1$ & $947.3 \pm 25.3$ & $929.8 \pm 39.7$  & $968.0 \pm 1.9$   & $967.4 \pm 2.0$  \\
     Hopper-mass     & $578.8 \pm 17.8$  & $557.0 \pm 31.8$    & $563.6 \pm 11.7$ & $513.2 \pm 71.7$ & $406.0 \pm 116.0$ & $429.5 \pm 61.0$  & $590.6 \pm 3.5$  \\
     Walker-friction & $559.9 \pm 20.6$  & $573.7 \pm 13.4$    & $502.1 \pm 35.5$ & $521.8 \pm 49.4$ & $549.4 \pm 28.0$  & $451.5 \pm 40.0$  & $563.6 \pm 33.5$ \\
     Walker-LS       & $927.8 \pm 23.2$  & $930.1 \pm 56.5$    & $942.0 \pm 9.1$  & $931.9 \pm 31.3$ & $944.9 \pm 4.6$   & $938.6 \pm 11.6$  & $934.6 \pm 20.1$ \\
     Walker-speed    & $813.0 \pm 30.9$  & $847.6 \pm 23.4$    & $843.1 \pm 20.6$ & $824.1 \pm 31.9$ & $814.6 \pm 28.7$  & $831.3 \pm 22.2$  & $835.7 \pm 37.3$ \\
    \hline
     Button-press  & $100.0 \pm 0.0$  & $98.3 \pm 3.3$      & $100.0 \pm 0.0$ & $96.7 \pm 6.5$   & $98.3 \pm 3.3$  & $98.3 \pm 3.3$    & $100.0 \pm 0.0$ \\
     Coffee-button & $100.0 \pm 0.0$  & $100.0 \pm 0.0$     & $100.0 \pm 0.0$ & $100.0 \pm 0.0$  & $100.0 \pm 0.0$ & $100.0 \pm 0.0$   & $100.0 \pm 0.0$ \\
     Door-close    & $100.0 \pm 0.0$  & $100.0 \pm 0.0$     & $100.0 \pm 0.0$ & $100.0 \pm 0.0$  & $100.0 \pm 0.0$ & $98.3 \pm 3.3$    & $100.0 \pm 0.0$ \\
     Door-open     & $100.0 \pm 0.0$  & $98.3 \pm 3.3$      & $100.0 \pm 0.0$ & $96.7 \pm 4.1$   & $98.3 \pm 3.3$  & $98.3 \pm 3.3$    & $100.0 \pm 0.0$ \\
     Door-unlock   & $95.0 \pm 6.7$   & $96.7 \pm 4.1$      & $98.3 \pm 3.3$  & $96.7 \pm 4.1$   & $98.3 \pm 3.3$  & $100.0 \pm 0.0$   & $100.0 \pm 0.0$ \\
     Drawer-close  & $98.3 \pm 3.3$   & $100.0 \pm 0.0$     & $100.0 \pm 0.0$ & $98.3 \pm 3.3$   & $100.0 \pm 0.0$ & $100.0 \pm 0.0$   & $100.0 \pm 0.0$ \\
     Drawer-open   & $45.0 \pm 8.4$   & $40.0 \pm 8.8$      & $63.3 \pm 10.9$ & $56.7 \pm 9.7$   & $48.3 \pm 9.4$  & $48.3 \pm 12.8$   & $51.7 \pm 6.0$  \\
     Faucet-close  & $91.7 \pm 6.0$   & $90.0 \pm 5.1$      & $98.3 \pm 3.3$  & $100.0 \pm 0.0$  & $100.0 \pm 0.0$ & $93.3 \pm 4.1$    & $100.0 \pm 0.0$ \\
     Faucet-open   & $71.7 \pm 17.8$  & $95.0 \pm 4.4$      & $86.7 \pm 12.0$ & $95.0 \pm 4.4$   & $90.0 \pm 5.1$  & $90.0 \pm 10.1$   & $91.7 \pm 6.0$  \\
     Handle-press  & $91.7 \pm 6.0$   & $93.3 \pm 4.1$      & $95.0 \pm 4.4$  & $96.7 \pm 6.5$   & $96.7 \pm 4.1$  & $98.3 \pm 3.3$    & $98.3 \pm 3.3$  \\
     Handle-pull   & $61.7 \pm 15.5$  & $70.0 \pm 14.3$     & $75.0 \pm 12.1$ & $73.3 \pm 17.3$  & $70.0 \pm 20.9$ & $86.7 \pm 8.3$    & $70.0 \pm 17.5$ \\
     Push-wall     & $53.3 \pm 17.3$  & $48.3 \pm 12.8$     & $46.7 \pm 18.0$ & $45.0 \pm 12.1$  & $48.3 \pm 19.2$ & $56.7 \pm 14.0$   & $63.3 \pm 6.5$  \\
     Sweep         & $68.3 \pm 11.8$  & $78.3 \pm 9.4$      & $73.3 \pm 9.7$  & $71.7 \pm 13.8$  & $81.7 \pm 9.4$  & $93.3 \pm 4.1$    & $85.0 \pm 8.4$  \\
     Sweep-into    & $73.3 \pm 12.0$  & $73.3 \pm 12.0$     & $71.7 \pm 15.5$ & $73.3 \pm 19.4$  & $71.7 \pm 19.9$ & $76.7 \pm 9.7$    & $80.0 \pm 11.3$ \\
     Window-close  & $100.0 \pm 0.0$  & $100.0 \pm 0.0$     & $100.0 \pm 0.0$ & $100.0 \pm 0.0$  & $98.3 \pm 3.3$  & $100.0 \pm 0.0$   & $100.0 \pm 0.0$ \\
     Window-open   & $100.0 \pm 0.0$  & $88.3 \pm 9.4$      & $96.7 \pm 6.5$  & $96.7 \pm 4.1$   & $98.3 \pm 3.3$  & $98.3 \pm 3.3$    & $98.3 \pm 3.3$  \\
    \hline
    \end{tabular}
    \label{tab:world_model_ablation}
\end{table}

\clearpage
\subsection{Ablation: Contrastive Learning}\label{sec:contrastive}
In this section, we perform an ablation study on the contrastive and world modeling objectives used to train the context encoder. 
We compare two contrastive learning objectives commonly employed in OMRL: InfoNCE and FOCAL \cite{focal} (also referred to as distance metric learning). 
The FOCAL objective is defined as:
\begin{equation} \label{eq:focal}
    \mathrm{L}_{\text{FOCAL}}(\phi) = \mathbb{1} \{ i=j \} \| z^i - z^j\|^2_2 
    + \mathbb{1} \{ i \ne j \} \frac{\beta}{\| z^i - z^j\|^2_2 + \epsilon_0}.
\end{equation} 
The InfoNCE objective in our experiment is defined as:
\begin{align}
\mathcal{L}_{\text{Contrastive}}(\theta) =
- \sum_i \log
\frac{S(\mathbf{z}^i, \bar{\mathbf{z}}^i)}
{\sum_j S(\mathbf{z}^i, \bar{\mathbf{z}}^j)},
\label{eq:infonce}
\end{align}
where $\bar{\mathbf{z}}^i = \lambda \mathbf{z}^i + (1-\lambda) \bar{\mathbf{z}}^i$ is the moving average of task representations controlled by $\lambda$,
and $S(\mathbf{z}^i, \mathbf{z}^j) = \exp(-\|\mathbf{z}^i - \mathbf{z}^j\|_2^2 / \alpha)$ measures similarity.
Positive samples are obtained from the moving average of the same task representation, while negatives are drawn from other tasks. 
The moving average stabilizes training by smoothing updates. 
This objective provides a lower bound on the mutual information between tasks and task representations, $I(\mathbf{z};\mathcal{M})$ \citep{dora}. 

For a fair comparison, we convert the observation space to a latent space for all the methods with the same world model. 
\cref{tab:contrastive_id} reports few-shot in-distribution testing and \cref{tab:contrastive_ood} reports few-shot out-of-distribution testing.  
Here, \textit{TC} refers to training the context encoder with the world modeling objective (latent temporal consistency), and \textit{+} indicates the combination of objectives. 
For environments in Meta-World benchmarks (last 6 rows in \cref{tab:contrastive_id}), we observe no significant difference in the performance of different objectives. 
Training the context encoder solely with the temporal consistency is insufficient, as it fails to distinguish between different tasks.  
This limitation is particularly pronounced in environments where variation factors affect only the reward function, rather than the transition dynamics (\eg, Ant-dir, where the desired forward direction varies, or environments requiring the inference of desired speed).  
Only using contrastive learning results in good performance across most tasks, while InfoNCE outperforms FOCAL significantly in certain environments, especially on out-of-distribution testing. 
Adding the world modeling objective to the contrastive objective has an insignificant impact on in-distribution performance; however, it can improve generalization to out-of-distribution tasks for certain environments. 
We employ the same relative weighting of the contrastive objective with respect to the world modeling objective across all environments.  
Overall, combining InfoNCE with the world modeling objective produces more robust results across environments compared to combining FOCAL with the world modeling objective.

\begin{table}[ht]
    \caption{
    Ablation on contrastive learning and world modeling, few-shot in-distribution performance. 
    Average returns/success rates over 6 random seeds, $\pm$ represents $95\%$ confidence intervals. 
    \textbf{Bold} indicates the highest value with statistical significance according to the t-test with p-value $< 0.05$.
    }
    \begin{center}
    \small
    \begin{tabular}{llllll}
    \hline
     Environment    & FOCAL             & InfoNCE          & TC                & TC+FOCAL          & TC+InfoNCE      \\
    \hline
     Ant-dir       & $841.6 \pm 31.1$  & $857.7 \pm 42.3$ & $487.5 \pm 91.8$  & $859.0 \pm 20.5$  & $863.1 \pm 36.2$ \\
     Cheetah-LS    & $940.0 \pm 16.2$  & $937.7 \pm 17.7$ & $941.5 \pm 16.6$  & $933.2 \pm 15.0$  & $944.8 \pm 4.9$  \\
     Cheetah-speed & $721.1 \pm 54.2$  & $727.3 \pm 23.8$ & $395.0 \pm 36.2$  & $711.2 \pm 94.0$  & $764.1 \pm 39.2$ \\
     Finger-LS     & $971.0 \pm 10.5$  & $968.4 \pm 10.8$ & $974.6 \pm 5.5$   & $973.7 \pm 2.8$   & $968.0 \pm 5.5$  \\
     Finger-speed  & $789.7 \pm 189.6$ & $958.1 \pm 5.7$  & $706.1 \pm 150.3$ & $770.0 \pm 185.2$ & $\textbf{967}.4 \pm 2.0$  \\
     Walker-LS     & $929.8 \pm 24.4$  & $947.7 \pm 15.1$ & $904.5 \pm 24.9$  & $928.9 \pm 21.6$  & $934.6 \pm 20.1$ \\
     Walker-speed  & $622.6 \pm 65.9$  & $842.2 \pm 35.6$ & $522.1 \pm 95.4$  & $552.0 \pm 82.4$  & $835.7 \pm 37.3$ \\
    \hline
     Button-press  & $100.0 \pm 0.0$ & $100.0 \pm 0.0$ & $98.3 \pm 3.3$  & $100.0 \pm 0.0$ & $100.0 \pm 0.0$ \\
     Coffee-button & $100.0 \pm 0.0$ & $100.0 \pm 0.0$ & $100.0 \pm 0.0$ & $100.0 \pm 0.0$ & $100.0 \pm 0.0$ \\
     Dial-turn     & $93.3 \pm 4.1$  & $93.3 \pm 6.5$  & $98.3 \pm 3.3$  & $96.7 \pm 4.1$  & $98.3 \pm 3.3$  \\
     Door-open     & $100.0 \pm 0.0$ & $100.0 \pm 0.0$ & $100.0 \pm 0.0$ & $96.7 \pm 6.5$  & $100.0 \pm 0.0$ \\
     Door-unlock   & $98.3 \pm 3.3$  & $95.0 \pm 6.7$  & $98.3 \pm 3.3$  & $98.3 \pm 3.3$  & $100.0 \pm 0.0$ \\
     Handle-press  & $96.7 \pm 4.1$  & $96.7 \pm 4.1$  & $95.0 \pm 6.7$  & $95.0 \pm 6.7$  & $98.3 \pm 3.3$  \\
    \hline
    \end{tabular}
    \end{center}
    \label{tab:contrastive_id}
\end{table}

\begin{table}[ht]
    \caption{
    Ablation on contrastive learning and world modeling, few-shot out-of-distribution performance. 
    Average returns over 6 random seeds, $\pm$ represents $95\%$ confidence intervals. 
    \textbf{Bold} indicates the highest value with statistical significance according to the t-test with p-value $< 0.05$. 
    \textbf{Combining contrastive learning with latent temporal consistency enhances generalization to out-of-distribution tasks}. 
    }
    \begin{center}
    \small
    \begin{tabular}{llllll}
    \hline
     Environment    & FOCAL             & InfoNCE           & TC                & TC+FOCAL          & TC+InfoNCE      \\
    \hline
     Ant-dir       & $529.0 \pm 41.0$  & $363.2 \pm 173.9$ & $203.6 \pm 102.9$ & $540.8 \pm 111.2$ & $501.8 \pm 92.4$ \\
     Cheetah-LS    & $864.4 \pm 21.7$  & $864.7 \pm 29.9$  & $867.2 \pm 7.5$   & $876.1 \pm 24.7$  & $860.6 \pm 10.2$ \\
     Cheetah-speed & $729.0 \pm 42.4$  & $754.9 \pm 70.6$  & $486.8 \pm 111.1$ & $908.7 \pm 74.2$  & $\textbf{967.7} \pm 10.5$ \\
     Finger-LS     & $836.3 \pm 45.5$  & $809.6 \pm 53.1$  & $838.3 \pm 64.8$  & $860.0 \pm 42.7$  & $850.9 \pm 41.5$ \\
     Finger-speed  & $793.8 \pm 189.0$ & $868.5 \pm 10.3$  & $766.6 \pm 176.2$ & $755.7 \pm 185.0$ & $\textbf{978.5} \pm 6.0$  \\
     Walker-LS     & $740.2 \pm 52.9$  & $793.2 \pm 62.2$  & $757.0 \pm 65.2$  & $738.9 \pm 51.8$  & $792.3 \pm 41.3$ \\
     Walker-speed  & $619.4 \pm 111.4$ & $782.7 \pm 32.2$  & $507.7 \pm 103.4$ & $568.7 \pm 128.5$ & $\textbf{833.2} \pm 64.5$ \\
    \hline
    \end{tabular}
    \end{center}
    \label{tab:contrastive_ood}
\end{table}

\subsection{Ablation: Multi-Step Temporal Consistency} \label{sec:multi_step_tc}

In our framework, we jointly train the world model and the context encoder using a multi-step temporal consistency objective in the latent space (\cref{eq:worldmodel_objective}). In this section, we provide an ablation study on the training horizon $h$ used in this objective.

\Cref{fig:training_horizon} illustrates the impact of the training horizon $h$ on few-shot in-distribution adaptation performance. Overall, our framework achieves competitive performance across different training horizons. Increasing the training horizon initially improves performance, but performance degrades when the horizon becomes too large. We hypothesize that excessively long training horizons can make optimization more challenging due to backpropagation through time over longer sequences, potentially leading to unstable gradients and making the joint training of the world model and context encoder more difficult.
 
\begin{figure}[h]
    \centering
    \includegraphics[width=0.95\linewidth]{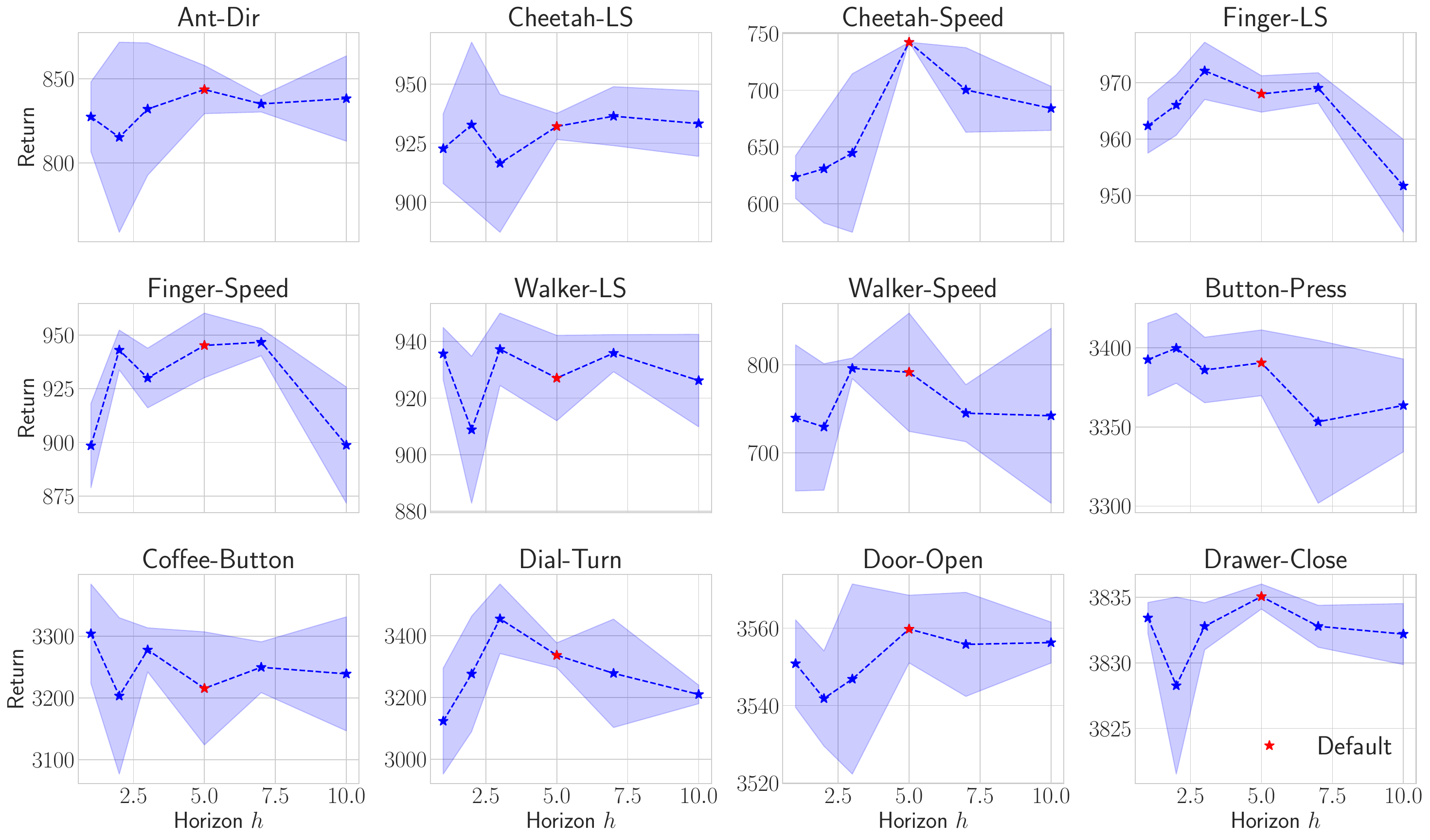}
    \caption{
        Impact of training horizon used in multi-step temporal consistency on few-shot performance.
        Results are averaged over 6 random seeds; shaded areas represent the standard deviation. 
    }
    \label{fig:training_horizon}
\end{figure}

\clearpage

\subsection{Ablation: Number of Parameters} \label{sec:scaling}
{
In this section, we investigate how increasing the number of trainable parameters affects performance across different methods. 
In our experiments, we jointly train the context encoder and world model, where the world model maps the observation space to a latent space, which increases the total number of parameters. 
\cref{fig:Scaling} illustrates the few-shot in-distribution performance of each method for different model sizes.
For the baselines, we vary the number of hidden layers in $\{2, 3\}$ and the number of hidden units in $\{256, 512, 1024\}$, resulting in six model sizes (default is two hidden layers with 256 neurons). 
To ensure architectural consistency, we apply Layer Normalization and the Mish activation function across all baseline networks. 
For our contextual world model, we set the latent dimension and the number of neurons in the dynamic head of the world model to $\{64, 128, 256, 512\}$ while setting the number of neurons in the offline RL (IQL) networks to $\{64, 128, 256, 512\}$ (6 combinations, default is 512 latent dimensions and neurons in the dynamic head, 256 neurons for the offline RL networks). 
Our contextual world model exhibits better scaling with model size: performance generally improves as the number of parameters increases. 
However, for smaller model sizes, our contextual world model underperforms the baselines in some environments.
}

\begin{figure}[ht]
    \centering
    \includegraphics[width=0.95\linewidth]{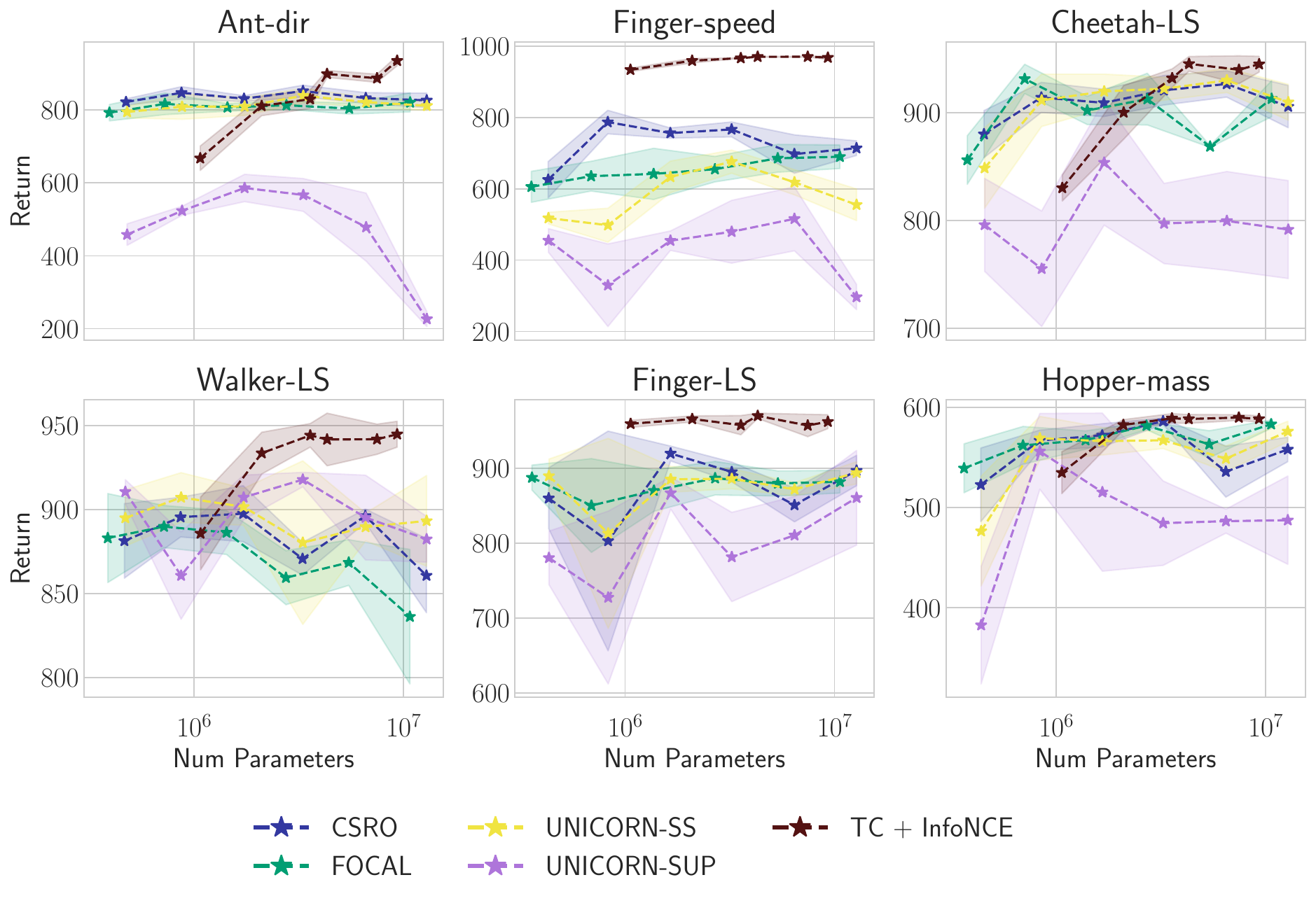}
    \caption{{
    \textbf{Jointly training the world model and context encoder scales more effectively.} Few-shot generalization to in-distribution tasks for different model sizes. 
    The shaded area represents the standard deviation over 6 random seeds. 
    }}
    \label{fig:Scaling}
\end{figure}

\clearpage
\subsection{Ablation: Offline RL} \label{sec:offline_rl}
{
As described in \cref{sec:offline_rl_IQL}, policy optimization with offline data requires regularization to avoid OOD action selection when computing the target for the value function. 
Offline RL methods address this issue in different ways, and in principle, any offline RL method can be used for policy optimization in our experiments. 
By default, we use Implicit Q-Learning (IQL) for all methods, which predicts an upper expectile of the TD targets in SARSA style without querying OOD actions. 
We also evaluate Conservative Q-Learning (CQL, \citep{cql}) and TD3+BC \citep{td3bc} for policy optimization, summarized in \cref{tab:offline_rl}. 
CQL regularizes the value function by reducing the Q-value for OOD actions, resulting in a pessimistic value function. 
TD3+BC, on the other hand, regularizes the policy to stay close to the behavior policy by adding a behavior cloning objective to the policy optimization. 
We used one set of hyperparameters (default values) for all methods without further fine-tuning. 
We find that IQL in general is more robust, performing well in diverse environments. 
CQL generally performs on par with IQL, even outperforming it significantly in two environments. 
However, the computation cost of CQL is generally higher than that of IQL. 
We sometimes observe a performance drop when training for a larger number of steps. 
TD3+BC generally has lower performance than CQL and IQL in our settings. 
We hypothesize that fine-tuning the regularization weight for each environment can increase the performance.
}

\begin{table}[ht]
    \caption{{
    \textbf{Ablation: different offline RL methods for policy optimizations.}
    Few-shot generalization to in-distribution tasks. 
    Average returns/success rates over 6 random seeds, $\pm$ represents $95\%$ confidence intervals. 
    \textbf{Bold} indicates the highest value with statistical significance according to the t-test with p-value $< 0.05$.
    }}
    \label{tab:offline_rl}
    \begin{center}
    \begin{small}
        
    \begin{tabular}{llll}
    \hline
     Environment     & CQL               & IQL (Def)        & TD3+BC            \\
    \hline
     Ant-dir         & $780.7 \pm 54.4$  & $\mathbf{863.1} \pm 36.2$   & $731.8 \pm 29.8$  \\
     Cheetah-LS      & $949.8 \pm 12.1$  & $943.0 \pm 11.5$ & $846.2 \pm 60.6$  \\
     Cheetah-speed   & $\mathbf{813.3} \pm 3.5$   & $751.2 \pm 27.9$ & $620.8 \pm 133.0$ \\
     Finger-LS       & $\mathbf{987.5} \pm 3.8$   & $957.1 \pm 23.9$ & $330.7 \pm 26.7$  \\
     Finger-speed    & $970.2 \pm 4.3$   & $962.0 \pm 9.2$  & $208.7 \pm 31.8$  \\
     Hopper-mass     & $584.2 \pm 1.1$   & $587.5 \pm 4.9$  & $159.4 \pm 26.1$  \\
     Walker-friction & $585.6 \pm 32.6$  & $563.6 \pm 33.5$ & $535.6 \pm 26.6$  \\
     Walker-LS       & $948.0 \pm 7.2$   & $937.1 \pm 16.6$ & $710.2 \pm 84.9$  \\
     Walker-speed    & $814.0 \pm 42.4$  & $827.6 \pm 34.6$ & $655.7 \pm 108.2$ \\
     \hline
     Button-press  & $95.0 \pm 9.8$  & $100.0 \pm 0.0$ & $80.0 \pm 5.1$  \\
     Coffee-button & $100.0 \pm 0.0$ & $100.0 \pm 0.0$ & $91.7 \pm 7.9$  \\
     Dial-turn     & $90.0 \pm 19.6$ & $91.7 \pm 10.6$ & $95.0 \pm 4.4$  \\
     Door-open     & $100.0 \pm 0.0$ & $100.0 \pm 0.0$ & $23.3 \pm 4.1$   \\
     Door-unlock   & $100.0 \pm 0.0$ & $100.0 \pm 0.0$ & $51.7 \pm 10.6$ \\
     Handle-press  & $60.0 \pm 19.6$ & $\mathbf{93.3} \pm 4.1$  & $76.7 \pm 6.5$  \\
    \hline
    \end{tabular}
    \end{small}
    \end{center}
\end{table}

\clearpage
\subsection{Ablation: Bounding the Task Representation} \label{sec:bounding_z}
In this section, we evaluate how bounding the task representation $z$ affects the generalization. 
By default, we utilize $\mathrm{Tanh}$ as the activation function for the context encoder, consistent with prior OMRL methods.  
\cref{tab:Ablation_task_representation} summarize the results. 
We compare unbounded representations (Identity), $\ell_2$-normalization, FSQ, and $\mathrm{Tanh}$ for bounding the latent space. 
We investigate whether discretizing the task representation with FSQ influences generalization.
The results indicate that bounding the task representation significantly enhances generalization in certain environments (\eg, Ant-dir, Cheetah-speed).
Discretizing the task representation with FSQ yields no advantages over $\mathrm{Tanh}$, and $\ell_2$-normalization is less robust across different environments. 

\begin{table}[ht]
    \caption{\textbf{Bounding the task representation enables better generalization in certain environments}, though discretizing with FSQ yields no advantages. Average returns and success rates across 6 random seeds; $\pm$ represents 95\% confidence intervals.}
    \small
    \begin{center}
    \begin{tabular}{lllll}
    \hline
     Environment   & Identity          & $\ell_2$-Norm           & FSQ                & Tanh   \\
    \hline
     Ant-dir       & $452.7 \pm 121.3$ & $838.0 \pm 72.9$  & $866.0 \pm 46.2$   & $863.1 \pm 36.2$     \\
     Cheetah-LS    & $933.8 \pm 10.5$  & $932.3 \pm 16.7$  & $938.3 \pm 5.8$    & $944.8 \pm 4.9$      \\
     Cheetah-speed & $670.3 \pm 75.1$  & $778.0 \pm 27.8$  & $730.7 \pm 30.2$   & $764.1 \pm 39.2$     \\
     Finger-LS     & $962.2 \pm 3.8$   & $956.6 \pm 11.4$  & $969.6 \pm 5.4$    & $968.0 \pm 5.5$      \\
     Finger-speed  & $715.4 \pm 133.6$ & $855.1 \pm 83.8$  & $950.4 \pm 30.4$   & $967.4 \pm 2.0$      \\
     Walker-LS     & $922.7 \pm 20.0$  & $924.7 \pm 15.7$  & $932.4 \pm 42.9$   & $934.6 \pm 20.1$     \\
     \hline
     Button-press  & $98.3 \pm 3.3$    & $100.0 \pm 0.0$   & $98.3 \pm 3.3$     & $100.0 \pm 0.0$      \\
     Coffee-button & $100.0 \pm 0.0$   & $100.0 \pm 0.0$   & $100.0 \pm 0.0$    & $100.0 \pm 0.0$      \\
     Dial-turn     & $96.7 \pm 4.1$    & $95.0 \pm 4.4$    & $78.0 \pm 33.6$    & $96.7 \pm 4.1$       \\
     Door-open     & $95.0 \pm 6.7$    & $98.3 \pm 3.3$    & $100.0 \pm 0.0$    & $100.0 \pm 0.0$      \\
     Door-unlock   & $98.3 \pm 3.3$    & $96.7 \pm 4.1$    & $98.3 \pm 3.3$     & $100.0 \pm 0.0$      \\
    \hline
    \end{tabular}
    \end{center}
    \label{tab:Ablation_task_representation}
\end{table}

\clearpage
\subsection{Computation Cost} \label{sec:computation_cost}
{
\cref{fig:time_comparison} compares the computation cost for different methods. 
All experiments are conducted using the same hardware, as described in \cref{sec:appendix_details}, to ensure a fair comparison. 
Although our contextual world model has a longer training time per step, it generally converges faster than the baselines, compensating for the higher per-step computational cost. 
During testing, our contextual world model is slightly slower because it first maps the observation to the latent space using the observation encoder, after which the policy produces actions.
}

{
UNICORN-SUP trains the context encoder solely using the prediction loss and has the lowest computational cost per training step. 
However, incorporating contrastive learning can improve task representation learning and, consequently, generalization to new tasks. 
CSRO and UNICORN-SS aim to reduce context distribution shift by minimizing a CLUB upper bound of mutual information and by adding a prediction loss, respectively. 
These approaches require additional networks, increasing their computational cost per training step. 
During test time, all baselines have the same computational cost since the policy and context encoder architectures are identical across the baseline methods.
}

\begin{figure}[ht]
    \centering
    \includegraphics[width=0.95\linewidth]{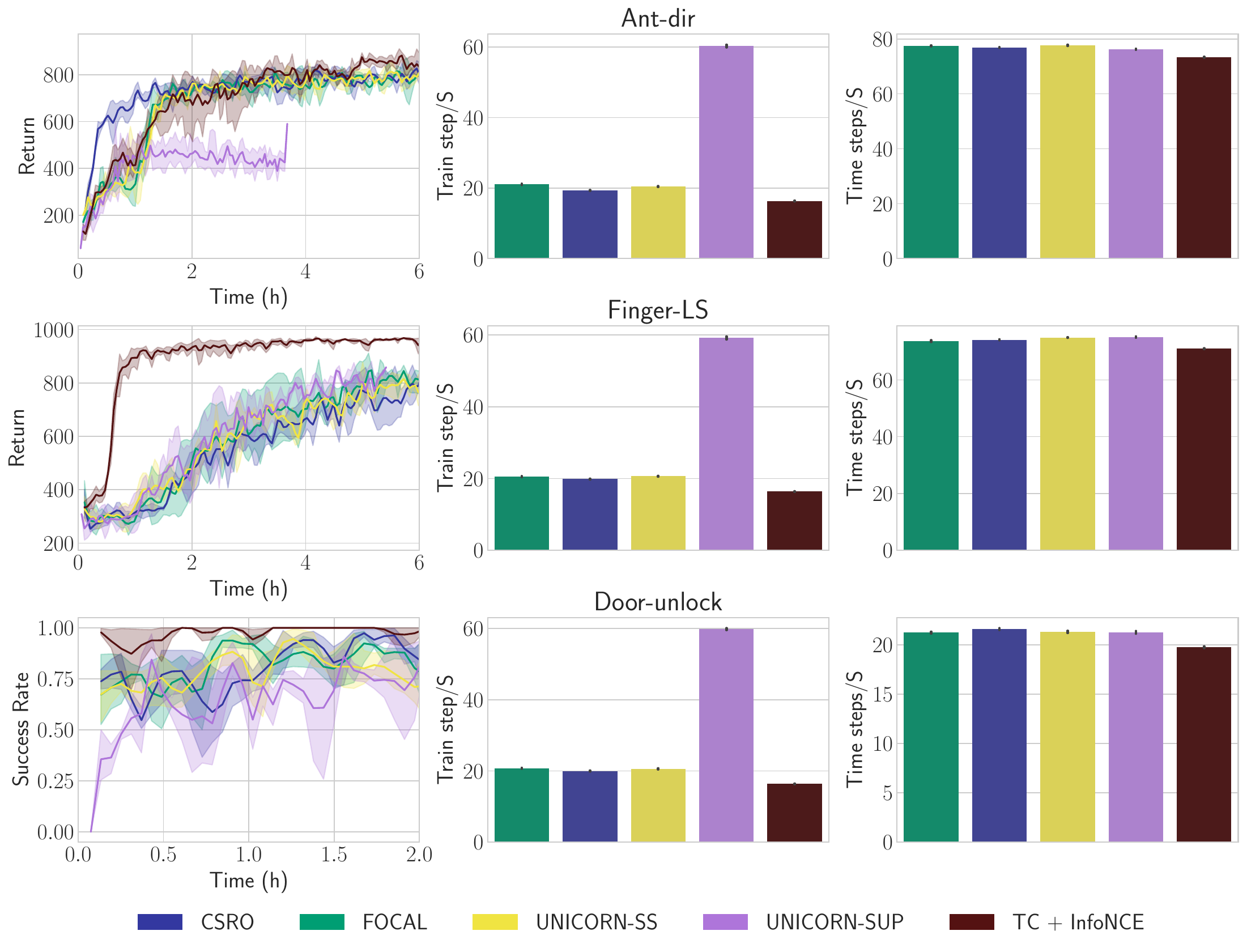}
    \caption{{
    Comparing the computation cost. \textbf{Left:} Few-shot performance on in-distribution tasks with training time.
    \textbf{Middle:} Number of training steps (full backpropagation and updating the networks) per second. 
    \textbf{Right:} Number of time steps per second during testing. 
    \textbf{Our contextual world model is computationally more expensive during training, yet it converges faster.} 
    Mapping the observation space to the latent space adds insignificant computation overhead during testing. 
    Results are averaged over 3 random seeds while considering the standard deviation. 
    }}
    \label{fig:time_comparison}
\end{figure}


\end{document}